\tikzset{>=stealth', c/.style={draw, rectangle, minimum height=2em, inner sep=0.5em}, q/.style={draw, circle, minimum height=2.4em, inner sep=0}, e/.style={<-}, ed/.style={dashed,<-}}
\definecolor{myurlcolor}{rgb}{0,0,0.7}
\definecolor{myrefcolor}{rgb}{0.8,0,0}
\definecolor{purple}{RGB}{128,0,128}
\definecolor{ultramarine}{RGB}{63, 0, 255}
\definecolor{medblue}{RGB}{0, 0, 100}
\definecolor{googleblue}{RGB}{34, 0, 204}
\definecolor{panblue}{RGB}{0,24,150}
\definecolor{carmine}{RGB}{150, 0, 24}
\definecolor{gray}{RGB}{150, 150, 150}
\let\oldperp\perp
\renewcommand{\perp}{\mathbin{\oldperp_{\text{d}}}}
\newtheorem{definition}{Definition}
\newtheorem{lemma}{Lemma}
\newtheorem{theorem}{Theorem}
\newtheorem{proposition}{Proposition}
\tikzset{>=stealth', c/.style={draw, rectangle, minimum height=2em, inner sep=0.5em}, q/.style={draw, circle, minimum height=2.4em, inner sep=0}, e/.style={<-}, ed/.style={dashed,<-}}
\newcommand{\vis}{\mathtt{Vnodes}}
\newcommand{\repr}{\mathtt{Inodes}}
\newcommand{\lat}{\mathtt{Lnodes}}
\newcommand{\edges}{\mathtt{edges}}
\newcommand{\nodes}{\mathtt{nodes}}
\newcommand{\pa}{\mathtt{pa}}
\newcommand{\ch}{\mathtt{ch}}
\newcommand{\Vpa}{\mathtt{Vpa}}
\newcommand{\Rpa}{\mathtt{Ipa}}
\newcommand{\Lpa}{\mathtt{Lpa}}
\newcommand{\Vch}{\mathtt{Vch}}
\newcommand{\Lch}{\mathtt{Lch}}
\newcommand{\exog}{\mathtt{exog}}
\newcommand{\remove}{\mathtt{remove}}
\newcommand{\spl}{\mathtt{split}}
\newcommand{\mdag}{\mathtt{LnodesToFaces}}
\newcommand{\can}{\mathtt{can}}
\newcommand{\param}{\mathtt{par}}
\newcommand{\GpDAG}{\mathcal{G}}
\newcommand{\GmDAG}{\mathfrak{G}}
\newcommand{\GthreepDAG}{\mathscr{G}}
\newcommand{\GthreemDAG}{\mathbb{G}}
\newcommand{\blk}{\color{black}}
\title{Everything that can be learned about a causal structure with latent variables by observational and interventional probing schemes}
\author{Marina Maciel Ansanelli}
\author{Elie Wolfe}
\author{Robert W. Spekkens}
\affil{Perimeter Institute for Theoretical Physics, 31 Caroline Street North, Waterloo, Ontario Canada N2L 2Y5}
\date{}
\begin{document}
\maketitle

\begin{abstract}
	What types of differences among causal structures with latent variables are impossible to distinguish by statistical data obtained by probing each visible variable?
 If the probing scheme is simply passive observation, then it is well-known that many different causal structures can realize the same joint probability distributions.
	 Even for the simplest case of two visible variables, for instance, one cannot distinguish between one variable being a causal parent of the other and the two variables sharing a latent common cause. However, it {\em is} possible to distinguish between these two causal structures if we have recourse to more powerful probing schemes, such as the possibility of intervening on one of the variables and observing the other. 
	   Herein, we address the question of which causal structures
	    remain indistinguishable even given the most informative types of probing schemes on the visible variables.
	   We find that two causal structures remain indistinguishable  if and only if they are both associated with the same mDAG structure (as defined in Ref.~\cite{evans_graphs_2016}). 
	    We also consider the question of when one causal structure {\em dominates} another in the sense that it can realize all of the joint probability distributions that can be realized by the other using a given probing scheme. (Equivalence of causal structures is the special case of mutual dominance.)  Finally, we investigate to what extent one can weaken the probing schemes implemented on the visible variables and still have the same discrimination power as a maximally informative probing scheme. 
\end{abstract}

\section{Introduction}

In the most general type of causal discovery, one seeks to uncover the causal relations among a set of variables by probing them.  It is common for there to be variables that are not probed (for whatever reason), but that act as common causes or causal mediaries between the variables that are probed.  To distinguish these two types of variables, we refer to those that are probed as {\em visible} and those that are not as {\em latent}.
The most common type of probing scheme considered in causal discovery is passive observation.  It is well known, however, that allowing interventions in one's probing scheme (of which there are different varieties) generally allows one to draw stronger conclusions about the causal structure.

Consider a drug trial for example.  If, in a population with some medical condition, one has passively observed a positive correlation between taking the drug and recovering, it is ambiguous as to how much of this correlation is due to the causal influence of the drug on the recovery and how much might be explained by a latent common cause (an unprobed factor that increases both the likelihood of taking the drug and the likelihood of recovery).  If, on the other hand, one conducts a randomized controlled 
 trial, such that every individual in the population is assigned the drug or a placebo at random, thereby breaking the causal connection to any putative common cause, then the positive correlation can only be explained by the causal influence of the drug.
In short, by availing oneself of interventional probing schemes, one can resolve ambiguities that hold for observational probing schemes.

In this work, we determine what ambiguity about the causal structure remains regardless of how the visible variables are probed, including when one makes use of an {\em informationally complete probing scheme}, which is one that provides us with all of the information about the causal structure and the causal parameters that is obtainable by interacting with the visible variables.\footnote{Edge interventions~\cite{edge_interv} are not considered among the probing schemes investigated in this work. Nonetheless, we believe that our conclusions can be easily extended to the case where one has access to edge interventions.} One example of an informationally complete probing scheme is a scheme where, for each visible variable, one observes its natural value and then one implements a do-intervention, which forces the variable to take a fixed desired value. We refer to this as the {\em Observe$\&$Do} probing scheme.  

We find that variations of the causal structure that cannot be discriminated are all and only those that are consistent with the same {\em marginalized DAG} (mDAG) structure, a notion that was introduced in Ref.~\cite{evans_graphs_2016}.  In other words, an informationally complete probing scheme determines the causal structure up to its mDAG equivalence class: two causal structures that are associated with the same mDAG are indistinguishable and two causal structures that are associated with different mDAGs can always be distinguished. Our results therefore establish that the mDAG structure is a fundamental structure for causal analysis. 

We also investigate to what extent one can weaken the probing scheme and still maintain the discriminatory power of an informationally complete probing scheme. 
 We focus on two probing schemes of this type.

Firstly, we consider the probing scheme where, for each visible variable, it is possible to perform a passive observation \emph{or} a do-intervention, but not both. This is the most commonly studied type of interventional probing scheme: in general, in a blind drug trial the experimentalist does not bother to ask the subjects what is their preference regarding taking the drug or not before assigning them the drug or the placebo. If one partitions the ensemble of samples into subensembles and applies do-interventions on different subsets of the visible variables in each subensemble (with passive observations of the complementary set of visible variables) for \emph{all} possible subsets of the set of visible variables, we say that they are applying the \emph{all-patterns Observe-or-Do} probing scheme. As we show here, distinguishability  relative to the all-patterns Observe-or-Do probing scheme is \emph{also} characterized by mDAGs: even this weakened probing scheme can distinguish between any two causal structures that correspond to different mDAGs. Therefore, even though the all-patterns Observe-or-Do probing scheme does not provide us with full information about the causal structure \emph{and} the causal parameters, it does provide us with full information about the causal structure alone. 
 (To see that it cannot distinguish between causal structures that correspond to the same mDAG, it suffices to note that
 it is no more informative than an informationally complete probing scheme.)

In Ref.~\cite{evans_graphs_2016}, it was claimed that there are pairs of causal structures associated with different mDAGs which are indistinguishable when, for each visible variable, the experimenter performs a passive observation \emph{or} a do-intervention (an example is given in Fig. 16 of Ref.~\cite{evans_graphs_2016}). Here, we will argue against this conclusion. The discrepancy between our result and the claim of Ref.~\cite{evans_graphs_2016} comes from the fact that Ref.~\cite{evans_graphs_2016} did not consider the relationship between the data obtained from applying do-interventions on different subsets of the set of visible nodes. 

Secondly, we consider a probing scheme that is even weaker than the one just described,  
 now allowing only for do-interventions that set a visible variable to \emph{one} of its allowed values, called \emph{one-value do-interventions}. Here, one can imagine an example of a test of the effect of smoking on developing lung cancer. While the experimentalist can request people to stop smoking, thus setting the value of the ``smoker'' variable to $0$, they cannot ethically request people to start smoking, so they are not allowed to make an intervention that sets the variable ``smoker'' to the value $1$. When one partitions the ensemble of samples and applies one-value do-interventions on all possible subsets of visible nodes, we say that one is implementing the \emph{all-patterns Observe-or-1Do} probing scheme. As it turns out, we prove that two causal structures are \emph{also} indistinguishable under the \emph{all-patterns Observe-or-1Do} probing scheme if and only if they correspond to the same mDAG.

Apart from solving this indistinguishability problem, in this paper we also fully characterize when one causal structure can realize all the sets of data that are realizable by another causal structure for each of  the three different types of probing schemes studied here.  In this case, we say that the first causal structure \emph{dominates} the second relative to the probing scheme in question. As it turns out, the  characterization of the dominance relations among causal structures, like the characterization of the equivalence relations,  is the same for the three types of probing schemes studied here, and it is determined by the structure of the corresponding mDAGs.

For the case of causal structures with {\em no} latent variables, the problem of what causal structures can be discriminated using passive observation alone or using do-interventions on some nodes and passive observations on others has been studied in many previous works~\cite{Verma_Markov_classes,Andersson_Markov,interventional_Markov}.  We consider what our results imply for the special case of causal structures with no latent variables, and we comment on how these implications relate to previous results.   

In summary, in this work we show that two causal structures that are associated with the same mDAG are indistinguishable even when there is access to an informationally complete probing scheme, which corresponds to a very strong experimental power. Furthermore, two causal structures that are associated with different mDAGs can still be distinguished by the all-patterms Observe-or-1Do probing scheme, which corresponds to a much weaker experimental power.

The structure of the paper is as follows: In Section~\ref{sec_preliminaries}, we will give the background about causal modelling that will be necessary for our examples and results. In Section~\ref{sec_full_SWIGs}, we will discuss the full-SWIG, a structure that will be central to frame most of our discussion. In Section~\ref{sec_probingschemes}, we will discuss in more detail the notion of different probing schemes and the information revealed by these. In Section~\ref{sec_ETT}, we will present our first main result: the mDAG structure completely captures equivalence and dominance with respect to any informationally complete probing scheme, of which the Observe\&Do probing scheme is an example. In Section~\ref{sec_do}, we present our second and third main results. The second main result says that by restricting ourselves to the all-patterns Observe-or-Do probing scheme, we do not lose any power in discriminating causal structures. The third main result strengthens this by showing that even restricting ourselves to the all-patterns Observe-or-1Do probing scheme does not cost any discriminating power. In other words, the mDAG structure also completely captures equivalence and dominance with respect to the all-patterns Observe-or-Do and the all-patterns Observe-or-1Do probing schemes. In Section~\ref{sec_examples}, we provide some concrete examples of the partial order of equivalence classes of causal structures that is implied by the dominance relation induced by any of the three probing schemes considered here (as noted above, the dominance relation is the same for all three).  Specifically, we describe the partial orders for all causal structures over three visible variables and all causal structures over four visible variables. In Section~\ref{sec_confounder_free}, we discuss the special cases of causal structures that do not have latent variables and of causal structures that do not have connections between the visible variables, relating them to previous literature.
 Finally, in Section~\ref{sec_conclusion}, we describe some open problems and future directions for research.

\section{Preliminaries}
\label{sec_preliminaries}

A causal structure is represented by a directed acyclic graph (DAG), which is given by $\GpDAG=(\nodes(\GpDAG),\edges(\GpDAG))$, where $\nodes(\GpDAG)$ is a collection of nodes and $\edges(\GpDAG)$ is a collection of directed edges between these nodes. In this work, we will use lower case letters to indicate nodes, and upper case letters to indicate sets of nodes.

For $u,v\in \nodes(\GpDAG)$, we say that $u$ is a \emph{parent} of $v$, and $v$ is a \emph{child} of $u$ in the DAG $\GpDAG$ if the edge $u\rightarrow v$ appears in $\GpDAG$. The set of all parents of a node $a\in \nodes(\GpDAG)$ in the DAG $\GpDAG$ is denoted by $\pa_\GpDAG(a)$, and the set of all its children is denoted by $\ch_\GpDAG(a)$. Nodes with no parents in the DAG are termed {\em exogenous}, while those that do have parents in the DAG are termed {\em endogenous}.

Some nodes of a causal structure are probed in experiments, while others are not. As noted earlier, we refer to this distinction with the terms \emph{visible} and \emph{latent}.  In order to include this distinction as part of the specification of a causal structure, we introduce the notion of a \emph{partitioned DAG} (pDAG):\footnote{The notion of a pDAG is the same as the notion of a ``latent structure'' defined by Pearl (Definition 2.3.2 of Ref.~\cite{causality_pearl}). Here we opted for the term ``pDAG'' since it is even-handed, putting latent and visible nodes on an equal footing. Also, this term allows for expressions such as ``latent-free pDAG''.}

\begin{definition}[Partitioned DAG]
	A \emph{partitioned DAG (pDAG)} $\GpDAG$ is a DAG together with a partition of its nodes into two subsets: the set of \emph{visible nodes}, denoted $\vis(\GpDAG)$ and the set of \emph{latent nodes}, denoted $\lat(\GpDAG)$, where $\vis(\GpDAG) \cup \lat(\GpDAG) = \nodes(\GpDAG)$ and $\vis(\GpDAG) \cap \lat(\GpDAG)= \emptyset$.
\end{definition}

In a pDAG $\GpDAG$, the set of all \emph{visible} parents of a node $a\in\nodes(\GpDAG)$ will be denoted by $\Vpa_\GpDAG(a)$, while the set of all \emph{latent} parents of $a$ will be denoted by $\Lpa_\GpDAG(a)$. Similarly, the sets of visible and latent children of $a$ are denoted by $\Vch_\GpDAG(a)$ and $\Lch_\GpDAG(a)$  respectively.  When depicting pDAGs in this article we follow the same convention as Refs.~\cite{richardson_SWIGs,biomolecular,nonlinear}, where
 visible nodes are circular with white backgrounds and latent nodes are circular with grey backgrounds. When a pDAG does not have any latent node, that is, when it is a basic DAG, it will be called a \emph{latent-free pDAG}.

The term {\em visible} will be used to refer both to nodes and to the variables associated to these nodes. Similarly for the term {\em latent}. The variable associated with a node $a$ will be denoted by $X_a$ (with a state space denoted by $\mathcal{X}_a$), while the set of variables associated with a set $S$ of nodes will be denoted by $X_S$ (with a state space denoted by $\mathcal{X}_S$). Here we will only deal with variables whose state spaces are finite and discrete; therefore, it suffices to specify the cardinality of $\mathcal{X}_a$ for each node $a$. We denote the cardinalities of the set of state spaces $\{\mathcal{X}_a\}_{a\in S}$ by $\vec c_S$. A value taken by the variable $X_S$ will be indicated by the lower case $x_S$.

 Consider a probability distribution $P(X_{\vis(\GpDAG)})$ over the set of visible variables of a pDAG $\GpDAG$.  The pDAG $\GpDAG$ is said to \emph{realize this distribution under passive observations} if there exists:
\begin{itemize}
	\item A choice of state spaces $\mathcal{X}_{\lat(\GpDAG)}$ (i.e. a choice of cardinalities $\vec{c}_{\lat(\GpDAG)}$)  for the latent variables;
	\item for each node $a\in \nodes(\GpDAG)$, a random variable $E_a$ (called an \emph{error variable})  whose state space is denoted by $\mathcal{E}_a$ and a distribution $P(E_a)$ from which its value is sampled; and
	\item  for each node $a\in \nodes(\GpDAG)$, a function $f_a:\mathcal{X}_{\pa_{\GpDAG}(a)}\times \mathcal{E}_a\rightarrow \mathcal{X}_a$
\end{itemize}
such that setting
\begin{equation}
	X_a=f_a(X_{\pa_{\GpDAG}(a)},E_a) 
	\label{eq_recursive}
\end{equation}
yields a probability distribution $Q(X_{\nodes(\GpDAG)})$,  i.e., 
\begin{align}
&Q(X_{\nodes(\GpDAG)}) \nonumber\\
&= \prod_{a\in\nodes(\GpDAG)}\; \sum_{e \in\mathcal{E}_a} \delta_{X_a,f_a(X_{\pa_{\GpDAG}(a)},E_a=e) } P(E_a=e)
\end{align}
 whose marginal over $X_{\vis(\GpDAG)}$ is $P(X_{\vis(\GpDAG)})$.

Note that the inclusion of the error variables allows for the nodes to respond probabilistically to their parents in $\GpDAG$, even though the functions $f_a$ are deterministic. These functions together with the error-variable distributions are jointly termed the {\em parameters}, and denoted:
\begin{equation} 
	\label{eq_param}
	\param =  \{ (f_a,P(E_a)) : a\in\nodes (\GpDAG)\} 
\end{equation}

A \emph{causal hypothesis} $(\GpDAG,\param)$ is a pDAG $\GpDAG$ together with a choice of parameters for $\GpDAG$. A  particular  causal hypothesis realizes one  particular  probability distribution over the visible variables, which we will denote by $P^{(\GpDAG,\param)}(X_{\vis(\GpDAG)})$.

For the purpose of brevity,  we will  say that a probability distribution is \emph{observationally realizable} by a pDAG if it is realizable under passive observations. The set of probability distributions over visible variables of cardinalities $\vec c_{\vis(\GpDAG)}$ that are observationally realizable by a pDAG $\GpDAG$ is called the \emph{marginal model} of $\GpDAG$ for cardinalities $\vec c_{\vis(\GpDAG)}$, and is denoted by $$\mathcal{M}_{\text{obs}}(\GpDAG,\vec c_{\vis(\GpDAG)}).$$

If it happens that \emph{all} of the probability distributions that are observationally realizable by a pDAG $\GpDAG$ are also observationally realizable by a different pDAG $\GpDAG'$ and vice-versa, then  it is impossible to know which one of these two pDAGs generated the statistical data by passive observations.  This idea is formalized by the definition of \emph{observational equivalence} of pDAGs, which is a special case of \emph{observational dominance}:\footnote{The notion of observational dominance of pDAGs is similar to Pearl’s notion of \emph{preference} among latent structures (see Definition 2.3.3 of Ref.~\cite{causality_pearl}), except that the preference order is opposite to the dominance order. That is, $\GpDAG$ is preferred to $\GpDAG'$ when $\GpDAG'$ observationally dominates $\GpDAG$.  Also, the preference order is defined for a fixed cardinality of the visible variables, whereas the dominance order defined here involves a universal quantifier over these cardinalities.}

\begin{definition}[Observational dominance and equivalence of pDAGs]
	\label{def_obs_equivalence}
	Let $\GpDAG$ and $\GpDAG'$ be two pDAGs such that $\vis(\GpDAG)=\vis(\GpDAG')$. We say that $\GpDAG$ \emph{observationally dominates} $\GpDAG'$ when the set of observationally realizable distributions of  $\GpDAG$ includes the set of observationally realizable distributions of  $\GpDAG'$, regardless of the assignment of cardinalities of the visible variables, i.e., when
	\begin{gather}
		\forall \vec c_{\vis(\GpDAG)} \in \mathbb{N}^{|\vis(\GpDAG)|}: \\ \mathcal{M}_{\text{obs}}(\GpDAG',\vec c_{\vis(\GpDAG')}) \subseteq   \mathcal{M}_{\text{obs}}(\GpDAG,\vec c_{\vis(\GpDAG)}) 
	\end{gather}
	The observational dominance relation is denoted by $\GpDAG\succeq\GpDAG'$.
	
	As a special case, we say that $\GpDAG$ is \emph{observationally equivalent} to $\GpDAG'$ when their sets of observationally realizable distributions are the same:
	\begin{gather}
		\forall \vec c_{\vis(\GpDAG)} \in \mathbb{N}^{|\vis(\GpDAG)|}: \\ \mathcal{M}_{\text{obs}}(\GpDAG',\vec c_{\vis(\GpDAG')}) =   \mathcal{M}_{\text{obs}}(\GpDAG,\vec c_{\vis(\GpDAG)}) 
	\end{gather}	
	The observational equivalence relation is denoted by $\GpDAG\cong\GpDAG'$. 
	
	If $\GpDAG\succeq\GpDAG'$ but $\GpDAG\not\cong\GpDAG'$,
	we say that $\GpDAG$ \emph{strictly observationally dominates} $\GpDAG'$ and denote this relation as $\GpDAG\succ\GpDAG'$.
	If $\GpDAG\not\succeq\GpDAG'$ and $\GpDAG'\not\succeq\GpDAG$, we say that $\GpDAG$ and $\GpDAG'$ are \emph{observationally incomparable}.
\end{definition}

Ref~\cite{evans_graphs_2016} presented two fundamental results under which pDAGs are known to be observationally equivalent. These results will be reproduced as lemmas, and are illustrated in Fig.~\ref{fig_example_lemmas}.

 The first of these results asserts that for any pDAG wherein one or more latent nodes are endogenous, there is another pDAG where all latent nodes are exogenous and that is observationally equivalent to the first.   The result is Lemma 3.7 in Ref.~\cite{evans_graphs_2016},  which we rephrase here: 
\begin{lemma}[Exogenize Latent Nodes]
	\label{lemma_exogenize_latents}
	Let $\GpDAG$ be a pDAG, and let ${\tt EndoLNodes}(\GpDAG)\subseteq\lat(\GpDAG)$ be the set of latent nodes of $\GpDAG$ that are endogenous (i.e., they have one or more parents in $\GpDAG$).  Construct the pDAG ${\tt Exog}(\GpDAG)$ as follows. For every $u\in {\tt EndoLNodes}(\GpDAG)$, start from $\GpDAG$ and: (i) add a directed edge from every parent of $u$ to every child of $u$, (i) delete  all directed edges leading into $u$. The pDAG ${\tt Exog}(\GpDAG)$ so constructed is observationally equivalent to $\GpDAG$, i.e., ${\tt Exog}(\GpDAG)\cong  \GpDAG$.
\end{lemma}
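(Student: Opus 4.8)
The plan is to establish the two inclusions of observational distributions that together constitute the equivalence ${\tt Exog}(\GpDAG)\cong\GpDAG$, namely ${\tt Exog}(\GpDAG)\succeq\GpDAG$ and $\GpDAG\succeq{\tt Exog}(\GpDAG)$. First I would reduce to the case of a single endogenous latent node: since observational equivalence is transitive and the construction can be carried out one node at a time (processing ${\tt EndoLNodes}(\GpDAG)$ in an order consistent with the topology of $\GpDAG$), it suffices to treat the case $|{\tt EndoLNodes}(\GpDAG)|=1$, with the single node $u$ having parent set $\pa_\GpDAG(u)$ and child set $\ch_\GpDAG(u)$. A preliminary observation is that ${\tt Exog}(\GpDAG)$ is acyclic and that $u$ is exogenous in it: every added edge goes from a $\GpDAG$-ancestor of $u$ to a $\GpDAG$-descendant of $u$, so the topological order of $\GpDAG$ still orders ${\tt Exog}(\GpDAG)$, and deleting the edges into $u$ makes $u$ a source. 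Note also that $\pa_\GpDAG(u)\cap\ch_\GpDAG(u)=\emptyset$ by acyclicity, and that every node other than the children of $u$ keeps its $\GpDAG$-parent set.

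For the direction ${\tt Exog}(\GpDAG)\succeq\GpDAG$, I would take an arbitrary causal hypothesis $(\GpDAG,\param)$ and build parameters on ${\tt Exog}(\GpDAG)$ realizing the same visible distribution by a \emph{response-function} reparametrization of $u$. Concretely, enlarge the latent state space $\mathcal{X}_u$ to the set of functions $\mathcal{X}_{\pa_\GpDAG(u)}\to\mathcal{X}_u$, keep the same error variable $E_u$ with the same distribution, and let the now-exogenous variable be the random function $X_u' = f_u(\,\cdot\,,E_u)$. For each child $c\in\ch_\GpDAG(u)$, whose parent set in ${\tt Exog}(\GpDAG)$ is $\pa_\GpDAG(c)\cup\pa_\GpDAG(u)$, define a new mechanism that first recovers the original latent value $X_u = X_u'\big(X_{\pa_\GpDAG(u)}\big)$ from the response function and the now directly-available parent values, and then applies the original $f_c$. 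All other mechanisms and errors are left unchanged. An induction on the shared topological order then shows that every node takes the identical value it took in $(\GpDAG,\param)$ (under the identification $X_u=X_u'(X_{\pa_\GpDAG(u)})$), so in particular the visible marginal is unchanged.

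For the reverse direction $\GpDAG\succeq{\tt Exog}(\GpDAG)$, I would run the complementary \emph{record-the-parents} reparametrization. Starting from an arbitrary hypothesis on ${\tt Exog}(\GpDAG)$ in which $X_u$ is sampled exogenously from $E_u$, enlarge $\mathcal{X}_u$ to $\mathcal{X}_u\times\mathcal{X}_{\pa_\GpDAG(u)}$ and, in $\GpDAG$ (where $u$ regains its parents), define the latent value to be the pair $X_u^{\mathrm{new}} = \big(X_u,\,X_{\pa_\GpDAG(u)}\big)$, which is a legitimate function of $\pa_\GpDAG(u)$ and $E_u$. Each child $c$ then reads off from this pair both the exogenous $X_u$ and the parent values $X_{\pa_\GpDAG(u)}$ that it could access directly in ${\tt Exog}(\GpDAG)$, and applies its original mechanism. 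Since the nodes of $\pa_\GpDAG(u)$ have unchanged parent sets across the two graphs and hence receive identical values, the recorded component $X_{\pa_\GpDAG(u)}$ is correct, and the same topological induction yields equality of the visible marginals. In both directions each error variable is used exactly once and retains its original distribution, so mutual independence is preserved and the constructions are valid causal hypotheses.

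The main obstacle I anticipate is not the atomic argument above but the bookkeeping needed to pass from a single node to all of ${\tt EndoLNodes}(\GpDAG)$: one must verify that exogenizing the endogenous latents iteratively (in topological order) terminates in exactly the graph ${\tt Exog}(\GpDAG)$ as defined, handling in particular edges between two latent nodes, where exogenizing an upstream latent reroutes edges through a downstream latent that is itself about to be exogenized, and that acyclicity is maintained at every step. A clean way to discharge this is to check that the added-edge and deleted-edge operations interact so that the iterated result is independent of the chosen topological order and coincides with the stated construction, after which transitivity of $\cong$ closes the argument.
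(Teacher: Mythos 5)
Your proposal is correct, but there is nothing in the paper to compare it against: the paper does not prove Lemma~\ref{lemma_exogenize_latents} at all, it simply imports it as a rephrasing of Lemma 3.7 of Ref.~\cite{evans_graphs_2016}. Your self-contained argument is the standard one (and essentially the technique used in that cited reference): one direction by promoting the latent to a response function $X_u' = f_u(\cdot,E_u)$ valued in the finite set $\mathcal{X}_u^{\mathcal{X}_{\pa_\GpDAG(u)}}$, with each child evaluating it at the parent values it now receives directly; the other direction by enlarging the latent to the pair $\bigl(X_u, X_{\pa_\GpDAG(u)}\bigr)$ so that children can recover the inputs they would have received through the added edges. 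Both reparametrizations are legitimate because the definition of observational realizability existentially quantifies over latent cardinalities while keeping visible cardinalities fixed, and your topological-order induction correctly uses that $\pa_\GpDAG(u)$ and all nodes outside $\{u\}\cup\ch_\GpDAG(u)$ retain their parent sets. One clarification on the ``obstacle'' you flag: the order-independence of iterated single-node exogenization matters only for the well-definedness of the \emph{map} ${\tt Exog}$ (which the paper needs later, e.g.\ in defining ${\tt RE{-}reduce}$), not for the equivalence claim itself---since each single-node step is an observational equivalence by your atomic argument, transitivity of $\cong$ gives ${\tt Exog}(\GpDAG)\cong\GpDAG$ for \emph{any} processing order, so this is not a gap in the proof of the stated lemma. (It is worth noting, as your worry implicitly suggests, that the construction must indeed be read iteratively, with parents and children taken in the current graph at each step; reading it as a simultaneous operation relative to the original $\GpDAG$ would produce an inequivalent graph, e.g.\ for a chain of two endogenous latents.)
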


The second result  asserts that certain redundant latent nodes can be removed from the pDAG without affecting the distributions that can be realized. It is based on Lemma 3.8 of \cite{evans_graphs_2016}: 
\begin{lemma}[Eliminate Redundant Latent Nodes]
	\label{lemma_remove_redundant_latents}
	Let $\GpDAG$ be a pDAG where all latent nodes are exogenous.
	Let ${\tt RedundLnodes}(\GpDAG)\subset\lat(\GpDAG)$ be a maximal subset of latent nodes (maximality here implies that the subset cannot be made any larger) such that  their set of children is already common-cause connected by another latent node.  
	Note that there is ambiguity in the choice, since when two latent nodes have the same set of children, one can take either to be redundant to the other. Formally, a set of latent nodes of $\GpDAG$ satisfies the definition of ${\tt RedundLnodes}(\GpDAG)$ if
	(i) For every $u \in {\tt RedundLnodes}(\GpDAG)$ there exists some distinct latent node $v \in \lat(\GpDAG)\setminus  {\tt RedundLnodes}(\GpDAG)$ such that $\ch(u) \subseteq \ch(v)$. (ii) ${\tt RedundLnodes}(\GpDAG)$ is not a strict subset of any other set of latent nodes that obeys (i).  Let ${\tt RemoveRedund}(\GpDAG)$ be the pDAG constructed by removing from $\GpDAG$ every node in ${\tt RedundLnodes}(\GpDAG)$ (and the corresponding outgoing arrows from these). The pDAG ~${\tt RemoveRedund}(\GpDAG)$ so constructed is observationally equivalent to $\GpDAG$, i.e.,  
	${\tt RemoveRedund}(\GpDAG) \cong \GpDAG$. 
\end{lemma}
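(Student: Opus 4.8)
The plan is to prove the two observational-dominance inclusions that together yield $\cong$, for an arbitrary fixed assignment $\vec c_{\vis(\GpDAG)}$ of visible cardinalities (written $\vec c$ below). Write $\GpDAG' \defeq {\tt RemoveRedund}(\GpDAG)$ and recall that $\vis(\GpDAG')=\vis(\GpDAG)$, so the two marginal models live in the same ambient space. Since observational equivalence is symmetric, it suffices to establish $\mathcal{M}_{\text{obs}}(\GpDAG',\vec c) \subseteq \mathcal{M}_{\text{obs}}(\GpDAG,\vec c)$ together with the reverse inclusion. A preliminary observation simplifies everything: because all latent nodes of $\GpDAG$ are exogenous, no latent node can be the child of another latent node, so every child of a latent node — in particular every child of a redundant node — is visible, and all visible nodes are retained in $\GpDAG'$.

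For the inclusion $\mathcal{M}_{\text{obs}}(\GpDAG',\vec c) \subseteq \mathcal{M}_{\text{obs}}(\GpDAG,\vec c)$, I would take any causal hypothesis $(\GpDAG',\param')$ and lift it to $\GpDAG$ by reinstating every node $u\in{\tt RedundLnodes}(\GpDAG)$ together with its outgoing edges, assigning $X_u$ an arbitrary (say constant) distribution, and defining each child's function in $\GpDAG$ to be its $\GpDAG'$-function precomposed with the projection that discards the coordinate coming from $u$. The reinstated nodes then have no effect on any variable, so the induced joint distribution on $\vis(\GpDAG)$ is unchanged and $P^{(\GpDAG',\param')}\in\mathcal{M}_{\text{obs}}(\GpDAG,\vec c)$.

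The substantive direction is $\mathcal{M}_{\text{obs}}(\GpDAG,\vec c) \subseteq \mathcal{M}_{\text{obs}}(\GpDAG',\vec c)$. Here I would invoke property (i) to fix, for each $u\in{\tt RedundLnodes}(\GpDAG)$, a covering node $\kappa(u)\in\lat(\GpDAG)\setminus{\tt RedundLnodes}(\GpDAG)$ with $\ch(u)\subseteq\ch(\kappa(u))$; crucially $\kappa(u)$ is non-redundant and hence survives in $\GpDAG'$. Given a realization of $\GpDAG$ with parameters $\param$, I would absorb each redundant latent into its cover: for each surviving latent $v$, enlarge its state space to $\mathcal{X}_v\times\prod_{u:\,\kappa(u)=v}\mathcal{X}_u$ and endow it with the product distribution $P(X_v)\prod_{u:\,\kappa(u)=v}P(X_u)$, which is legitimate because exogenous latent variables are mutually independent in the functional model and because realizability quantifies over the choice of latent cardinalities. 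Each visible node $a$ then keeps a function that first reconstructs its original $\GpDAG$-parent values — visible parents and surviving latent parents directly, and each redundant latent parent $u$ by reading the $X_u$-coordinate of the enlarged variable at $\kappa(u)$ — and afterwards applies the original $f_a$. The structural fact that makes this well-defined is that $\ch(u)\subseteq\ch(\kappa(u))$ forces $\kappa(u)$ to be a $\GpDAG'$-parent of every child $a$ of $u$, so the needed coordinate is always available. This relabeling leaves the functional dependence of every visible variable on the shared noise untouched, hence it reproduces $P^{(\GpDAG,\param)}$ exactly.

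The main thing to get right — and essentially the only real subtlety — is the bookkeeping around the covering map: one must check that $\kappa$ can be chosen so that each cover survives (this is exactly what clause (i) buys, since it demands $v\notin{\tt RedundLnodes}(\GpDAG)$), that distinct redundant nodes absorbed into the same $v$ contribute independent coordinates, and that the reconstruction of $\GpDAG$-parent values inside each redefined child function is consistent across all children sharing a cover. Note that maximality, clause (ii), is not actually needed for equivalence; it serves only to pin down a canonical, inclusion-maximal choice of which nodes to delete. Once the covering construction is in place, both inclusions reduce to the routine verification that the induced visible marginals coincide.
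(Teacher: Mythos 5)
Your proposal is correct. The paper does not prove this lemma in-house: it imports it as Lemma 3.8 of Ref.~\cite{evans_graphs_2016}, and your argument---the trivial lift for one inclusion, and for the substantive inclusion the absorption of each redundant latent $u$ into its surviving cover $\kappa(u)$ by enlarging $\kappa(u)$'s state space to carry an independent $\mathcal{X}_u$ coordinate that every child of $u$ can read (legitimate because latent cardinalities are unconstrained and $\ch(u)\subseteq\ch(\kappa(u))$ guarantees access)---is exactly the standard proof of that cited result, including your correct observation that maximality in clause (ii) plays no role in the equivalence itself.
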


\begin{figure*}[t!]
	\begin{center}
		\includegraphics[width=0.8\textwidth]{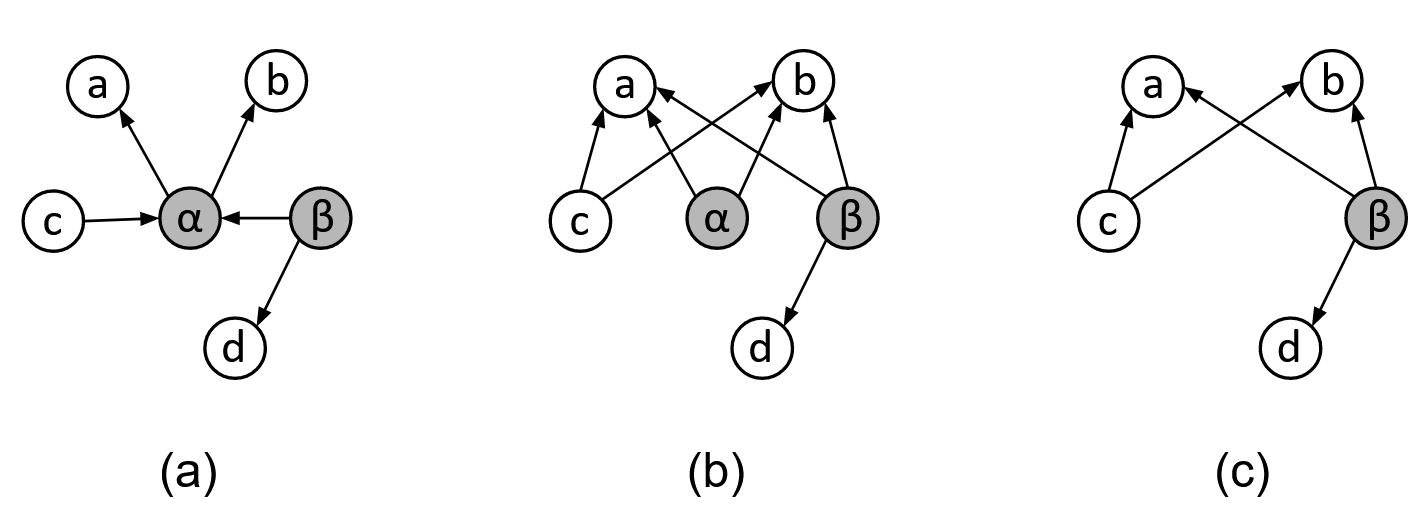}
	\end{center}
	\caption[.]{(a) A pDAG, with visible nodes in white and latent nodes in gray. (b) The pDAG obtained from (a) by exogenizing the latent node $\alpha$. (c) The pDAG obtained from (b) by removing the latent node $\alpha$, which is redundant to $\beta$ since the children of $\alpha$ are a subset of the children of $\beta$. By Lemmas \ref{lemma_exogenize_latents} and \ref{lemma_remove_redundant_latents}, these three pDAGs are observationally equivalent.   Because no further exogenization or removal of redundant latents is possible,  the pDAG in (c) is  the {\tt RE}-reduced pDAG for  the pDAG in (a). 
	}
	\label{fig_example_lemmas} 
\end{figure*}

As we will see in Section~\ref{sec_ETT}, two pDAGs that are related by application of the operations described in these lemmas (exogenizing latent nodes and removing redundant latent nodes) are not only indistinguishable relative to passive observations of the visible variables, but also  relative to more informative probing schemes, including those that are  informationally complete in a sense that we will define further on.

For now, let us simply note that if one is only interested in passive observations, then it is sufficient to consider candidate causal explanations where all of the latent nodes are exogenized and non-redundant. 
We define the map {\tt RE-reduce} via:
\begin{align}\label{REreductionmap}
{\tt RE{-}reduce} := {\tt RemoveRedund}\circ {\tt Exog}
\end{align}
We will refer to the image of a  pDAG $\GpDAG$ under the map {\tt RE{-}reduce}, i.e., the pDAG ${\tt RE{-}reduce}(\GpDAG)$, as the \emph{{\tt RE}-reduction} of $\GpDAG$. 
 Lemmas~\ref{lemma_exogenize_latents} and~\ref{lemma_remove_redundant_latents} imply that, if one only cares about passive observations, then it is enough to consider only {\tt RE}-reduced pDAGs to determine the scope of possibilities for the sets of realizable distributions. 
 
 \begin{lemma}\label{lemmaREreduction}
 Let $\GpDAG$ and $\GpDAG'$ be two pDAGs. If they have the same RE-reduction, i.e., ${\tt RE{-}reduce}(\GpDAG)={\tt RE{-}reduce}(\GpDAG')$, then they are observationally equivalent. 
 \end{lemma}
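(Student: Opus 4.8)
The plan is to derive the lemma as a straightforward composition of Lemmas~\ref{lemma_exogenize_latents} and~\ref{lemma_remove_redundant_latents}, using only that observational equivalence is transitive. First I would record that $\cong$ is genuinely an equivalence relation: by Definition~\ref{def_obs_equivalence} it asserts equality of the sets $\mathcal{M}_{\text{obs}}(\cdot,\vec c)$ for every cardinality assignment $\vec c$, and equality of sets is reflexive, symmetric and transitive. Transitivity is the only property I actually need. I would also note at the outset that both ${\tt Exog}$ and ${\tt RemoveRedund}$ act only on latent nodes and their incident edges, so neither alters the set of visible nodes; hence $\vis(\GpDAG)=\vis({\tt RE{-}reduce}(\GpDAG))$, and the hypothesis ${\tt RE{-}reduce}(\GpDAG)={\tt RE{-}reduce}(\GpDAG')$ forces $\vis(\GpDAG)=\vis(\GpDAG')$, which is precisely what makes the relation $\GpDAG\cong\GpDAG'$ well-defined in the first place.

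The core step is to show that every pDAG is observationally equivalent to its own RE-reduction, i.e.\ ${\tt RE{-}reduce}(\GpDAG)\cong\GpDAG$. Applying Lemma~\ref{lemma_exogenize_latents} gives ${\tt Exog}(\GpDAG)\cong\GpDAG$. By construction ${\tt Exog}(\GpDAG)$ has all of its latent nodes exogenous, so Lemma~\ref{lemma_remove_redundant_latents} is applicable to it and yields ${\tt RemoveRedund}({\tt Exog}(\GpDAG))\cong{\tt Exog}(\GpDAG)$. Chaining these two equivalences through transitivity and unfolding the definition~\eqref{REreductionmap} of ${\tt RE{-}reduce}$ gives ${\tt RE{-}reduce}(\GpDAG)={\tt RemoveRedund}({\tt Exog}(\GpDAG))\cong\GpDAG$, and by the identical argument ${\tt RE{-}reduce}(\GpDAG')\cong\GpDAG'$.

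I would then combine these with the hypothesis. Writing the chain
\[
\GpDAG\;\cong\;{\tt RE{-}reduce}(\GpDAG)\;=\;{\tt RE{-}reduce}(\GpDAG')\;\cong\;\GpDAG',
\]
in which the middle equality is the assumption and the two outer relations are the equivalences just established, one further application of transitivity yields $\GpDAG\cong\GpDAG'$, which is the claim.

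The argument is essentially bookkeeping, so there is no deep obstacle; the one point that genuinely must be checked rather than assumed is that the output of ${\tt Exog}$ really does have all latent nodes exogenous, since this is the hypothesis under which Lemma~\ref{lemma_remove_redundant_latents} is stated. I would verify this from the construction: step~(ii) deletes every edge into each endogenous latent node, while step~(i) only adds edges whose heads are children of such nodes, and an originally exogenous latent node, having no parents, is the child of nothing and so acquires no new incoming edge. The only place where care is needed is ensuring that the edge-additions performed while exogenizing one latent node do not re-create parents for another latent node along a latent-to-latent chain; this is handled by carrying out the exogenizations in an appropriate (topological) order, after which the proof is immediate.
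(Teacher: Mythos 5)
Your proof is correct and follows exactly the route the paper intends: the paper states this lemma without an explicit proof, justifying it by the remark that Lemmas~\ref{lemma_exogenize_latents} and~\ref{lemma_remove_redundant_latents} imply every pDAG is observationally equivalent to its own {\tt RE}-reduction, after which transitivity of $\cong$ gives the claim via $\GpDAG\cong{\tt RE{-}reduce}(\GpDAG)={\tt RE{-}reduce}(\GpDAG')\cong\GpDAG'$. Your additional checks (that the operations preserve visible nodes, and that ${\tt Exog}(\GpDAG)$ genuinely has all latent nodes exogenous so that Lemma~\ref{lemma_remove_redundant_latents} applies) are sound and merely make explicit what the paper leaves implicit.
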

 The opposite implication does {\em not} hold. For instance, the pDAG wherein $a$ influences $b$ and the pDAG wherein $a$ and $b$ are confounded by a latent common cause are observationally equivalent but they do not have the same RE-reduction.

Inspired by lemma~\ref{lemmaREreduction}, Ref.~\cite{evans_graphs_2016}  introduced the notion of a \emph{marginalized DAG} (mDAG). (As we will see, the possible mDAGs are associated one-to-one with the possible {\tt RE}-reduced pDAGs.)
The notion of an mDAG makes use of the concept of a \emph{simplicial complex}: 
\begin{definition}[Simplicial complex]
	A simplicial complex over a finite set $V$ is a set $\mathcal{B}$ of subsets of $V$ such that
	\begin{itemize}
		\item For every  element of $V$, the singleton set containing that element is in $\cal B$, i.e., $\{v\}\in \mathcal{B}$ for all $v\in V$;
		\item If $A\subseteq B\subseteq V$ and $B\in \mathcal{B}$, then $A\in \mathcal{B}$.
	\end{itemize}
	The elements of $\mathcal{B}$ are called faces. The inclusion-maximal elements of a simplicial complex (the faces that are maximal in the order over faces induced by subset inclusion) are called facets.
\end{definition}

An mDAG is a pair $\GmDAG=(\cal D,B)$, where $\cal D$ is a DAG and $\cal B$ is a simplicial complex over the set of nodes of $\cal D$. Here, we will use the font $\GmDAG$ to denote mDAGs, while the font $\GpDAG$ continues being used to denote DAGs and pDAGs.

With this notion in hand, we now explain how to map pDAGs to mDAGs: 
\begin{definition}[The map $\mdag$ taking pDAGs to mDAGs]
	Let $\GpDAG$ be a  pDAG, and let $\tilde{\GpDAG}$ be the {\tt RE}-reduction of $\GpDAG$, i.e., 
	$\tilde{\GpDAG} = {\tt RE-reduce}(\GpDAG)$.
	If $\GmDAG= \mdag({\GpDAG})$, then $\GmDAG$ is the mDAG  $\mathcal{(D,B)}$ where 
	\begin{itemize}
		\item $\mathcal{D}$ is a DAG such that $\nodes(\mathcal{D})=\vis(\GpDAG)$ and whose edges correspond to the edges between the visible nodes in $\tilde{\GpDAG}$. $\mathcal{D}$ is called the \emph{directed structure} of the mDAG.
		\item $\mathcal{B}$ is a simplicial complex over $\vis(\GpDAG)$.        The facets of $\mathcal{B}$ are the maximal subsets of $\vis(\GpDAG)$ that 
		are children of the same latent node in $\tilde{\GpDAG} $, i.e., for each $u\in \lat(\tilde{\GpDAG} )$, the set $\ch_{\tilde{\GpDAG} }(u)$ 
		is a facet of $\mathcal{B}$. 
	\end{itemize}

The nodes of $\cal D$ will also be referred to as the nodes of $\GmDAG$. The edges of $\cal D$ will also be referred to as the directed edges of $\GmDAG$, and denoted ${\tt DirectedEdges}(\GmDAG)$. The faces of the simplicial complex $\cal B$ will also be denoted by ${\tt Faces}(\GmDAG)$  
\label{defn:LNodesToFaces}
\end{definition}

One can conceptualize the mDAG as a hypergraph with two types of edges, namely, the directed edges and a set of undirected hyperedges, where the latter represent the facets of the simplicial complex. In all of the depictions of mDAGs that we present here, the facets of the simplicial complex will be depicted by red loops. See Fig.~\ref{fig_triangle} for an example.

It is convenient to define a map that goes back from an mDAG to the associated {\tt RE}-reduced pDAG:

\begin{definition}[Canonical pDAG associated with an mDAG]
	Let $\GmDAG=(\mathcal{D},\mathcal{B})$ be an mDAG. The map $\can$ is given by the following procedure: starting from $\cal D$, add one latent node $l$ for each \emph{facet} $A\in \cal B$ and add edges from $l$ such that $\ch(l)=A$. The final pDAG, $\can(\GmDAG)$, is a {\tt RE}-reduced pDAG. It is called the \emph{canonical pDAG} associated with $\GmDAG$.
	\label{def_canonical}
\end{definition}

When the simplicial complex of the mDAG has at least one facet that includes nodes $a$ and $b$, then we say that there is a \emph{confounder} between $a$ and $b$ in the mDAG. 
When the simplicial complex of $\GmDAG$ is trivial in the sense of all its facets being singleton sets,
then $\GmDAG$ is said to be \emph{confounder-free}. 
 Clearly, $\GmDAG$ is confounder-free if and only if $\can(\GmDAG)$ is latent-free. 
When the directed structure of $\GmDAG$ is trivial in the sense of containing no edges, then $\GmDAG$ is said to be \emph{directed-edge-free}.

Another relation that will be useful later on is that of \emph{structural dominance of mDAGs}:
\begin{definition}[Structural Dominance relation between mDAGs]
	\label{def_structural_dominance}
	Let $\GmDAG$ and $\GmDAG'$ be two mDAGs with the same sets of nodes. $\GmDAG$ is said to \emph{structurally dominate}  $\GmDAG'$ if the following pair of conditions hold: (i) the directed structure of $\GmDAG'$ can be obtained from the directed structure of $\GmDAG$ by dropping edges, ${\tt DirectedEdges}(\GmDAG') \subseteq {\tt DirectedEdges}(\GmDAG)$, and  (ii) the simplicial complex of $\GmDAG'$ can be obtained from the simplicial complex of $\GmDAG$ by dropping faces, ${\tt Faces}(\GmDAG') \subseteq {\tt Faces}(\GmDAG)$. 
\end{definition}

Note that the Definition~\ref{def_structural_dominance} of structural dominance requires a subset inclusion relation for the {\em faces}, not the {\em facets} of the simplicial complexes. 
For example, in Fig.~\ref{fig_triangle}, the set of facets 
of the simplicial complex of the mDAG \ref{fig_triangle}(b) 
is not a subset of the set of facets of the simplicial complex  of \ref{fig_triangle}(a), but the set of {\em faces} do stand in a relation of subset inclusion to one another, so that mDAG \ref{fig_triangle}(a) structurally dominates mDAG \ref{fig_triangle}(b). 

\begin{figure}[htbp]
	\centering
	\includegraphics[width=0.47\textwidth]{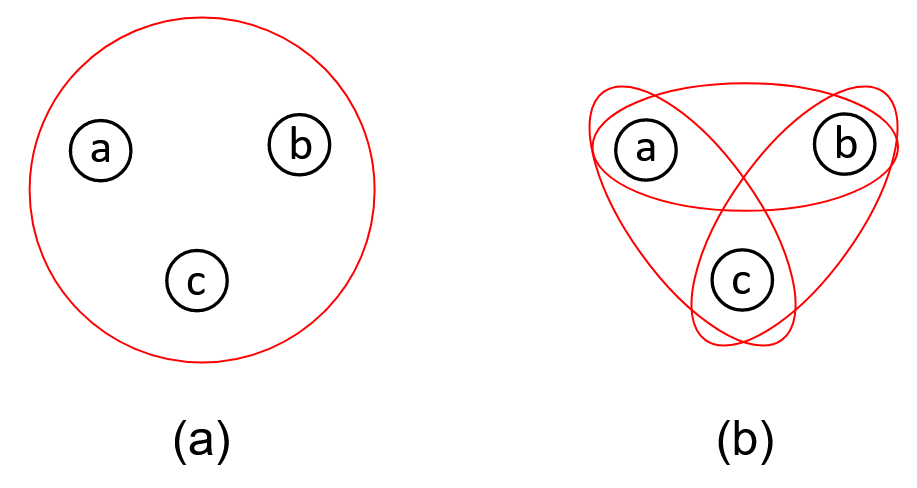}
	\caption{(a) The mDAG $(\mathcal{D},\mathcal{B})$ where the directed structure $\mathcal{D}$ is trivial and the simplicial complex is $\mathcal{B}=\{\{0\},\{1\},\{2\},\{0,1\},\{0,2\},\{1,2\},\{0,1,2\}\}$. (b)  The mDAG $(\mathcal{D}',\mathcal{B}')$  where the directed structure $\mathcal{D}'$ is again trivial and the simplicial complex is $\mathcal{B'}=\{\{0\},\{1\},\{2\},\{0,1\},\{0,2\},\{1,2\}\}$. Because $\mathcal{D}'\subset \mathcal{D}$ and $\mathcal{B}'\subset \mathcal{B}$, it follows from Definition~\ref{def_structural_dominance} that (a) structurally dominates (b). The facets (inclusion-maximal elements of $\mathcal{B}$ and $\mathcal{B'}$) are indicated by the red loops.} 
	\label{fig_triangle}
\end{figure}

 The following is a well-known result, following, for instance, from Proposition 3.3(b) in Ref.~\cite{evans_graphs_2016} or Theorem 26.1 in Ref.~\cite{henson_theory-independent_2014}): 
 \begin{lemma}\label{lemma_structural_obs_dominance}
 Let $\GmDAG$ and $\GmDAG'$ be two mDAGs. If $\GmDAG$ structurally dominates $\GmDAG'$ then $\GmDAG$ observationally dominates $\GmDAG'$.
 \end{lemma}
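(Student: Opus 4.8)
The plan is to prove the statement at the level of the canonical pDAGs. Since the observationally realizable distributions of an mDAG are, by definition, those of its canonical pDAG $\can(\GmDAG)$ (Definition~\ref{def_canonical}), it suffices to show that every distribution observationally realizable by $\can(\GmDAG')$ is also observationally realizable by $\can(\GmDAG)$, for every fixed assignment of cardinalities $\vec c_{\vis(\GpDAG)}$ of the visible variables. I would establish this constructively: take an arbitrary parameterization of $\can(\GmDAG')$ realizing some distribution $P$, and build an explicit parameterization of $\can(\GmDAG)$ that reproduces the same joint distribution over the visible variables. Because the argument will be uniform in $\vec c_{\vis(\GpDAG)}$, observational dominance follows immediately.

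First I would set up the correspondence between the two canonical pDAGs. By Definition~\ref{def_canonical}, the latent nodes of $\can(\GmDAG)$ and of $\can(\GmDAG')$ are in bijection with the facets of $\mathcal{B}$ and of $\mathcal{B}'$, respectively, each latent node having the corresponding facet as its exact set of children. The structural dominance hypothesis (Definition~\ref{def_structural_dominance}) gives ${\tt Faces}(\GmDAG') \subseteq {\tt Faces}(\GmDAG)$, so every facet $F'$ of $\mathcal{B}'$ is a \emph{face} of $\mathcal{B}$ and hence is contained in some facet of $\mathcal{B}$; fix, once and for all, a choice $\sigma(F')$ of such a facet with $F' \subseteq \sigma(F')$. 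Likewise, ${\tt DirectedEdges}(\GmDAG') \subseteq {\tt DirectedEdges}(\GmDAG)$ yields $\pa_{\mathcal{D}'}(v) \subseteq \pa_{\mathcal{D}}(v)$ for every visible node $v$. Denote by $l_F$ the latent node of $\can(\GmDAG)$ associated with facet $F$ of $\mathcal{B}$, and by $l'_{F'}$ the latent node of $\can(\GmDAG')$ associated with facet $F'$ of $\mathcal{B}'$.

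Next I would build the parameters. For each facet $F$ of $\mathcal{B}$, I would make the latent variable $X_{l_F}$ of $\can(\GmDAG)$ a tuple carrying one independent copy of $X_{l'_{F'}}$ for every $F'$ with $\sigma(F')=F$, with the product of the corresponding error-variable distributions of $\can(\GmDAG')$; independence across these components is preserved because the latents of $\can(\GmDAG')$ are exogenous and mutually independent. For each visible node $v$, I would define its new function by having it ignore the surplus visible parents in $\pa_{\mathcal{D}}(v)\setminus \pa_{\mathcal{D}'}(v)$, and read off from each tuple-valued latent parent $l_F$ (which $v$ possesses precisely when $v\in F$) the component corresponding to any $F'$ with $v\in F'$ and $\sigma(F')=F$. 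The crucial point is that whenever $l'_{F'}$ was a latent parent of $v$ in $\can(\GmDAG')$, i.e.\ $v\in F'$, we have $v\in F'\subseteq\sigma(F')$, so $l_{\sigma(F')}$ is genuinely a latent parent of $v$ in $\can(\GmDAG)$ and its tuple contains the required value. Feeding these reconstructed inputs into the original function $f_v$, with the error variable $E_v$ copied over unchanged, reproduces $X_v$ exactly, so the visible marginal equals $P$.

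The main obstacle is the bookkeeping around the simplicial complex: structural dominance compares \emph{faces} rather than \emph{facets}, so a single facet $F$ of $\GmDAG$ may have to simulate several latent nodes of $\GmDAG'$ simultaneously, and one must check that every visible node still has access to exactly the latent information it had before. The structural fact that resolves this is that a face of $\mathcal{B}$ is, by definition of a simplicial complex, a subset of some facet of $\mathcal{B}$; this guarantees the existence of the assignment $\sigma$ and the containment $v\in F'\subseteq\sigma(F')$ that makes the reconstruction well-defined. Everything else is routine: dropping surplus directed edges is harmless because functions are free to be constant in any subset of their arguments, and the acyclicity and validity of the construction are inherited from $\can(\GmDAG)$ itself, with the argument holding verbatim for every cardinality assignment.
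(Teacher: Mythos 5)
Your proof is correct, but it takes a genuinely different route from the paper for a simple reason: the paper does not prove this lemma at all. It presents it as a well-known fact, deferring to Proposition 3.3(b) of Ref.~\cite{evans_graphs_2016} and Theorem 26.1 of Ref.~\cite{henson_theory-independent_2014}. Your argument is a self-contained substitute for that citation, and it is essentially the simulation construction that underlies those references: given any parameterization of $\can(\GmDAG')$, you build one of $\can(\GmDAG)$ in which each latent node $l_F$ broadcasts a tuple of independent copies of the latent variables $X_{l'_{F'}}$ with $\sigma(F')=F$, and each visible node discards surplus visible parents and extracts the components it needs, reusing its original function $f_v$ and error variable $E_v$; a coupling/induction along a topological order of $\mathcal{D}$ (which is also a topological order of $\mathcal{D}'$, since the edge sets are nested) then gives the same visible joint distribution for every cardinality assignment. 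The two points you make explicit are precisely what makes the lemma slightly non-trivial: (i) structural dominance is an inclusion of \emph{faces}, not facets (a subtlety the paper itself flags after Definition~\ref{def_structural_dominance}), so a facet $F'$ of $\GmDAG'$ must be routed through a chosen facet $\sigma(F')\supseteq F'$ of $\GmDAG$, whose existence is exactly the simplicial-complex axiom; and (ii) the containment $v\in F'\subseteq \sigma(F')$ is what guarantees the receiving visible node really has $l_{\sigma(F')}$ as a parent in $\can(\GmDAG)$. One implicit premise you state upfront, correctly: since the paper only defines observational dominance for pDAGs, ``realizable by an mDAG'' must be read as realizable by $\can(\GmDAG)$ (equivalently by any pDAG mapping to that mDAG, via Lemmas~\ref{lemma_exogenize_latents}, \ref{lemma_remove_redundant_latents} and \ref{lemmaREreduction}). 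In short, the paper's citation buys brevity; your construction buys self-containedness and makes visible exactly where each hypothesis (edge inclusion, face inclusion) is used.
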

In Sec.~\ref{sec_ETT}, it will be shown that the opposite implication does {\em not} hold. See, e.g., Figs.~\ref{2node_structural} and \ref{2node_observational}. Furthermore, it will be shown that structural dominance \emph{also} implies dominance relative to informationally complete probing schemes.

To end this section, we will introduce an object that will be useful later on. Imagine a setup where  the values of  some of the variables are not sampled from a distribution that arises from natural causal mechanisms, but  are instead set
 by an experimentalist.  These variables can be conceptualized as {\em inputs}, whereas conventional variables (visible or latent) are outputs.  One can think, for example, of experiments involving interventions, or simply of experiments where some variables are settings that we choose, such as in an experimental test of Bell inequalities~\cite{Clauser_freechoice,Bell_Freevariables}. To describe the causal mechanism of such an experimental scenario, we need another structure, that allows for a third type of node. This structure will be called a \emph{3-pDAG}:
\begin{definition}[3-Partitioned DAG]
	A \emph{3-partitioned DAG (3-pDAG)} $\GthreepDAG$ is a DAG together with a partition of its nodes into \emph{three} subsets: the set of \emph{visible} nodes, denoted $\vis(\GthreepDAG)\subseteq \nodes(\GthreepDAG)$, the set of \emph{latent} nodes, denoted $\lat(\GthreepDAG)\subseteq \nodes(\GthreepDAG)$, and the set of \emph{input} nodes, denoted   $\repr(\GthreepDAG)\subseteq \nodes(\GthreepDAG)$.  Input nodes are exogenous nodes.   
	\label{defn:threepDAG}
\end{definition}
The visible and latent nodes have the same meaning as in pDAGs. The input nodes will be associated with \emph{input variables}, the variables that are 
set by an experimenter rather than sampled from a distribution. As such, the 3-pDAG defines a conditional probability distribution where the variables associated to input nodes are on the right of the conditional. 
   In a 3-pDAG $\GthreepDAG$, the subset of parents of a node $a$ that are input nodes will be denoted $\Rpa_\GthreepDAG(a)$.   
In the depictions of 3-pDAGs in this article, visible and latent nodes will continue to be represented by circles with white and grey backgrounds respectively, while input nodes will be represented as squares.

Similar to the notion of  observational  realizability of a probability distribution for pDAGs, we can define the \emph{  observational  realizability of a conditional distribution by a 3-pDAG}.  Here, the input variables  always appear on the right of the conditional.   A conditional probability distribution $P(X_{\vis(\GthreepDAG)}|X_{\repr(\GthreepDAG)})$ is said to be \emph{observationally\footnote{The use of the term ``observational'' here indicates that the data of interest, i.e., the conditional distributions $P(\vis(\GthreepDAG)|\repr(\GthreepDAG))$, are obtained by a probing scheme where the \emph{visible variables} of the 3-pDAG are passively observed.} realizable} by a 3-pDAG $\GthreepDAG$ if there exist parameters 
\begin{equation} 
	\label{eq_param3pDAG}
	\param =  \{ (f_a,P(E_a)) : a\in\nodes (\GthreepDAG)\setminus \repr(\GthreepDAG)\} 
\end{equation}
such that, after setting Eq.~\eqref{eq_recursive} for all nodes $ a\in\vis (\GthreepDAG) \cup  \lat(\GthreepDAG)$,
 one obtains a conditional probability distribution $Q(X_{\vis (\GthreepDAG) \cup  \lat(\GthreepDAG) }| X_{\repr(\GthreepDAG)})$, i.e., 
\begin{align}
&Q(X_{\vis (\GthreepDAG) \cup  \lat(\GthreepDAG)} | X_{\repr(\GthreepDAG)}) \nonumber\\
&= \prod_{a\in \vis (\GthreepDAG) \cup  \lat(\GthreepDAG)}\; \sum_{e \in\mathcal{E}_a} \delta_{X_a,f_a(X_{\pa_{\GthreepDAG}(a)},E_a=e) } P(E_a=e)
\end{align}
 whose marginal over $X_{\vis(\GthreepDAG)}$ is 
$P(X_{\vis(\GthreepDAG)}|X_{\repr(\GthreepDAG)})$.

Suppose the visible variables of a 3-pDAG $\GthreepDAG$ have cardinalities $\vec c_{\vis(\GthreepDAG)}$ and the input variables have cardinalities $\vec c_{\repr(\GthreepDAG)}$.  The set of conditional probability distributions of the visible variables given the input variables that are observationally realizable by the 3-pDAG $\GthreepDAG$ is denoted by $$\mathcal{M}_{\text{obs}}(\GthreepDAG,\vec c_{\vis(\GthreepDAG)},\vec c_{\repr(\GthreepDAG)}).$$ 

In analogy to the definition for pDAGs, we can define a notion of observational equivalence for 3-pDAGs.
We will say that two 3-pDAGs are \emph{observationally equivalent} if, for any choice of cardinalities for the visible variables, their set of  observationally  realizable conditional distributions are the same  Similarly, a 3-pDAG $\GthreepDAG$ \emph{observationally dominates} another 3-pDAG $\GthreepDAG'$ if, for any choice of cardinalities for the visible variables, its set of observationally realizable distributions is a superset of that of $\GthreepDAG'$.  In short, the definition of observational dominance and equivalence for 3-pDAGs is obtained from Definition~\ref{def_obs_equivalence} by replacing every instance of the pDAGs $\GpDAG$, $\GpDAG'$  by the 3-pDAGs $\GthreepDAG$, $\GthreepDAG'$, the realizable distributions $\mathcal{M}_{\text{obs}}(\GpDAG,\vec c_{\vis(\GpDAG)})$ by the realizable {\em conditional} distributions $\mathcal{M}_{\text{obs}}(\GthreepDAG,\vec c_{\vis(\GthreepDAG)},\vec c_{\repr(\GthreepDAG)})$, and the universal quantification over the vector of cardinalities $\vec c_{\vis(\GpDAG)}\in \mathbb{N}^{|\vis(\GpDAG)|}$ by the universal quantification over both the vector of cardinalities $\vec c_{\vis(\GthreepDAG)}\in \mathbb{N}^{|\vis(\GthreepDAG)|}$ and the vector of cardinalities $\vec c_{\repr(\GthreepDAG)}\in \mathbb{N}^{|\repr(\GthreepDAG)|}$.

It turns out that one can easily leverage the characterization of observational dominance in pDAGs to obtain the characterization of observational dominance in 3-pDAGs.  

Suppose $\GthreepDAG$ is a 3-pDAG.  Now consider the operation of converting all of the input nodes of $\GthreepDAG$ to visible nodes, so that the result has only visible and latent nodes and consequently is a pDAG.  Denote the map that achieves this conversion by ${\tt ConvertItoV}$. Denote the image of the 3-pDAG $\GthreepDAG$ under the ${\tt ConvertItoV}$ map, which is a pDAG, by $\bar{\GpDAG}$: 
$$\bar{\GpDAG} = {\tt ConvertItoV}(\GthreepDAG).$$

\begin{lemma} \label{lemma_convertItoV_distr}
	Let $\GthreepDAG$ be a 3-pDAG and let $\bar{\GpDAG}$ be the pDAG one obtains from it by converting input nodes into visible nodes, i.e., $\bar{\GpDAG}={\tt ConvertItoV}(\GthreepDAG)$. Let ${\tt newVnodes}(\bar{\GpDAG})$
	 be the set of nodes that were input nodes in $\GthreepDAG$ but have become visible in $\bar{\GpDAG}$, and let ${\tt oldVnodes}(\bar{\GpDAG})$
	  be the set of nodes that are visible in both $\GthreepDAG$ and $\bar{\GpDAG}$. That is, ${\tt newVnodes}(\bar{\GpDAG})=  \repr(\GthreepDAG)$, ${\tt oldVnodes}(\bar{\GpDAG})=  \vis(\GthreepDAG)$, and $\vis(\bar{\GpDAG})={\tt newVnodes}(\bar{\GpDAG})\cup {\tt oldVnodes}(\bar{\GpDAG})$. 
	
	 A conditional distribution $P(X_{\vis(\GthreepDAG)}|X_{\repr(\GthreepDAG)})$ is realizable by $\GthreepDAG$ if and only if the corresponding conditional distribution $P(X_{{\tt oldVnodes}(\bar{\GpDAG})}|X_{{\tt newVnodes}(\bar{\GpDAG})})$ is realizable by $\bar{\GpDAG}$.
\end{lemma}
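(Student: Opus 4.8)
The plan is to exploit that the map ${\tt ConvertItoV}$ alters neither the directed edges of the graph nor the parent set $\pa(a)$ of any node $a$ that is not an input node: the sole change between $\GthreepDAG$ and $\bar{\GpDAG}$ is the label borne by the former input nodes, ${\tt newVnodes}(\bar{\GpDAG})=\repr(\GthreepDAG)$. The key structural fact I would lean on is that these nodes are \emph{exogenous}---input nodes are exogenous by Definition~\ref{defn:threepDAG}, and converting them to visible adds no incoming edges---so in $\bar{\GpDAG}$ they are source nodes, evaluated as $X_a=f_a(E_a)$ independently of everything else. The entire argument reduces to the observation that, for an exogenous variable, conditioning on its realized value is operationally identical to fixing it externally as an input, because the recursive evaluation Eq.~\eqref{eq_recursive} of every downstream node depends on that variable only through the same functional argument in either case. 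I would make this precise by constructing, in each direction, an explicit correspondence between parameter sets that keeps the parameters $(f_a,P(E_a))$ of all non-input nodes unchanged.

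For the forward direction, suppose $\param=\{(f_a,P(E_a)):a\in\nodes(\GthreepDAG)\setminus\repr(\GthreepDAG)\}$ realizes $P(X_{\vis(\GthreepDAG)}|X_{\repr(\GthreepDAG)})$ in $\GthreepDAG$. In $\bar{\GpDAG}$ I would retain these same parameters for every node of $\vis(\GthreepDAG)\cup\lat(\GthreepDAG)$, and for each newly visible (and still exogenous) node $a\in{\tt newVnodes}(\bar{\GpDAG})$ append a parameter with $\mathcal{E}_a=\mathcal{X}_a$, $f_a$ the identity, and $P(E_a)$ any full-support distribution. Because these appended nodes are sources, the joint distribution realized by $\bar{\GpDAG}$ factors as the 3-pDAG conditional times the chosen full-support marginal over $X_{\repr(\GthreepDAG)}$; marginalizing over the latent nodes and then taking the conditional of $X_{{\tt oldVnodes}(\bar{\GpDAG})}$ given $X_{{\tt newVnodes}(\bar{\GpDAG})}$ recovers exactly $P(X_{\vis(\GthreepDAG)}|X_{\repr(\GthreepDAG)})$ on all input values.

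For the backward direction, suppose the conditional $P(X_{{\tt oldVnodes}(\bar{\GpDAG})}|X_{{\tt newVnodes}(\bar{\GpDAG})})$ is realizable by $\bar{\GpDAG}$, witnessed by some realizable joint over $X_{\vis(\bar{\GpDAG})}$. I would discard the parameters attached to the exogenous nodes ${\tt newVnodes}(\bar{\GpDAG})$ and reinterpret these nodes as the input nodes of $\GthreepDAG$, keeping the parameters of all other nodes verbatim. Since the discarded nodes are sources, conditioning on their values in $\bar{\GpDAG}$ coincides with setting them as inputs, so feeding the same values into Eq.~\eqref{eq_recursive} reproduces the conditional $Q(X_{\vis(\GthreepDAG)\cup\lat(\GthreepDAG)}|X_{\repr(\GthreepDAG)})$ whose visible marginal is the target conditional.

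The one point requiring care is the definitional gap that ``realizability of a \emph{conditional} by a pDAG'' is not spelled out for pDAGs (which realize joints), together with the attendant question of full support: a realizing joint for $\bar{\GpDAG}$ need not assign positive probability to every value of $X_{{\tt newVnodes}(\bar{\GpDAG})}$, in which case the conditional is a priori undefined there. I expect this to be the main (though minor) obstacle, and I would dispose of it by noting that, precisely because the nodes in ${\tt newVnodes}(\bar{\GpDAG})$ are exogenous, the conditional of $X_{{\tt oldVnodes}(\bar{\GpDAG})}$ given $X_{{\tt newVnodes}(\bar{\GpDAG})}$ that $\bar{\GpDAG}$ realizes does not depend on the marginal chosen for $X_{{\tt newVnodes}(\bar{\GpDAG})}$; hence one may always take that marginal to be full-support, making the conditional well-defined on its entire domain and letting both directions match at every input value. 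Everything else is the routine verification that parent sets and functional dependencies are preserved by ${\tt ConvertItoV}$.
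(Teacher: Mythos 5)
Your proof is correct and takes essentially the same route as the paper's: both directions rest on the fact that the converted nodes are exogenous in $\bar{\GpDAG}$, so any realizable joint factors as the target conditional times an arbitrary product distribution over $X_{{\tt newVnodes}(\bar{\GpDAG})}$, and conditioning on an exogenous variable's value is operationally the same as fixing it as an input. Your explicit parameter construction (identity mechanisms with full-support error distributions) and your handling of zero-probability input values are just more careful renderings of steps the paper leaves implicit when it multiplies and divides by ``some product distribution.''
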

\begin{proof}
	\begin{sloppypar} If the conditional distribution $P(X_{\vis(\GthreepDAG)}|X_{\repr(\GthreepDAG)})$ is observationally realizable by $\GthreepDAG$, then for every product probability distribution $ \prod_{a\in {\tt newVnodes(\bar{\GpDAG})}} P(X_a)$, 
	the joint distribution $P(X_{\vis(\bar{\GpDAG})})= P(X_{{\tt oldVnodes}(\bar{\GpDAG})}|X_{{\tt newVnodes}(\bar{\GpDAG})}) 
	\prod_{a\in {\tt newVnodes(\bar{\GpDAG})}} P(X_a)$
	is observationally realizable by $\bar{\GpDAG}$. This follows from the fact that every node $a \in {\tt newVnodes(\bar{\GpDAG})}$
	is {\em exogenous}  in $\bar{\GpDAG}$, so that the only restriction on the joint distribution over the collection of $X_a$ is that it be a product distribution. \end{sloppypar} 
	
	Similarly, if the joint distribution  $P(X_{\vis(\bar{\GpDAG})})$ is observationally realizable on  $\bar{\GpDAG}$, then because the nodes in ${\tt newVnodes(\bar{\GpDAG})}$ are all exogenous, it follows that the marginal distribution on ${\tt newVnodes(\bar{\GpDAG})}$ factorizes, i.e., $P(X_{{\tt newVnodes(\bar{\GpDAG})}}) = \prod_{a\in {\tt newVnodes(\bar{\GpDAG})}}P(X_a)$.  Consequently, $P(X_{\vis(\bar{\GpDAG})})$ can be expressed as 
	\begin{align}
		P(X_{\vis(\bar{\GpDAG})}) &= P(X_{{\tt oldVnodes}(\bar{\GpDAG})}|X_{{\tt newVnodes}(\bar{\GpDAG})}) \nonumber\\
		& \times \prod_{a\in {\tt newVnodes(\bar{\GpDAG})}} P(X_a).
	\end{align}
	Consider any conditional $P(X_{{\tt oldVnodes}(\bar{\GpDAG})}|X_{{\tt newVnodes}(\bar{\GpDAG})}) $ that can be obtained from a joint distribution $P(X_{\vis(\bar{\GpDAG})})$ that is observationally realizable in $\bar{\GpDAG}$ by dividing by some product distribution $\prod_{a\in {\tt newVnodes(\bar{\GpDAG})}} P(X_a)$. Clearly, it can be used to define a conditional $P(X_{{\tt Vnodes}(\GthreepDAG)}|X_{{\tt Inodes}(\GthreepDAG)})$ which is observationally realizable in $\GthreepDAG$. 
\end{proof}

Finally, we define the notion of a \emph{3-mDAG}, which stands to the notion of an mDAG in the same manner in which the notion of a 3-pDAG stands to the notion of a pDAG.  That is, a 3-mDAG is defined similarly to an mDAG but where the set of nodes can include input nodes (defined in the text below Definition~\ref{defn:threepDAG}) in addition to visible nodes.  These input nodes have no parents in the directed structure, nor can they be part of any nontrivial face of the simplicial complex.  Structural dominance of 3-mDAGs can be defined precisely as it was for mDAGs (see Definition \ref{def_structural_dominance}).

 \begin{lemma}\label{lemma_3pDAGs_structural}
	Let $\GthreemDAG$ and $\GthreemDAG'$ be two 3-mDAGs. If $\GthreemDAG$ structurally dominates $\GthreemDAG'$ then $\GthreemDAG$ observationally dominates $\GthreemDAG'$. 
\end{lemma}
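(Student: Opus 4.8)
The plan is to reduce the claim to the already-established mDAG statement (Lemma~\ref{lemma_structural_obs_dominance}) by pushing both 3-mDAGs through the ${\tt ConvertItoV}$ operation, and then to transport the resulting observational dominance of mDAGs back to the 3-mDAG setting using Lemma~\ref{lemma_convertItoV_distr}. First I would define the image of a 3-mDAG under ${\tt ConvertItoV}$, namely the operation of converting every input node to a visible node. Because input nodes are exogenous and participate only in singleton faces, this operation leaves both the directed edges and the simplicial complex untouched; only the label (input versus visible) of the former input nodes changes. Write $\bar{\GmDAG}={\tt ConvertItoV}(\GthreemDAG)$ and $\bar{\GmDAG}'={\tt ConvertItoV}(\GthreemDAG')$ for the two resulting mDAGs.

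The first key step is to check that structural dominance is preserved by ${\tt ConvertItoV}$. Since ${\tt DirectedEdges}$ and ${\tt Faces}$ are carried over verbatim (the outgoing edges of the former input nodes simply become edges between visible nodes, and the singleton faces of the former input nodes remain singleton faces), the hypothesis that $\GthreemDAG$ structurally dominates $\GthreemDAG'$ immediately yields ${\tt DirectedEdges}(\bar{\GmDAG}')\subseteq{\tt DirectedEdges}(\bar{\GmDAG})$ and ${\tt Faces}(\bar{\GmDAG}')\subseteq{\tt Faces}(\bar{\GmDAG})$, i.e.\ $\bar{\GmDAG}$ structurally dominates $\bar{\GmDAG}'$. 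Applying Lemma~\ref{lemma_structural_obs_dominance} then gives that $\bar{\GmDAG}$ observationally dominates $\bar{\GmDAG}'$. Along the way I would verify the compatibility relation ${\tt ConvertItoV}(\can(\GthreemDAG))=\can(\bar{\GmDAG})$, so that observational-realizability statements about the 3-mDAG (defined via its canonical 3-pDAG) and about $\bar{\GmDAG}$ (defined via its canonical pDAG) refer to the same underlying pDAG; this is what licenses applying Lemma~\ref{lemma_convertItoV_distr} to the canonical structures.

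The second key step is the transport back to conditional distributions. I would take any conditional $P(X_{\vis(\GthreemDAG')}\mid X_{\repr(\GthreemDAG')})$ realizable by $\GthreemDAG'$. By Lemma~\ref{lemma_convertItoV_distr}, for every product distribution $\prod_{a}P(X_a)$ over the former input nodes the joint $P(X_{\vis(\GthreemDAG')}\mid X_{\repr(\GthreemDAG')})\prod_{a}P(X_a)$ is realizable by $\bar{\GmDAG}'$. Since $\bar{\GmDAG}$ observationally dominates $\bar{\GmDAG}'$, the same joint is realizable by $\bar{\GmDAG}$, and the reverse direction of Lemma~\ref{lemma_convertItoV_distr} then returns the realizability of $P(X_{\vis(\GthreemDAG)}\mid X_{\repr(\GthreemDAG)})$ by $\GthreemDAG$. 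Because the cardinalities of both the visible and the input variables were arbitrary, this establishes that $\GthreemDAG$ observationally dominates $\GthreemDAG'$.

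I expect the main obstacle to be bookkeeping rather than conceptual: one must track that observational dominance of mDAGs is phrased in terms of joint distributions whereas observational dominance of 3-mDAGs is phrased in terms of conditionals, so the multiplication by, and subsequent division by, a product distribution over the former input variables must be handled cleanly (in particular choosing a full-support product distribution so that the division recovering the conditional is well-defined for every value of the input variables, and checking that ranging over input cardinalities matches ranging over the corresponding new visible cardinalities). The other point requiring care is precisely the commutation of ${\tt ConvertItoV}$ with the canonical-pDAG construction, which is what allows Lemma~\ref{lemma_convertItoV_distr}, stated at the level of 3-pDAGs, to be deployed at the level of 3-mDAGs.
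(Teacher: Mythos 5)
Your proposal is correct and takes exactly the same route as the paper: the paper's own proof is a one-line remark that the lemma ``follows straightforwardly from Lemmas~\ref{lemma_structural_obs_dominance} and~\ref{lemma_convertItoV_distr}'', which are precisely the two ingredients you combine. Your write-up simply fills in the bookkeeping (preservation of structural dominance under ${\tt ConvertItoV}$, compatibility with the canonical-pDAG construction, and the full-support product distribution used to pass between joints and conditionals) that the paper leaves implicit.
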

\begin{proof}
	Follows straightforwardly from Lemmas~\ref{lemma_structural_obs_dominance} and~\ref{lemma_convertItoV_distr}.
\end{proof}

Just as we defined the map $\mdag(\cdot)$ from a pDAG to its associated mDAG in Definition \ref{defn:LNodesToFaces}, one can define the map that takes a 3-pDAG to its associated 3-mDAG in an analogous way.  We will again denote this map by $\mdag(\cdot)$.

\section{Full-SWIGs}
\label{sec_full_SWIGs}

Suppose now that we have the experimental power to intervene on a variable $X_a$, in such a way that the value of $X_a$ that arises from natural causal mechanisms is not necessarily the same as the value of $X_a$ that will influence the descendants of the node $a$ in the underlying causal structure. These two values will be kept in two new variables that we define, called $X_{a^\flat}$ and $X_{a^\sharp}$. The variable $X_{a^\flat}$ corresponds to the natural value of $X_a$, while the variable $X_{a^\sharp}$ corresponds to the value that will influence the descendants of $a$. 

We can represent those variables in a 3-pDAG that accounts for the possibility of an intervention on a variable $X_a$. There, the variable $X_{a^\flat}$ will be associated with a visible node that has no children, and the variable $X_{a^\sharp}$ will be associated with an input node, whose information is sent to the nodes that were descendants of $a$ in the original pDAG. An example is provided in Fig.~\ref{fig_def_flatsharp}.\footnote{Note that, if the nodes $a$ and $b$ of Fig.~\ref{fig_def_flatsharp}(a) are associated respectively with a treatment variable and a recovery variable, then the conditional distribution $P(X_{a^\flat} X_b|X_{a^\sharp})$ that is realized by the SWIG of Fig.~\ref{fig_def_flatsharp}(b) with the parameters of the causal hypothesis corresponds to the \emph{Effect of the Treatment on the Treated (ETT)}~\cite{ETT_Shpitser}.} 

\begin{figure}[h!]
	\centering
	\includegraphics[width=0.45\textwidth]{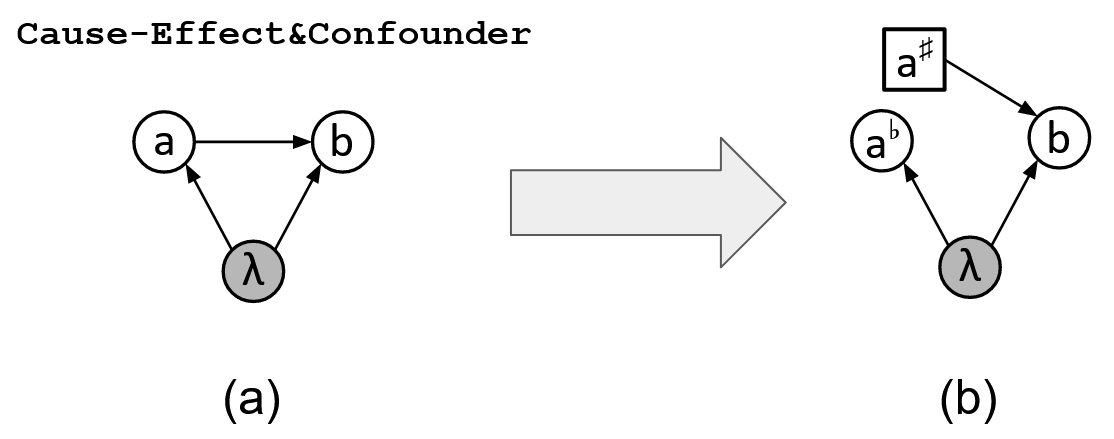}
	\caption{(a) The pDAG {\tt Cause-Effect\&Confounder}. (b) The SWIG that represents the possibility of intervening on the node $a$ of {\tt Cause-Effect\&Confounder}.}
    \label{fig_def_flatsharp}
\end{figure}

A 3-pDAG that is obtained by splitting   one or more visible nodes  of a pDAG into $\flat$ and $\sharp$ versions, like the one in Fig.~\ref{fig_def_flatsharp}, is called a \emph{single world intervention graph} (SWIG)~\cite{richardson_SWIGs}
\footnote{The concept of a SWIG  can also be useful when formulated using frameworks for causal modelling wherein causal structures are represented by circuits, i.e., each variable is represented by a wire and each causal mechanism is represented by a gate~\cite{SchmidOmelette,Lorenz2023}.
A latent variable is represented by a wire that is internal to the circuit.  The act of passively observing a variable is represented by applying a copy operation on the variable, which in the circuit implies creating a new wire to represent the copy and letting this be an open wire that is an output of the circuit. 
Conversely, an input variable is simply represented by an open wire that is an input to the circuit. 
It follows that the circuit framework has a particularly simple manner of representing the node-splitting operation of a visible variable: cut the wire corresponding to the copy of the variable that feeds forward to other variables in the causal model, and marginalize over the open output wire one thereby creates. 
In this way, one preserves the open output wire that was already present and one creates a new open input wire.}
 We will denote the SWIG obtained from $\GpDAG$ by splitting  all the nodes in a particular subset $A\subseteq \vis(\GpDAG)$ by $\text{SWIG}_A(\GpDAG)$.   The SWIG generated by a pDAG $\GpDAG$ by splitting \emph{all} visible nodes, $\text{SWIG}_{\vis(\GpDAG)}(\GpDAG)$, will be called a \emph{full-SWIG}, and will be denoted by $\spl(\GpDAG)$. Fig.~\ref{fig_splitnode_example} presents an example of a full-SWIG. Just as the $\flat$ and $\sharp$ versions of an individual node $a$ will be represented by $a^\flat$ and $a^\sharp$ respectively, the set of $\flat$ and $\sharp$ versions of the nodes in a set $S$ will be represented by $S^\flat$ and $S^\sharp$ respectively .

\begin{figure}[h!]
	\centering
	\includegraphics[width=0.45\textwidth]{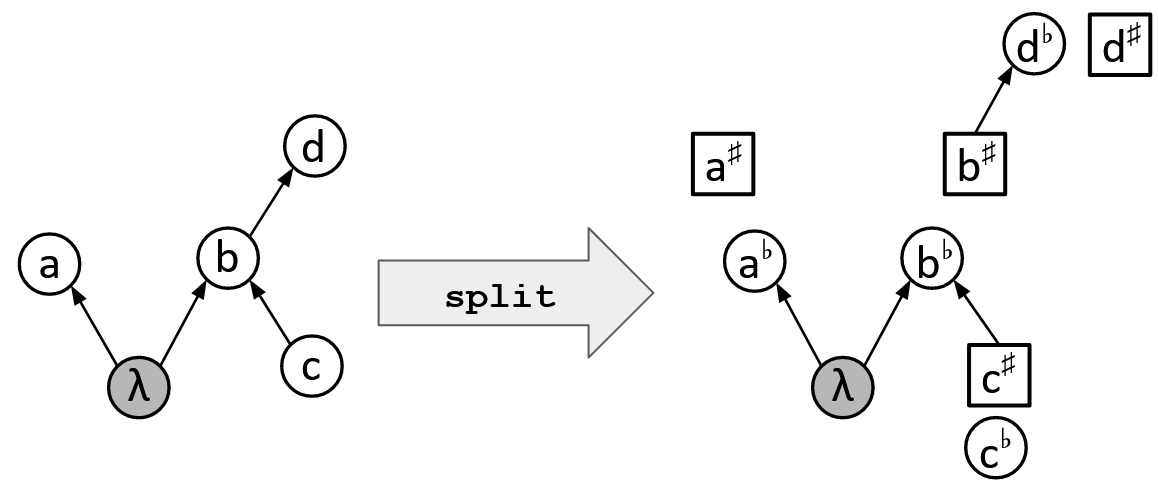}
	\caption{Example of the operation of splitting all visible nodes of a pDAG $\GpDAG$ to obtain the corresponding full-SWIG $\GthreepDAG=\spl(\GpDAG)$.
	}
	\label{fig_splitnode_example}
\end{figure}

We will also define a map that takes an mDAG and returns the 3-mDAG where all of the visible nodes of the original mDAG are split into $\flat$ and $\sharp$ versions. As well as for the case of pDAGs, this map will be denoted by $\spl$. In other words, we are extending the domain of the map $\spl$: now it can take pDAGs to 3-pDAGs, or mDAGs to 3-mDAGs.  An example of the application of the map $\spl$ on an mDAG is given in Fig.~\ref{fig_example_split_mDAG}, which starts from the mDAG that is associated with the pDAG of Fig.~\ref{fig_splitnode_example}.  

\begin{figure}[h!]
	\centering
	\includegraphics[width=0.45\textwidth]{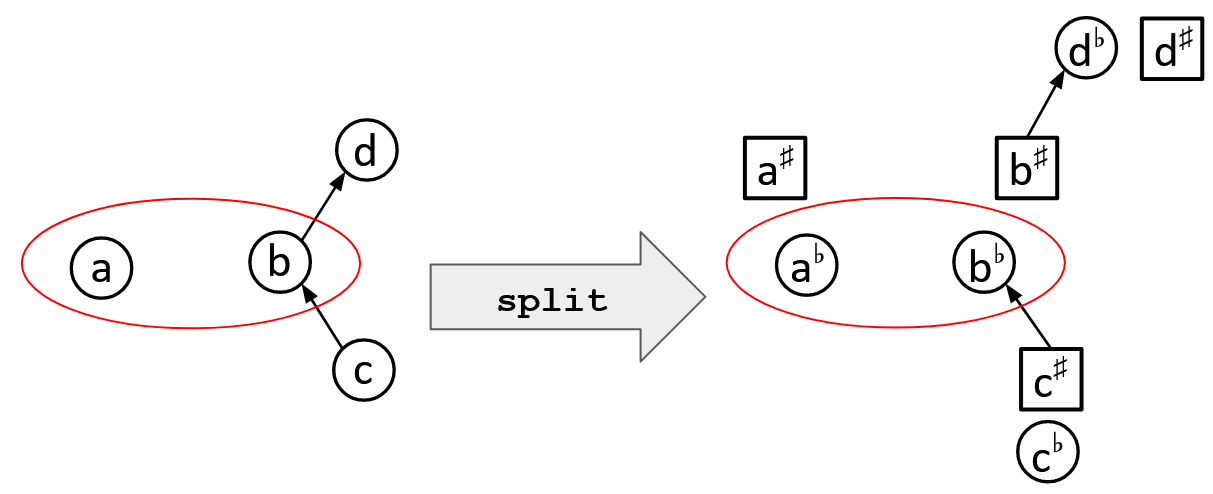}
	\caption{Example of the operation of splitting all nodes of an mDAG $\GmDAG$ to obtain the corresponding 3-mDAG $\GthreemDAG=\spl(\GmDAG)$.
	}
	\label{fig_example_split_mDAG}
\end{figure}

As it turns out, it does not matter whether we first split all the visible nodes of a pDAG $\GpDAG$ and then take the 3-mDAG of the resulting 3-pDAG $\spl(\GpDAG)$, or if we first take the mDAG associated to $\GpDAG$ and then split all of the visible nodes. This is formalized in the following lemma.

\begin{lemma}
	\label{lemma_commutation_maps}
	The maps $\spl$ and $\mdag$ commute. That is, for a pDAG $\GpDAG$, we have
	\begin{align}
		&\spl(\mdag(\GpDAG))\nonumber \\&=\mdag(\spl(\GpDAG)).
	\end{align}
\end{lemma}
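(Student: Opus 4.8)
The plan is to show that the two 3-mDAGs $\spl(\mdag(\GpDAG))$ and $\mdag(\spl(\GpDAG))$ coincide by computing their directed structures and their simplicial complexes separately and checking that each agrees. Since the map $\mdag$ (whether applied to a pDAG or a 3-pDAG) first passes to the {\tt RE}-reduction and only afterwards reads off the directed edges among non-latent nodes and the facets induced by the latents, the entire statement reduces to a single commutation fact at the level of the reduction: that splitting commutes with {\tt RE}-reduction, i.e.\ $\spl({\tt RE{-}reduce}(\GpDAG)) = {\tt RE{-}reduce}(\spl(\GpDAG))$ as 3-pDAGs. Once this is in hand, applying $\mdag$ to either side is a mechanical read-off that will visibly reproduce $\spl(\mdag(\GpDAG))$.

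The engine is the observation that splitting touches only visible nodes, and does so in a role-sensitive but structure-preserving way: a visible node $a$ in its role as a \emph{parent} of some node is relabelled to the input node $a^\sharp$, in its role as a \emph{child} it is relabelled to the childless visible node $a^\flat$, and the latent subgraph (all latent-to-latent edges) is carried over unchanged. Because {\tt RE{-}reduce} is assembled entirely from operations on the latent structure, I would check that each of its two constituents is compatible with this relabelling. For {\tt Exog}: a latent node is endogenous in $\spl(\GpDAG)$ exactly when it is endogenous in $\GpDAG$, since splitting preserves which latents have parents (a visible parent $a$ merely becomes the input parent $a^\sharp$); and exogenizing such a latent $u$ adds, for each parent--child pair, an edge whose endpoints are precisely the split images of the endpoints of the edge added when exogenizing $u$ in $\GpDAG$. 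For {\tt RemoveRedund}: after exogenization every latent's children are visible, and under splitting the child set $\ch(u)$ becomes $\{b^\flat : b\in\ch(u)\}$, so the redundancy condition $\ch(u)\subseteq\ch(v)$ holds in $\spl(\GpDAG)$ iff it holds in $\GpDAG$; making the corresponding choices deletes the corresponding nodes. This yields the desired equality of reductions (and even where the choice of redundant latents is ambiguous, the induced facets are unaffected, so $\mdag$ remains well defined).

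With the reduction equality established, I would read off the two pieces of data. The non-latent edges of ${\tt RE{-}reduce}(\spl(\GpDAG))$ are exactly the arrows $a^\sharp\to b^\flat$ arising from visible-to-visible edges $a\to b$ of ${\tt RE{-}reduce}(\GpDAG)$: there can be no edge into an input node $a^\sharp$ (it is exogenous) and none out of $a^\flat$ (it is childless), so these are the only surviving arrows, and this is precisely the directed structure obtained by applying $\spl$ to the directed structure of $\mdag(\GpDAG)$. Likewise, the facets of $\mdag(\spl(\GpDAG))$ are the maximal sets of $\flat$-visible common children of a latent, namely $\{b^\flat : b\in\ch(u)\} = F^\flat$ for each facet $F$ of the simplicial complex of $\mdag(\GpDAG)$, which is exactly how $\spl$ acts on that simplicial complex. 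Matching directed structures and simplicial complexes gives the claimed identity.

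I expect the main obstacle to be the careful bookkeeping in the {\tt Exog} step, specifically verifying that the role-dependent relabelling ($a\mapsto a^\sharp$ as a parent versus $a\mapsto a^\flat$ as a child) never conflicts: a single visible node may simultaneously be a child of one latent and a parent of another in the original pDAG, and one must confirm that exogenization in $\spl(\GpDAG)$ never introduces an arrow into an $a^\sharp$ node or out of an $a^\flat$ node, which would violate the SWIG constraints. This holds because input nodes have no parents and $\flat$-nodes have no children in $\spl(\GpDAG)$, while exogenization only reroutes arrows between existing parents and children of latents; but maintaining this invariant is the crux that makes the two reduction procedures produce \emph{literally the same} 3-pDAG rather than merely equivalent ones.
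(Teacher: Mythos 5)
Your proposal is correct and takes essentially the same route as the paper: the paper's proof (Appendix~\ref{appendix_proof_Lemmacommutation}) likewise reduces the claim to the commutation of $\spl$ with ${\tt RE{-}reduce}$, verified by showing that exogenizing a single latent node and removing a single redundant latent node each commute with the split-node map, which it establishes via commuting diagrams rather than your prose bookkeeping. Your additional explicit read-off of the directed edges ($a^\sharp\rightarrow b^\flat$) and facets ($F^\flat$) at the end is exactly the step the paper treats as immediate after the reductions are identified.
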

 
\begin{proof}
	Given in Appendix~\ref{appendix_proof_Lemmacommutation}.
\end{proof}

For full-SWIGs, one asks about the realizability of a given conditional probability distribution 
\begin{equation}
	P(X_{\vis(\GpDAG)^\flat}|X_{\vis(\GpDAG)^\sharp}). 
	\label{eq_flat_given_sharp}
\end{equation}
To answer this realizability question, we need to specify the possible choices of parameters on the full-SWIG, that is, the scope of causal hypotheses.  Recall that any 3-pDAG has parameters as specified in Eq.~\eqref{eq_param3pDAG}, that is, functions and distributions over error variables for all the nodes that are visible. 
The scope of possible parameter values for the full-SWIG is presumed to be inherited from the pDAG from which the full-SWIG was defined. Specifically, a variable $X_{a^\flat}$ depends functionally on its parents and the local error variable $E_{a^\flat}$ in the same way that $X_{a}$ depended on its parents and $E_a$, and the distribution over the error variable is the same.
That is, for each causal hypothesis $(\GpDAG,\param)$ where $\param = \{ f_a, P(E_a) : a \in \vis(\GpDAG)\}$, we have a corresponding causal hypothesis for the full-SWIG $\spl(\GpDAG)$, namely,
\begin{align}
\param^{\tt split} = \{ f_{a^\flat}, P(E_{a^\flat}) : {a^\flat} \in \vis(\GpDAG)^{\flat}\}
\end{align}
where $f_{a^\flat} =f_a$ and $P(E_{a^\flat})=P(E_a)$.  Note that there are no error variables associated with the $\sharp$ nodes because these are input nodes.

As noted earlier, 
there are instances of observational dominance of pDAGs without structural dominance, that is, that the converse of Lemma~\ref{lemma_structural_obs_dominance} does not hold. 
  By contrast, the observational dominance of full-SWIGs that have the same set of visible and input nodes \emph{can} be fully characterized by structural dominance:

\begin{lemma}
	Let $\GthreepDAG$ and $\GthreepDAG'$ be a pair of 
	full-SWIGs having the same set of visible and of input nodes,
	$\vis(\GthreepDAG)=\vis(\GthreepDAG')$	and $\repr(\GthreepDAG)=\repr(\GthreepDAG')$, and let $\GthreemDAG$ and $\GthreemDAG'$ denote the corresponding 3-mDAGs, i.e., $\GthreemDAG= \mdag(\GthreepDAG)$ and $\GthreemDAG'= \mdag(\GthreepDAG')$.

	Then, the full-SWIG $\GthreepDAG$ observationally dominates the full-SWIG $\GthreepDAG'$ if and only if the 3-mDAG $\GthreemDAG$
	 structurally dominates the 3-mDAG $\GthreemDAG'$. 
	\label{lemma_full_SWIGdominance}
\end{lemma}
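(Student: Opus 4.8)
The plan is to handle the two directions separately, after replacing each full-SWIG by a transparent canonical representative. Using Lemma~\ref{lemma_convertItoV_distr} to pass to the associated pDAG, applying the exogenization and redundant-latent eliminations of Lemmas~\ref{lemma_exogenize_latents}--\ref{lemmaREreduction} there (neither operation touches input nodes), and converting back, each full-SWIG is observationally equivalent to the canonical 3-pDAG of its own 3-mDAG: $\GthreepDAG\cong\can(\GthreemDAG)$ and $\GthreepDAG'\cong\can(\GthreemDAG')$. In these representatives every $\flat$-node is a childless leaf, each input node $a^\sharp$ points only to the $\flat$-versions of the children of $a$, and the latents (one per facet) point only to $\flat$-nodes. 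Moreover, since $\spl$ merely relabels an edge $a\to b$ as $a^\sharp\to b^\flat$ and a facet $F$ as $F^\flat$, structural dominance of $\GthreemDAG$ over $\GthreemDAG'$ is equivalent to structural dominance of the underlying mDAGs (Lemma~\ref{lemma_commutation_maps}). The ``if'' direction is then immediate: if $\GthreemDAG$ structurally dominates $\GthreemDAG'$, Lemma~\ref{lemma_3pDAGs_structural} gives that $\GthreemDAG$ observationally dominates $\GthreemDAG'$, i.e.\ $\can(\GthreemDAG)\succeq\can(\GthreemDAG')$, and composing with the two equivalences yields $\GthreepDAG\succeq\GthreepDAG'$.

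For the ``only if'' direction, assume $\GthreepDAG\succeq\GthreepDAG'$ and establish the two inclusions of Definition~\ref{def_structural_dominance}. The crucial feature is that splitting \emph{every} node destroys all mediation, so in the canonical form $X_{a^\sharp}$ can influence $X_{b^\flat}$ only through the direct edge $a^\sharp\to b^\flat$: an input node is a source whose only children are $\flat$-leaves, so if that edge is absent then $a^\sharp$ is not an ancestor of $b^\flat$ and the two variables are statistically independent in every realization. Hence, if $a\to b$ is a directed edge of $\GthreemDAG'$ but not of $\GthreemDAG$, pick parameters for $\GthreepDAG'$ making $X_{b^\flat}$ depend nontrivially on $X_{a^\sharp}$; the resulting conditional distribution is realizable by $\GthreepDAG'$ but not by $\GthreepDAG$, contradicting dominance. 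This gives ${\tt DirectedEdges}(\GthreemDAG')\subseteq{\tt DirectedEdges}(\GthreemDAG)$.

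For the faces I isolate the confounding structure by restricting attention to realizations whose $\flat$-variables do not respond to the inputs at all. Since all $\flat$-nodes are leaves, the only remaining source of correlation among the $\flat$-variables is shared latents, so the relevant conditionals over $X_{\vis^\flat}$ reduce to the marginal model of a directed-edge-free latent structure governed by the simplicial complex alone. As a witness for a face $S^\flat$ I take the input-independent conditional in which all of $\{X_{a^\flat}:a\in S\}$ are equal to a single uniform variable while the remaining $\flat$-variables are constant; this is realizable by such a latent structure if and only if $S$ is contained in a single facet, i.e.\ if and only if $S^\flat$ is a face. Thus if $S^\flat\in{\tt Faces}(\GthreemDAG')$ then $\GthreepDAG'$ realizes the witness, dominance forces $\GthreepDAG$ to realize it, and hence $S^\flat\in{\tt Faces}(\GthreemDAG)$, giving ${\tt Faces}(\GthreemDAG')\subseteq{\tt Faces}(\GthreemDAG)$ and completing structural dominance.

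The main obstacle is justifying the witness's behaviour: that perfect joint correlation of the entire set $S$ requires a single common latent over all of $S$ and cannot be assembled from overlapping smaller confounders --- the obstruction illustrated by the triangle mDAG of Fig.~\ref{fig_triangle}. The clean argument is that, with mutually independent exogenous latents and independent private errors, the common value $U=X_{a^\flat}$ computed through any $a\in S$ can depend neither on the private error $E_{a^\flat}$ nor on latents outside the facets containing $a$; ranging over all $a\in S$, $U$ must be measurable with respect to the latents shared by \emph{every} node of $S$, so if no facet contains all of $S$ then $U$ is constant, contradicting uniformity. Cardinalities cause no trouble, since dominance quantifies universally over them and one may choose a convenient (e.g.\ binary) witness.
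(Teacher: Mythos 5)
Your proof is correct, and its skeleton coincides with the paper's own: the ``if'' direction is obtained from Lemma~\ref{lemma_3pDAGs_structural}, and the ``only if'' direction proceeds by exhibiting, for a missing directed edge, a conditional in which $X_{b^\flat}$ depends nontrivially on $X_{a^\sharp}$, and, for a missing face, a perfect-correlation conditional over $X_{S^\flat}$ --- exactly the paper's two witnesses. Where you genuinely differ is in how non-realizability of these witnesses is established. The paper argues directly on the original full-SWIGs, using the structural fact that every non-latent node of a full-SWIG is either parentless (a $\sharp$ node) or childless (a $\flat$ node), and it outsources the perfect-correlation case to an external result (Example~2 of Ref.~\cite{SteudelAy}, lifted to 3-pDAGs via Lemma~\ref{lemma_convertItoV_distr}). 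You instead first pass to the canonical 3-pDAG of each 3-mDAG (sound, since RE-reduction never touches the parentless input nodes, so it commutes with the ${\tt ConvertItoV}$ correspondence), and then give a self-contained probabilistic argument: the common value of the variables $X_{S^\flat}$ is, modulo null sets, measurable with respect to each node's scope of exogenous latents and private error, hence with respect to the intersection of these mutually independent scopes, which is trivial when no facet contains $S$, forcing the common value to be a.s.\ constant. This buys independence from the Steudel--Ay citation --- your argument is in effect a proof of the needed special case of their result, made clean precisely because the canonical form has exogenous, mutually independent latents --- at the cost of the extra reduction step and of spelling out the iteration of the pairwise fact that a variable measurable with respect to two independent collections is measurable with respect to their common part. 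Two minor points to tighten: (i) your parenthetical appeal to Lemma~\ref{lemma_commutation_maps} for ``structural dominance of the split 3-mDAGs equals structural dominance of the underlying mDAGs'' is a miscitation (that lemma is the commutation of $\spl$ with $\mdag$; the relabeling bijection you want is the observation made inside the proof of Theorem~\ref{th_ETT}), though your argument never actually needs this fact; (ii) the phrase ``restricting attention to realizations whose $\flat$-variables do not respond to the inputs'' is legitimate only on the realizing side ($\GthreepDAG'$) --- on the non-realizability side one must instead fix an arbitrary value of the inputs and absorb it into the response functions, which is what your measurability argument implicitly (and correctly) does, but this should be said explicitly.
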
 
\begin{proof}	
	The ``if'' side follows from Lemma~\ref{lemma_3pDAGs_structural}, so we only need to prove the ``only if'' side. We will proceed by proving the contrapositive.  Suppose that ${\GthreepDAG}$ observationally dominates ${\GthreepDAG}'$ even though ${\GthreemDAG}$ does not structurally dominate ${\GthreemDAG}'$. The lack of structural dominance implies one of two possibilities:
	\begin{enumerate}
		\item There is at least one directed edge between an input node $a^\sharp$ and a visible node $b^\flat$,  $a^\sharp\rightarrow b^\flat$, that is present in ${\GthreemDAG}'$ but not in ${\GthreemDAG}$, or
		\item  ${\GthreemDAG}$ has all the directed edges that are present in ${\GthreemDAG}'$, but there is at least one set $S^\flat$ of visible nodes which is a face of ${\GthreemDAG}'$ but not of ${\GthreemDAG}$.
	\end{enumerate}
	
	These two conditions for 3-mDAGs respectively imply the following conditions for the corresponding 3-pDAGs:
	\begin{enumerate}
		\item There is at least input node $a^\sharp$ and one visible node $b^\flat$ such that ${\GthreepDAG}'$ presents a chain $a^\sharp\rightarrow m_1 \rightarrow ... \rightarrow m_n\rightarrow b^\flat$ where all of the mediary nodes $m_1,...,m_n$ are latent nodes, and there is \emph{no} such chain in ${\GthreepDAG}$, or
		\item Item 1 does not hold, but there is at least one set $S^\flat$ of visible nodes that have a common latent ancestor $\lambda$ in ${\GthreepDAG}'$, but no such  latent ancestor exists in ${\GthreepDAG}$.
	\end{enumerate}

In Case 1, consider a conditional distribution where the value of $X_{b^\flat}$ depends on the value of $X_{a^\sharp}$, that is, there exist values $x_{a^\sharp},x'_{a^\sharp}\in \mathcal{X}_{a^\sharp}$ such that
\begin{equation}
	P(X_{b^\flat}|X_{a^\sharp}=x_{a^\sharp})\neq 	P(X_{b^\flat}|X_{a^\sharp}=x'_{a^\sharp}).
\end{equation}
Such a conditional distribution can clearly be realized by ${\GthreepDAG}'$ given the presence of a chain from $a^\sharp$ to $b^\flat$. However, it cannot be realized by $\GthreepDAG$. This is because there is no chain between $a^\sharp$ and $b^\flat$, and also no latent common cause of $a^\sharp$ and $b^\flat$. This can be seen as follows. Since $a^\sharp$ is a $\sharp$ node of the full-SWIG $\GthreepDAG$, it must be \emph{parentless}, so there cannot be a common cause between $a^\sharp$ and $b^\flat$ nor a chain from $b^\flat$ to $a^\sharp$. By assumption, there is no chain from $a^\sharp$ to $b^\flat$ where all of the mediary nodes are latent; however, in a full-SWIG all of the nodes that are not latent are either parentless ($\sharp$ nodes) or childless ($\flat$ nodes), so chains with visible or input mediary nodes are also not allowed.

In case 2, consider a conditional distribution where all of the variables of $X_{S^\flat}$ are perfectly correlated; that is
\begin{equation}
	P(X_{S^\flat}|X_{\repr(\GthreepDAG)})=p[0,...,0]_{S^\flat}+(1-p)[1,...,1]_{S^\flat}
	\label{eq_perfect_corrSWIG}
\end{equation}
where $p\in[0,1]$. In the right hand side, $p[0,...,0]_{S^\flat}$ indicates that with probability $p$ all the variables in the set $X_{S^\flat}$ take the value $0$, and  $(1-p)[1,...,1]_{S^\flat}$ indicates that with probability $1-p$ all the variables in the set $X_{S^\flat}$ take the value $1$. The probabilities $p$ could depend on the value of $X_{\repr(\GthreepDAG)}$ that we are conditioning upon; all that matters is that all of the variables in $X_{S^\flat}$ are perfectly correlated.

Such a conditional distribution can clearly be realized by $\GthreepDAG'$, given the presence of a common cause $\lambda$ between all of the nodes of $S^\flat$. However, it cannot be realized by $\GthreepDAG$. 

Ref.~\cite[Example 2]{SteudelAy} says that, in a pDAG, a set of variables can only be perfectly correlated if all of the corresponding nodes share a common cause\footnote{Correlation could appear without a common cause if we were conditioning on a common effect. However, this is not the case: the nodes that are being conditioned upon are all input nodes, hence parentless, hence not candidates for a common effect of other nodes.}. Via Lemma~\ref{lemma_convertItoV_distr}, this result can be straightforwardly extended to 3-pDAGs: a set of visible nodes of a 3-pDAG can only be perfectly correlated given the input nodes if they share a common cause that is latent or visible (an input node cannot act as the common cause that establishes perfect correlation, since all input variables are conditioned upon in the data). By assumption, the nodes of the set $S^\flat$ do not all share a latent common cause in $\GthreepDAG$. They also cannot share a visible common cause since, in full-SWIGs, visible nodes do not have children. Thus, $\GthreepDAG$ cannot realize the conditional distribution of Eq.~\eqref{eq_perfect_corrSWIG}.  

Therefore, in both cases where $\GthreemDAG$ does not structurally dominate $\GthreemDAG'$, we have explicitly shown conditional probability distributions that are realizable by $\GthreepDAG'$ but not by $\GthreepDAG$.

\end{proof}

\section{Probing Schemes and Shadows}
\label{sec_probingschemes}

In this work, we study different probing schemes on visible variables with the goal of adjudicating between causal structures. We take the temporal order of the visible variables to be fixed and consider only causal structures that are consistent with this temporal ordering.\footnote{Such consistency holds  if and only if the temporal order corresponds to one of the possible topological orderings of the visible nodes of the pDAG.} 
 If two visible variables, $X_a$ and $X_b$, are temporally ordered such that $X_a$ comes before $X_b$, then this means that \blk in each sample of the ensemble, the variable $X_a$ is associated to a property of a system at one time and the variable $X_b$ is associated to a property of a system (possibly the same system) at a {\em later} time. In this case, although we might be unsure of whether the underlying causal structure has an arrow $a\rightarrow b$ or not, we can nonetheless be sure that there is \emph{no} arrow $b \rightarrow a$, under the assumption that there is no backwards-in-time causation. 
 
There are several reasons why we presume that the visible variables have a fixed temporal ordering.
Clearly, if the variables of interest are temporally localized (and we do not need to take into account relativity theory, such as the possibility of space-like separation), then they will necessarily be temporally ordered.  The reason typically given for why fixed temporal ordering should {\em not} be presumed~\cite[Section 7.5.1]{causality_pearl} is that the variables of interest may fail to be temporally localized, referring instead to properties that persist over time. An individual's health and their exercise regime provides a good example. However, whenever the variables of interest are of this type, they can have a mutual influence on one another, and consequently one cannot restrict attention to directed graphs that are {\em acyclic} when considering the possibilities for the causal structure holding between them. Thus, if one is contemplating a causal discovery algorithm that returns only acyclic graphs (as we are doing here), then one should not apply it to variables that fail to be temporally localized.  A second reason for restricting attention to variables that are temporally localized (and that consequently have a fixed temporal order) is that most probing schemes only make sense for such variables. In particular, it is difficult to make sense of the Observe\&Do probing scheme described in Section~\ref{sec_ETT} for a variable that refers to a property that persists over time.

Therefore, in this work we will only consider causal structures where the visible variables have a fixed temporal ordering. Fig.~\ref{fig_forkchain} shows an example of two causal structures, a fork and a chain, that are consistent with the temporal ordering $(a,b,c)$. Note that they are observationally {\em in}equivalent. (There exist pairs of causal structures, one of the ``fork'' type and the other of the ``chain'' type, that {\em are} observationally equivalent, but this requires 
  the parent node in the fork to correspond to the middle node in the chain, and thus the visible variables of the two structures cannot have the same temporal ordering.) 

\begin{figure}[h!]
	\centering
	\includegraphics[width=0.45\textwidth]{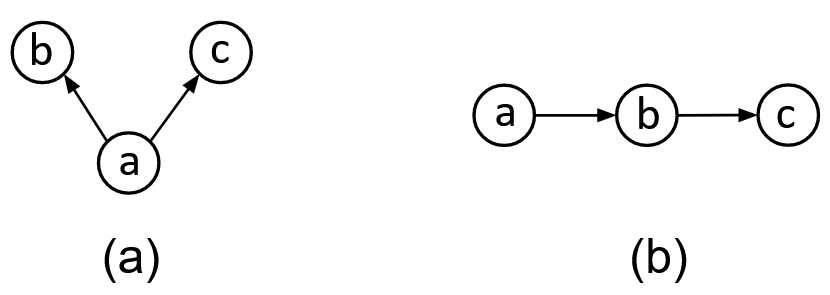}
	\caption{(a) Causal structure of the ``fork'' type that is consistent with the temporal ordering $(a,b,c)$. (b) Causal structure of the ``chain'' type that is consistent with the temporal ordering $(a,b,c)$. These two causal structures are \emph{not} observationally equivalent.}
	\label{fig_forkchain}
\end{figure}

The most we can hope to learn about a causal hypothesis $(\GpDAG,\param)$ by interacting with the visible variables via different probing schemes\footnote{In this paper we do \emph{not} consider edge interventions, which are interventions where one sends a different value of the intervened variable to each of the children of its corresponding node. In other words, when a variable $X_a$ is intervened upon, we will consider that all of the children of $a$ receive information about the same variable $X_{a^\sharp}$.} is expression~\eqref{eq_flat_given_sharp}, that is, the conditional distribution of  the $\flat$ variables given the $\sharp$ variables generated by the causal hypothesis. When a probing scheme is able to provide us the whole conditional distribution  $P(X_{\vis(\GpDAG)^\flat}|X_{\vis(\GpDAG)^\sharp})$, we say it is \emph{informationally complete.} In Section~\ref{sec_ETT}, we will discuss the distinguishability of causal structures under access to informationally complete probing schemes.
 
After characterizing the informationally complete probing schemes, one can also ask what can be done with probing schemes that are \emph{not} informationally complete. To describe them, we start by conceptualizing the conditional probability distribution ${P(X_{\vis(\GpDAG)^\flat}|X_{\vis(\GpDAG)^\sharp})}$
as a vector, where each component is given by a probability corresponding to a choice of values of $X_{\vis(\GpDAG)^\flat}$ and $X_{\vis(\GpDAG)^\sharp}$. A probing scheme that is not informationally complete will not give us the full conditional distribution ${P(X_{\vis(\GpDAG)^\flat}|X_{\vis(\GpDAG)^\sharp})}$, but only some of its components, or some functions of these components.   The set of functions of the components of the full conditional distribution that can be obtained from a probing scheme is said to be the \emph{shadow} of the full conditional that is revealed by this probing scheme.

To illustrate the notion of the shadow revealed by a probing scheme, we consider the example of the causal structure of  Fig.~\ref{fig_fullSWIG_parameters} and two different probing schemes applied to it.  To shorten our notation, here we will abbreviate the name of the pDAG {\tt Cause-Effect\&Confounder} to {\tt CE\&C}.

Suppose that a certain phenomenon is explained by the pDAG {\tt CE\&C} with a specific set of functional parameters
$\param=[f_a,f_b,P(X_\lambda)]$\footnote{Strictly speaking, the $\param$ that we defined in Eq.~\eqref{eq_param} involves the distributions over error variables. However, here we have absorbed all of the error variables into the latent variable $X_\lambda$.}, where $f_a$ and $f_b$ are functions
\begin{align}
	&f_a:\mathcal{X}_\lambda \rightarrow \mathcal{X}_a \\
	& f_b:\mathcal{X}_\lambda \times \mathcal{X}_a \rightarrow\mathcal{X}_b.
\end{align}

\begin{figure}[h!]
	\centering
	\includegraphics[width=0.45\textwidth]{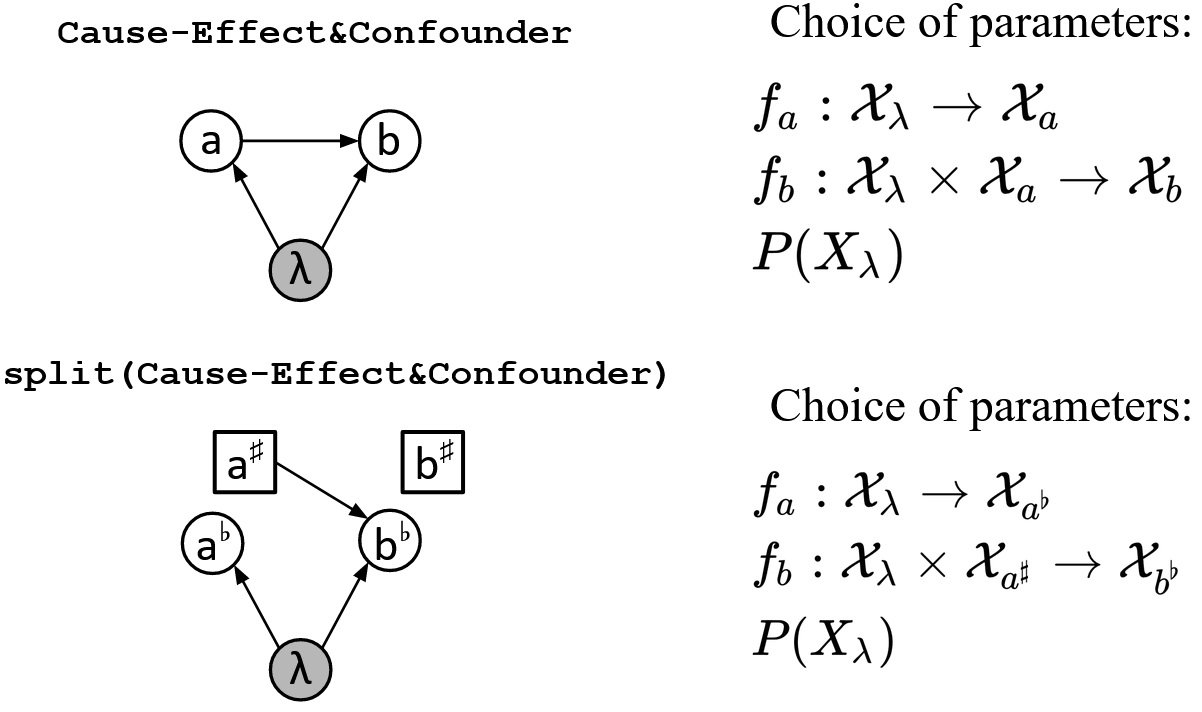}
	\caption{Example used to understand the notion of shadow of the full conditional distribution $P(X_{\vis(\GpDAG)^\flat}|X_{\vis(\GpDAG)^\sharp})$ that is revealed by a specific probing scheme.}
	\label{fig_fullSWIG_parameters}
\end{figure}

The full-SWIG $\spl(\text{{\tt CE\&C}})$ is presented in the bottom part of Fig.~\ref{fig_fullSWIG_parameters}. A choice of parameters for a full-SWIG corresponds to a choice of parameters for the original pDAG, because a variable $X_{a^\flat}$ in the full-SWIG reacts to its parents in the same way as $X_a$ reacts to its parents in the original pDAG. The same choice of parameters $\param$ for this full-SWIG\footnote{The error variables of $a^\flat$ and $b^\flat$ are similarly absorbed into $\lambda$, whose distribution is part of $\param$. Note that $a^\sharp$  and $b^\sharp$, being input variables of a 3-pDAG, do not have associated error variables.}, as indicated in the figure, gives rise to the conditional distribution $P^{(\spl({\tt CE\&C}),\param)}(X_{a^\flat} X_{b^\flat} | X_{a^\sharp} X_{b^\sharp})$. 

Consider first a probing scheme consisting of passive observations of $X_a$ and $X_b$. It is easy to see that the observational probability distribution realized by the original pDAG {\tt CE\&C} under the choice of parameters $\param$ takes the same value as the conditional probability distribution realized by the full-SWIG  $\spl(\text{{\tt CE\&C}})$ under the same choice of parameters  $\param$ \emph{when the values of the $\sharp$ variables coincide with the values of the corresponding $\flat$ variables}. That is,
\begin{gather}
	P^{({\tt CE\&C},\param)}(X_a=x_a, X_b=x_b)= \nonumber \\ P^{(\spl({\tt CE\&C}),\param)}(X_{a^\flat}=x_a, X_{b^\flat}=x_b | X_{a^\sharp}=x_a, X_{b^\sharp}=x_b).
	\label{eq_simple_splitnode_observational}
\end{gather}

Therefore, the shadow of the full conditional $P^{(\spl({\tt CE\&C}),\param)}(X_{a^\flat} X_{b^\flat} | X_{a^\sharp} X_{b^\sharp})$ that is revealed by passive observations of $X_a$ and $X_b$ is the set of components given by Eq.~\eqref{eq_simple_splitnode_observational}.

Consider now a different probing scheme, consisting of passive observation of $X_b$ and a do-intervention on $X_a$, which is a procedure that forces $X_a$ to take a specific value $x'_a$. The probability distribution over $X_b$ obtained after performing this do-intervention on $X_a$ is called a \emph{do-conditional}, denoted by $P(X_b|\text{do}(X_a=x'_a))$. 

The do-conditional obtained by applying this probing scheme on the causal hypothesis ({\tt Cause-Effect\&Confounder}, $\param$), denoted by $P^{({\tt CE\&C},\param)}(X_b|\text{do}(X_a))$, equals the corresponding conditional distribution $P^{({\tt CE\&C},\param)}(X_{b^\flat}|X_{a^\sharp})$ realized by $\spl(\text{{\tt CE\&C}})$ under the same choice of parameters $\param$. That is,

\begin{align}
	&P^{({\tt CE\&C},\param)}(X_b=x_b|\text{do}(X_a=x'_a))=  \nonumber \\
&\begin{aligned}
	\sum_{x_{a} } P^{(\spl({\tt CE\&C}),\param)}(& X_{a^\flat}= x_{a}, X_{b^\flat}=x_b | \\&|X_{a^\sharp}=x'_a, X_{b^\sharp}=x_b),
\end{aligned}
\label{eq_simple_splitnode_interventional}
\end{align}
where the variables $X_{b^\flat}$ and $X_{b^\sharp}$ take the same value and we marginalize over the variable $X_{a^\flat}$. This equation gives the shadow of the full conditional $P^{(\spl({\tt CE\&C}),\param)}(X_{a^\flat} X_{b^\flat} | X_{a^\sharp} X_{b^\sharp})$ that is revealed by a passive observation of $X_b$ together with a do-intervention that forces $X_a$ to take the value $x'_a$.

In summary, each probing scheme is associated with a \emph{shadowing function}. This is the function that, given a conditional distribution ${P(X_{\vis(\GpDAG)^\flat}|X_{\vis(\GpDAG)^\sharp})}$, returns the shadow revealed by the probing scheme in question. In the example of Fig.~\ref{fig_fullSWIG_parameters}, the shadowing function associated with passive observations of $X_a$ and $X_b$ corresponds to taking the components where $X_{a^\flat}=X_{a^\sharp}$ and $X_{b^\flat}=X_{b^\sharp}$. The shadowing function associated with passive observation of $X_b$ and a do-intervention setting $X_a$ to $x'_a$ corresponds to taking the components where $X_{a^\sharp}=x'_a$ and $X_{b^\flat}=X_{b^\sharp}$, and marginalizing out $X_{a^\flat}$.

With this, we can now define a notion of realizability that extends to sets of data obtained from any probing scheme:

\begin{definition}[Realizability of a Shadow] \label{def_shadow_realizability}
	Let $\GpDAG$ be a pDAG, and let $\param$ be a choice of parameters for $\GpDAG$. Consider a probing scheme whose shadowing function is  $\cal F_S$, that is, a probing scheme that reveals the function $\cal F_S$ of the full conditional distribution ${P(X_{\vis(\GpDAG)^\flat}|X_{\vis(\GpDAG)^\sharp})}$. The shadow $\mathcal{S}^{(\GpDAG,\param)}$ obtained by applying this probing scheme on the causal hypothesis $(\GpDAG,\param)$ is given by:
	\begin{equation}
	\mathcal{S}^{(\GpDAG,\param)}=\mathcal{F_S}\left({P^{(\spl(\GpDAG),\param)}(X_{\vis(\GpDAG)^\flat}|X_{\vis(\GpDAG)^\sharp})}\right). 
	\end{equation}

	Accordingly, a set $\cal S$ of data obtained from the probing scheme in question is said to be \emph{realizable by} $\GpDAG$ if there exists some choice of parameters $\param$ such that $\cal S=S^{(G,\param)}$.
\end{definition}

We have previously defined $P^{(\GpDAG,\param)}(X_{\vis(\GpDAG)})$ to be the observational distribution realized by a pDAG $\GpDAG$ under the choice of parameters $\param$. In the definition above, we extend this superscript notation for shadows obtained from a particular probing scheme\footnote{Note that we had already used this notation for the do-conditional of Eq.~\eqref{eq_simple_splitnode_interventional}.}.

\section{Equivalence and Dominance  of causal structures  under the Observe\&Do Probing scheme} 
\label{sec_ETT}

In this section, we will define and completely characterize the equivalence and dominance of causal structures when there is experimental access to an informationally complete probing scheme.

An example of an informationally complete probing scheme consists of what we call the \emph{Observe\&Do probing scheme}.
An Observe\&Do probe of a visible variable
 $X_a$ is the process of first recording the natural value 
 of the variable, say $x_a$, and then forcing the variable to take a potentially different value $x'_a$.
 One can imagine for example a drug trial where one asks the subjects if they would take the drug or not, before giving them a pill that either contains the drug or a placebo,  set independently of a subject's preference. Such an experiment would allow us to compare the effect of the drug on the people that would have taken it versus  its effect on  the people that would not  have. The Observe\&Do \emph{probing scheme}, which we will sometimes abbreviate to O\&D probing scheme, is the name given to the probing scheme where \emph{all} of the visible variables are Observe\&Do probed.
 
In the O\&D probing scheme, implementation of the probings of the different variables follows the same temporal ordering as the variables themselves. That is, if $X_a$ is temporally prior to $X_b$, then the Observe\&Do probe of $X_a$ is temporally prior to the  Observe\&Do probe of $X_b$.

Suppose we have a pDAG $\GpDAG$ on which we perform the O\&D probing scheme. The natural values of all the visible variables are stored in the set of variables $X_{\vis(\GpDAG)^\flat}$, and the values that these variables are subsequently forced to take are stored in the set of variables $X_{\vis(\GpDAG)^\sharp}$. The data obtained from this procedure takes the form $	P(X_{\vis(\GpDAG)^\flat}|X_{\vis(\GpDAG)^\sharp})$,
which is exactly the conditional distribution of expression~\eqref{eq_flat_given_sharp}. Because the O\&D probing scheme provides complete information about the conditional $P(X_{\vis(\GpDAG)^\flat}|X_{\vis(\GpDAG)^\sharp})$, it provides the maximum information about the causal hypothesis that can be obtained by probing the visible nodes (assuming no edge interventions)). Therefore, it is an informationally complete probing scheme. 
While this section focuses on  the Observe\&Do probing scheme, its results are valid for any informationally complete probing scheme.

The set of conditional probability distributions $P(X_{\vis(\GpDAG)^\flat}|X_{\vis(\GpDAG)^\sharp})$ for visible variables of cardinalities $\vec c_{\vis(\GpDAG)}$ that are realizable by the Observe\&Do probing scheme on the pDAG $\GpDAG$, will be denoted $\mathcal{M}_{\text{O$\&$D}}(\GpDAG,\vec c_{\vis(\GpDAG)})$.  Note that it corresponds exactly to the set of conditional distributions that are realizable by $\spl(\GpDAG)$ for the same cardinalities, i.e., $\mathcal{M}_{\text{O$\&$D}}(\GpDAG,\vec c_{\vis(\GpDAG)})= \mathcal{M}_{\text{obs}}(\spl(\GpDAG),\vec c_{\vis(\GpDAG)})$. 
 
We can now define notions of dominance and equivalence relative to the Observe\&Do probing scheme.
\begin{definition}[O$\&$D dominance and O$\&$D equivalence of pDAGs]
	\label{def_obs_equivalence_ETT}
	Let $\GpDAG$ and $\GpDAG'$ be two pDAGs such that $\vis(\GpDAG)=\vis(\GpDAG')$. We say that $\GpDAG$ \emph{O$\&$D-dominates} $\GpDAG'$ when the set of conditional probability distributions realizable by an O$\&$D probing scheme on  $\GpDAG$ includes the set of conditional probability distributions realizable by an O$\&$D probing scheme on  $\GpDAG'$, regardless of the assignment of cardinalities of the visible variables, i.e., when
	\begin{gather}
		\forall \vec c_{\vis(\GpDAG)} \in \mathbb{N}^{|\vis(\GpDAG)|}: \nonumber \\ 
		\mathcal{M}_{\text{O$\&$D}}(\GpDAG',\vec c_{\vis(\GpDAG')}) \subseteq   \mathcal{M}_{\text{O$\&$D}}(\GpDAG,\vec c_{\vis(\GpDAG)}).
	\end{gather}
	
	We say that $\GpDAG$ is \emph{O$\&$D-equivalent} to $\GpDAG'$ when their sets of O$\&$D-realizable distributions are the same:
	\begin{gather}
		\forall \vec c_{\vis(\GpDAG)} \in \mathbb{N}^{|\vis(\GpDAG)|}: \nonumber \\ 
		\mathcal{M}_{\text{O$\&$D}}(\GpDAG',\vec c_{\vis(\GpDAG')}) =   \mathcal{M}_{\text{O$\&$D}}(\GpDAG,\vec c_{\vis(\GpDAG)}).
	\end{gather}	
\end{definition}

\begin{sloppypar} A conditional probability distribution  $P(X_{\vis(\GpDAG)^\flat}|X_{\vis(\GpDAG)^\sharp})$ is said to be {\em O$\&$D-realizable} by a pDAG $\GpDAG$ if it is {\em observationally realizable} by the full-SWIG $\spl(\GpDAG)$.  This fact implies the following lemma: \end{sloppypar}
\begin{lemma}
	Let $\GpDAG$ and $\GpDAG'$ be two pDAGs with the same set of visible nodes. $\GpDAG$ O$\&$D-dominates $\GpDAG'$ if and only if the full-SWIG $\spl(\GpDAG)$ observationally dominates the full-SWIG $\spl(\GpDAG')$.
	\label{lemma_ETT_fullSWIG}
\end{lemma}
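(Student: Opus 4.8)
The plan is to reduce the lemma to a direct definition-chasing argument anchored on the set identity established just before the statement, namely that for every cardinality vector one has $\mathcal{M}_{\text{O$\&$D}}(\GpDAG,\vec c_{\vis(\GpDAG)}) = \mathcal{M}_{\text{obs}}(\spl(\GpDAG),\vec c_{\vis(\GpDAG)})$. This identity holds because an O\&D probe of all visible variables reveals precisely the conditional $P(X_{\vis(\GpDAG)^\flat}|X_{\vis(\GpDAG)^\sharp})$ of expression~\eqref{eq_flat_given_sharp}, and a conditional is declared O\&D-realizable by $\GpDAG$ exactly when it is observationally realizable by the full-SWIG $\spl(\GpDAG)$. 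Since $\spl$ assigns to each split node the cardinalities $c_{a^\flat}=c_{a^\sharp}=c_a$ inherited from $\GpDAG$, both marginal models are indexed by the same single vector $\vec c_{\vis(\GpDAG)}$.

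Given this identity, I would argue both directions at once. Fixing $\vec c_{\vis(\GpDAG)}=\vec c_{\vis(\GpDAG')}$, the identity shows that the inclusion $\mathcal{M}_{\text{O$\&$D}}(\GpDAG',\vec c) \subseteq \mathcal{M}_{\text{O$\&$D}}(\GpDAG,\vec c)$ is literally the same statement as $\mathcal{M}_{\text{obs}}(\spl(\GpDAG'),\vec c) \subseteq \mathcal{M}_{\text{obs}}(\spl(\GpDAG),\vec c)$. Applying the universal quantifier over $\vec c$ to both sides, the left-hand condition is precisely O\&D-dominance of $\GpDAG$ over $\GpDAG'$ in the sense of Definition~\ref{def_obs_equivalence_ETT}, while the right-hand condition is precisely observational dominance of the full-SWIG $\spl(\GpDAG)$ over $\spl(\GpDAG')$. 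This yields the biconditional immediately.

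The one genuine subtlety — and the step I expect to require the most care — is the cardinality convention. The O\&D marginal model is indexed by a single vector with the flat and sharp cardinalities tied together, $c_{a^\flat}=c_{a^\sharp}=c_a$, whereas the general definition of observational dominance for 3-pDAGs quantifies independently over the visible ($\flat$) and input ($\sharp$) cardinalities. To ensure the two readings agree for full-SWIGs, I would show that diagonal (tied) dominance already implies dominance for all independent choices $(\vec c^\flat,\vec c^\sharp)$ by a padding-and-restriction argument: enlarging a $\flat$ output space by extra symbols carrying zero probability never affects realizability (the functions $f_{a^\flat}$ simply never emit the new symbols), and enlarging a $\sharp$ input space can be absorbed by clipping each new input value to an existing one. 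Thus any conditional realizable by $\spl(\GpDAG')$ at off-diagonal cardinalities lifts to one realizable at the diagonal $\vec c=\max(\vec c^\flat,\vec c^\sharp)$; diagonal O\&D-dominance then places it in the model of $\spl(\GpDAG)$, and restricting back down recovers the original conditional for $\spl(\GpDAG)$ — the restriction being valid because the realized distribution carries no mass on the padded $\flat$ values for the original inputs. With this reconciliation in hand, the definition-chasing of the previous paragraph delivers the lemma under either convention.
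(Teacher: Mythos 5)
Your proposal is correct, and its core is exactly what the paper does: the paper offers no separate proof of this lemma at all, treating it as an immediate consequence of the fact that O\&D-realizability is \emph{defined} as observational realizability by the full-SWIG, so that $\mathcal{M}_{\text{O\&D}}(\GpDAG,\vec c_{\vis(\GpDAG)}) = \mathcal{M}_{\text{obs}}(\spl(\GpDAG),\vec c_{\vis(\GpDAG)})$ and the biconditional follows by quantifying over cardinalities. Where you go beyond the paper is your third paragraph, and the concern there is legitimate: the paper's stated definition of observational dominance for 3-pDAGs quantifies \emph{independently} over the visible ($\flat$) and input ($\sharp$) cardinalities, while O\&D dominance quantifies only over tied (diagonal) cardinalities $c_{a^\flat}=c_{a^\sharp}=c_a$, and the paper never reconciles the two; read literally, its one-line argument has exactly the gap you identify in the forward direction. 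Your padding-and-restriction repair is sound, and it works precisely because of the special structure of full-SWIGs: $\sharp$ nodes are parentless inputs (so clipping enlarged input ranges onto original values is harmless) and $\flat$ nodes are childless (so retracting padded output values affects nothing downstream and only relabels zero-probability events), hence realizability at off-diagonal cardinalities is equivalent to realizability of the lifted conditional at the diagonal. The only reading under which the paper's implicit proof is airtight without your extra step is to take observational dominance of full-SWIGs as quantified over tied cardinalities only — which is what the paper's single-vector notation $\mathcal{M}_{\text{obs}}(\spl(\GpDAG),\vec c_{\vis(\GpDAG)})$ suggests, but not what its general 3-pDAG definition says. In short: same route as the paper, with a genuine loose end tied off.
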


This lemma shows that it suffices to characterize the dominance structure of full-SWIGs under passive observations in order to understand the dominance structure of pDAGs under an O$\&$D probing scheme.  In this way, we reduce questions about the unconventional notion of O$\&$D dominance to questions about the better-studied notion of observational dominance.

With this lemma in hand, we can now present one of the main results of this article.  It is that O$\&$D dominance of pDAGs is completely characterized by structural dominance of the corresponding mDAGs:
\begin{theorem}
Let $\GpDAG$ and $\GpDAG'$ be two pDAGs where $\vis(\GpDAG)$=$\vis(\GpDAG')$, and let  $\GmDAG$ and  $\GmDAG'$ be the corresponding mDAGs, i.e., $\GmDAG\equiv\mdag(\GpDAG)$ and $\GmDAG'\equiv\mdag(\GpDAG')$.
The pDAG $\GpDAG$ O$\&$D-dominates  the pDAG $\GpDAG'$ if and only if the mDAG $\GmDAG$ structurally dominates the mDAG $\GmDAG'$.  \label{th_ETT}
\end{theorem}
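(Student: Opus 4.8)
The plan is to prove the theorem as a chain of equivalences that strings together the three lemmas already available, so that O\&D dominance of pDAGs is translated, step by step, into structural dominance of mDAGs. First I would apply Lemma~\ref{lemma_ETT_fullSWIG} to replace the hypothesis ``$\GpDAG$ O\&D-dominates $\GpDAG'$'' by the equivalent statement that the full-SWIG $\spl(\GpDAG)$ observationally dominates the full-SWIG $\spl(\GpDAG')$. Because $\vis(\GpDAG)=\vis(\GpDAG')$, splitting all visible nodes produces two full-SWIGs with the same visible node set $\vis(\GpDAG)^\flat$ and the same input node set $\vis(\GpDAG)^\sharp$, so the hypotheses of Lemma~\ref{lemma_full_SWIGdominance} are satisfied. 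Invoking that lemma, observational dominance of $\spl(\GpDAG)$ over $\spl(\GpDAG')$ is equivalent to structural dominance of the 3-mDAG $\mdag(\spl(\GpDAG))$ over $\mdag(\spl(\GpDAG'))$.

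Next I would use the commutation of $\spl$ and $\mdag$ from Lemma~\ref{lemma_commutation_maps} to rewrite $\mdag(\spl(\GpDAG))=\spl(\mdag(\GpDAG))=\spl(\GmDAG)$, and likewise $\mdag(\spl(\GpDAG'))=\spl(\GmDAG')$. At this stage the theorem reduces to a single remaining claim: $\spl(\GmDAG)$ structurally dominates $\spl(\GmDAG')$ if and only if $\GmDAG$ structurally dominates $\GmDAG'$. This is the one genuinely new ingredient, and I expect it to be the crux of the argument, although it is purely combinatorial rather than deep; indeed the substantive work—the perfect-correlation and chain arguments—was already carried out in the proof of Lemma~\ref{lemma_full_SWIGdominance}.

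To close this last gap I would examine how $\spl$ acts on the two pieces of structural data that Definition~\ref{def_structural_dominance} compares, and show that it preserves and reflects inclusion on each. On directed edges, $\spl$ sends each arrow $a\to b$ of an mDAG to the arrow $a^\sharp\to b^\flat$ of its associated 3-mDAG, and this assignment is a bijection between ${\tt DirectedEdges}(\GmDAG)$ and the directed edges of $\spl(\GmDAG)$; hence ${\tt DirectedEdges}(\GmDAG')\subseteq{\tt DirectedEdges}(\GmDAG)$ holds exactly when the corresponding inclusion holds for the split graphs. On faces, $\spl$ sends each face $S$ of $\GmDAG$ to the face $S^\flat$ of $\spl(\GmDAG)$, and the only additional faces introduced by splitting are the singletons $\{a^\sharp\}$ on the input nodes; since $\GmDAG$ and $\GmDAG'$ share the same node set, these $\sharp$-singletons appear identically in $\spl(\GmDAG)$ and $\spl(\GmDAG')$, so they cannot affect the inclusion. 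Consequently ${\tt Faces}(\GmDAG')\subseteq{\tt Faces}(\GmDAG)$ if and only if ${\tt Faces}(\spl(\GmDAG'))\subseteq{\tt Faces}(\spl(\GmDAG))$. Combining the edge and face equivalences shows that $\spl$ preserves and reflects structural dominance, which completes the chain of equivalences and establishes the theorem.
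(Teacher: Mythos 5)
Your proposal is correct and follows essentially the same route as the paper's proof: the same chain of Lemma~\ref{lemma_ETT_fullSWIG}, Lemma~\ref{lemma_full_SWIGdominance}, and the commutation Lemma~\ref{lemma_commutation_maps}, reducing everything to the claim that $\spl$ preserves and reflects structural dominance of mDAGs. Your handling of that last step is in fact slightly more careful than the paper's (which phrases it in terms of facets rather than faces, and does not explicitly note that the $\sharp$-singleton faces are common to both split graphs), but the substance is identical.
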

\begin{proof}
	Together, Lemmas~\ref{lemma_full_SWIGdominance} and~\ref{lemma_ETT_fullSWIG} imply that $\GpDAG$ O$\&$D-dominates $\GpDAG'$ if and only if the 3-mDAG $\mdag(\spl(\GpDAG))$ structurally dominates the 3-mDAG $\mdag(\spl(\GpDAG'))$. With Lemma~\ref{lemma_commutation_maps}, we can commute $\mdag$ with $\spl$. So, $\GpDAG$ O$\&$D-dominates $\GpDAG'$ if and only if $\spl(\mdag(\GpDAG))=\spl(\GmDAG)$ structurally dominates $\spl(\mdag(\GpDAG'))=\spl(\GmDAG')$. 
	
	It remains therefore to prove that $\GmDAG$ structurally dominates $\GmDAG'$ if and only if  $\spl(\GmDAG)$ structurally dominates $\spl(\GmDAG')$. 	This follows from the fact that, for any two mDAGs $\GmDAG$ and $\GmDAG'$, there is an edge $a\rightarrow b$ present in $\GmDAG$ but not present in $\GmDAG'$ if and only if the corresponding edge $a^\sharp\rightarrow b^\flat$ is present in $\spl(\GmDAG)$ but not present in $\spl(\GmDAG')$, and there is a facet $S$ present in  $\GmDAG$ but not present in $\GmDAG'$ if and only if the corresponding facet $S^\flat$ is present in $\spl(\GmDAG)$ but not  present in $\spl(\GmDAG')$.
\end{proof}

By using structural dominance of mDAGs, we can construct a pre-order of pDAGs where the equivalence classes are given by the sets of all pDAGs that are associated with the same mDAG. We can convey information about this pre-order by instead presenting the corresponding partial order between equivalence classes, represented by the mDAGs. This partial order over mDAGs will be called the \emph{structural partial order}.  As an example, Fig.~\ref{2node_structural} shows the structural partial order of mDAGs with two nodes. 
 By Theorem~\ref{th_ETT}, the structural partial order  captures the partial order over causal structures induced by the O\&D dominance relation.

\begin{figure}[h!]
	\centering
	\includegraphics[width=0.45\textwidth]{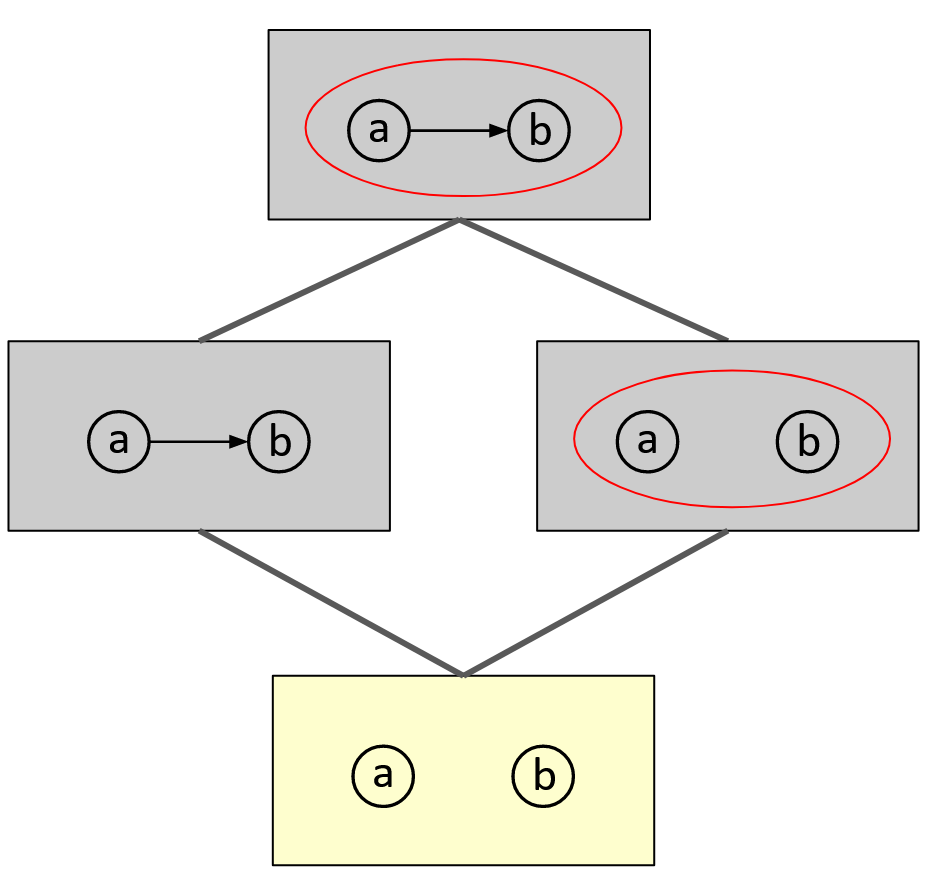}
	\caption{All mDAGs with two nodes $a$ and $b$, with $a$ temporally prior to $b$, organized into the structural partial order, which also expresses the partial order of equivalence classes under the O\&D relation. 
	The background colour of each mDAG indicates its observational equivalence class.
	}
	\label{2node_structural}
\end{figure}

Fig.~\ref{2node_observational} shows the partial order that holds among equivalence classes of mDAGs of two nodes under the observational dominance relation. As can be seen in this figure, there are only two observational equivalence classes of two-node mDAGs: the one represented in gray can realize all probability distributions over two variables, and the one represented in white can only realize distributions where the two variables are independent. By comparing Fig.~\ref{2node_structural} with Fig.~\ref{2node_observational}, we see that the partial order under O$\&$D dominance is distinct from the partial order under observational dominance. In particular, this implies that there are cases where one pDAG observationally dominates another without structurally dominating it, that is, the converse of Lemma~\ref{lemma_structural_obs_dominance} does not hold. For instance, for the pDAG where $a$ and $b$ are cause-effect related and the pDAG where they are confounded, we have observational dominance relations holding in both directions, but no structural dominance in either direction.

\begin{figure}[h!]
	\centering
	\includegraphics[width=0.45\textwidth]{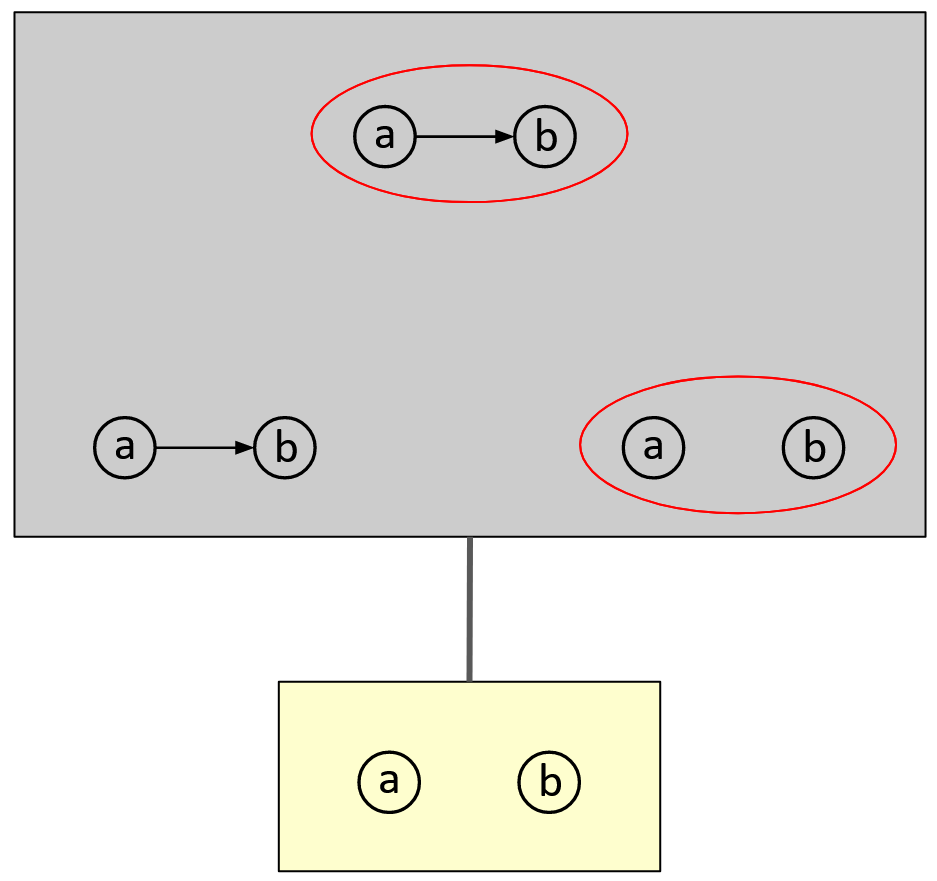}
	\caption{All mDAGs with two nodes $a$ and $b$, with $a$ temporally prior to $b$,  organized into the partial order of equivalence classes under the observational dominance relation.}
	\label{2node_observational}
\end{figure}

\section{Equivalence and Dominance under all patterns of do-interventions}
\label{sec_do}

Theorem~\ref{th_ETT} provides a complete characterization of what can be inferred about the underlying causal structure if there is access to an informationally complete probing scheme, such as the Observe\&Do probing scheme. We have shown that the full conditional distribution ${P(X_{\vis(\GpDAG)^\flat}|X_{\vis(\GpDAG)^\sharp})}$ allows one to identify the pDAG up to its mDAG equivalence class. The question arises, then, whether strictly less information, in the form of certain shadows of the  full conditional distribution,  might still be sufficient for identifying the pDAG up to its mDAG equivalence class, that is, whether knowing the full conditional probability distribution ${P(X_{\vis(\GpDAG)^\flat}|X_{\vis(\GpDAG)^\sharp})}$ is in fact \emph{not necessary} for achieving such an identification.  We will see shortly that this is in fact the case. In this section, we will investigate a probing scheme that is \emph{not} informationally complete, but nevertheless dominance relative to it is \emph{also} completely characterized by structural dominance of mDAGs.

Consider the case where, for each visible variable, the experimentalist implements a passive observation or a do-intervention, but never both, that is, the case where there is a disjunction of the two possibilities rather than the conjunction: Observe {\em or} Do, rather than Observe {\em and} Do. 

Such a probing scheme is of practical signficance because in many real-world experiments that involve an intervention on one or more variables, the intervention is of the purely Do variety rather than of the Observe$\&$Do variety.  For instance, in most blind drug trials, where a given subject receives, at random, the drug or a placebo as the treatment, the experimentalist \emph{does not} trouble themselves to ask the subject what their preference regarding taking the drug would have been were they to have been given the choice.  In other words, the value that the treatment variable would take if it were determined by the natural causal mechanisms (rather than fixed by intervention) is \emph{not} recorded. A do-intervention is in fact the most common type of intervention considered in the causal inference literature. 

We will refer to a specification of the set of visible nodes of a causal structure $\GpDAG$ that are subject to a do-intervention as a {\em do-pattern}, with the understanding that the complementary set of visible nodes is passively observed. The probing scheme wherein the subset of visible nodes that are subject to a do-intervention is $A \subseteq \vis(\GpDAG)$
  will be referred to as the {\em $A$-pattern Observe-or-Do probing scheme}.

The data obtained from implementing the $A$-pattern Observe-or-Do probing scheme for some $A \subseteq \vis(\GpDAG)$ takes the following form:
\begin{equation}
	\{P(X_{(\vis(\GpDAG)\setminus A)^\flat}|X_{A^\sharp}=x'_A) \hspace{0.25em} | \hspace{0.25em} x'_A\in \mathcal{X}_A\},
	\label{eq_do_data}
\end{equation}

where the natural value of a visible variable $X_v\in X_{\vis(\GpDAG)}$ is stored in the variable $X_{v^\flat}$, and the values that the variables $X_A$ are forced to take are stored in the set of variables $X_{A^\sharp}$. 

 Note that the Observe\&Do probing scheme (studied in Section~\ref{sec_ETT}) provides data where, for a node $a\in \vis(\GpDAG)$, the variable $X_{a^\flat}$ appears to the left side of the conditional \emph{at the same time} that the variable $X_{a^\sharp}$ appears to the right side of the conditional. By contrast, this \emph{does not happen} in the A-pattern Observe-or-Do probing scheme for some  $A \subseteq \vis(\GpDAG)$. That is, for a node $a\in \vis(\GpDAG)$, in expression~\eqref{eq_do_data} there is \emph{either} $X_{a^\flat}$ to the left side of the conditional \emph{or}  $X_{a^\sharp}$ to the right side of the conditional.

As discussed in Section~\ref{sec_probingschemes}, the expression~\eqref{eq_do_data} captures the same information as 
the standard notion of a ``do-conditional'', which is traditionally denoted by  $P(X_{\vis(\GpDAG)\setminus A}|\text{do}(X_A))$. 
We will call it the ``$A$-pattern do-conditional''.

The shadowing function associated with the A-pattern Observe-or-Do probing scheme consists of taking the components of ${P(X_{\vis(\GpDAG)^\flat}|X_{\vis(\GpDAG)^\sharp})}$ where $X_{(\vis(\GpDAG)\setminus A)^\flat}=X_{(\vis(\GpDAG)\setminus A)^\sharp}$ and marginalizing out $X_{A^\flat}$. That is, defining  $\bar{A}\equiv\vis(\GpDAG)\setminus A$, the output of the shadowing function is the set
\begin{align}
  \Bigg\{&\sum_{x_{A^\flat} } P( X_{A^\flat}= x_{A^\flat}, X_{\bar{A}^\flat} |X_{A^\sharp}=x'_{A}, X_{\bar{A}^\sharp})\hspace{0.25em}  \Big | \hspace{0.25em} x'_{A}\in \mathcal{X}_A, \nonumber \\
  &P( X_{A^\flat}, X_{\bar{A}^\flat}=x_{\bar{A}^\flat} |X_{A^\sharp}, X_{\bar{A}^\sharp}=x_{\bar{A}^\sharp})=0 \text{ if } x_{\bar{A}^\flat}\neq x_{\bar{A}^\sharp}
    \Big\}
	\label{eq_shadow_Apattern}
\end{align}

This shadowing function gives the definition of realizability of an $A$-pattern do-conditional by a pDAG through Def.~\ref{def_shadow_realizability}.

We will be primarily interested in a probing scheme wherein one collects statistical data from {\em all} possible patterns of do-interventions, that is, wherein one partitions the ensemble of samples and implements the $A$-pattern Observe-or-Do probing scheme for {\em every} subset $A \subseteq \vis(\GpDAG)$.  We refer to this as the {\em all-patterns Observe-or-Do} probing scheme. We will also make some brief remarks about {\em single-pattern} Observe-or-Do probing schemes, where one only has access to the data for a single subset  $A \subseteq \vis(\GpDAG)$.

 In this section, we will show that even though the all-patterns Observe-or-Do probing scheme is generally not informationally complete, dominance relative to it is completely characterized by structural dominance of mDAGs. A claim found in Ref.~\cite[Section 7]{evans_graphs_2016} seems, at first glance, to contradict this statement.
 So we endeavour here to explain how the result described there fits into the discussion in this article.

Ref.~\cite{evans_graphs_2016} discussed the problem of distinguishability of different pDAGs under (what we are here calling) single-pattern Observe-or-Do probing schemes, that is, access to the statistical data obtained from do-interventions on a single subset $A$ of visible nodes. 
As noted in Ref.~\cite{evans_graphs_2016}, a single-pattern Observe-or-Do probing scheme generally {\em cannot} distinguish pDAGs associated to different mDAGs, no matter which pattern one chooses. Although Ref.~\cite{evans_graphs_2016} considered what could be inferred from any pattern of do-interventions, it crucially did not allow that the data for one pattern could be compared with the data from another pattern, unlike the all-patterns probing scheme.  This, ultimately, accounts for the differences between the results described earlier in this section and the claim of Ref.~\cite{evans_graphs_2016}.

To make the distinction between the single-pattern and all-patterns Observe-or-Do probing schemes more explicit, here we discuss the example of Fig.16 of Ref.~\cite{evans_graphs_2016}, which we reproduce in Fig.~\ref{fig_evans_16}. 

\begin{figure}[h!]
	\centering
	\includegraphics[width=0.45\textwidth]{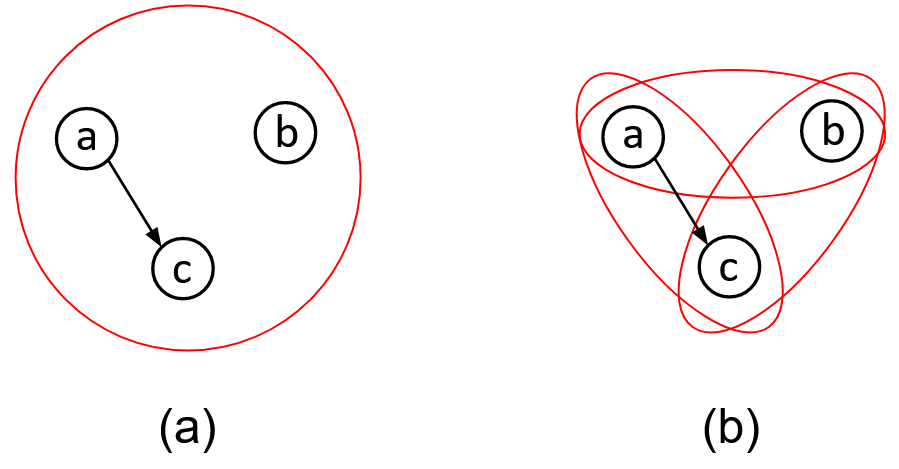}
	\caption{mDAGs that appear in Fig. 16 of Ref~\cite{evans_graphs_2016}.
	}    \label{fig_evans_16}
\end{figure}

Individually, all of the distributions obtained from passive observations or from different do-patterns in the mDAG of Fig.~\ref{fig_evans_16}(a) can also be realized by the mDAG of Fig.~\ref{fig_evans_16}(b), which led Ref.~\cite{evans_graphs_2016} to classify these mDAGs as indistinguishable under each of these probing schemes. However, further inspection reveals that there are sets of data obtained from passive observations and do-interventions that can be \emph{jointly} realized by Fig.~\ref{fig_evans_16}(a) but not Fig.~\ref{fig_evans_16}(b).
The following is an example: under passive observation, the three variables are perfectly correlated; under a do-intervention on variable $X_a$,
  the variables associated $X_b$ and $X_c$
   remain perfectly correlated. 
In Fig.~\ref{fig_evans_16}(a), one can jointly realize this behaviour if all variables copy the latent variable that acts as a three-way common cause (note that the arrow from $a$ to $b$ plays no role). 
 In Fig.~\ref{fig_evans_16}(b), the {\em only} way one can realize the behaviour for the passive observation case is by having the variables $X_a$ and $X_c$ copy the latent variable that acts upon both of them, and having variable $X_b$ copy variable $X_a$.  But this choice of parameters  does not also realize  the behaviour for the do-intervention case because the intervention on $X_a$ eliminates the correlation between $X_a$ and $X_c$ and thus also the correlation between $X_b$ and $X_c$.

If one imagines having the experimental capability to implement {\em any} pattern of do-interventions, as Ref.~\cite{evans_graphs_2016} clearly did, there is no reason not to consider what can be inferred from comparing the statistical data obtained from these different patterns.  That is, if one has access to the single-pattern probing scheme for every pattern, then there is no reason not to implement each of these on a subset of samples, and hence to implement the all-patterns probing scheme. This seems to us to be an oversight of the analysis presented in Ref.~\cite{evans_graphs_2016}.

In conclusion, to implement the all-patterns Observe-or-Do probing scheme it is \emph{not} enough to show that every distribution from each do-pattern that is realizable by $\GpDAG$ is also realizable by $\GpDAG'$. Rather, it is necessary to show that all the \emph{combinations} of distributions from different do-patterns that are realizable by $\GpDAG$ under one given choice of parameters $\param$ are also realizable by $\GpDAG'$ under one single choice $\param'$ of parameters.

To help us consider the relation between data obtained from different do-patterns, we now introduce the terminology of \emph{joint realizability} to indicate that a set of data obtained from different do-patterns is realizable by a pDAG under the same choice of parameters.    

\begin{definition}[Joint Realizability of data from Different Do-Patterns]
	Let $\GpDAG$ be a pDAG, let $A_i\subseteq \vis(\GpDAG)$ be a subset of visible nodes of $\GpDAG$ and let $\mathcal{S}=\{A_i: i\in \{1,\dots,N\}\}$ be a set of such subsets.  Let $\mathcal{Q}_i( x_{A_i}' )$ be the data obtained by performing a do-intervention on  the variables associated to nodes in $A_i$, namely fixing $X_{A_i}=x_{A_i}'$, and passive observation on the rest,
	\begin{equation}
		\mathcal{Q}_i(x_{A_i}') =Q(X_{\vis(\GpDAG)\setminus A_i}| X_{A_i}=x_{A_i}')
	\end{equation}
Let $\mathcal{Q}$ be the union, over all values of $i$ and of $x_{A_i}'p$, of the $\mathcal{Q}_i(x_{A_i}')$,
	\begin{equation}
		\mathcal{Q}= \{ \mathcal{Q}_i(x_{A_i}')  | x_{A_i}' \in \mathcal{X}_{A_i}, i\in \{1,\dots,N\} \}
	\end{equation}
	We say that the set of data $\cal Q$ is \emph{jointly realizable} by the pDAG $\GpDAG$ if there exists a choice of parameters $\param$ in $\GpDAG$ such that 
	\begin{align}
	& \forall i\in \{1,...,n\}, \forall x_{A_i}' \in \mathcal{X}_{A_i}:\nonumber\\
	&P^{(\GpDAG,\param)}(X_{\vis(\GpDAG)\setminus {A_i}}| \text{do}(X_{A_i}=x_{A_i}')) = \nonumber\\
	&=Q(X_{\vis(\GpDAG)\setminus {A_i}}| X_{A_i}= x_{A_i}')  \text{ }.
	\end{align}
\end{definition}

Note that $\cal Q$ can include data obtained from passive observations: this is the case if, for some value of $i$, $A_i=\emptyset$. The set of all such sets of data $\cal Q$ that are jointly realizable by a pDAG $\GpDAG$ and exhibit the cardinalities $\vec c_{\vis(\GpDAG)}$ for the visible variables will be denoted by $\mathcal{M}_{\text{All O-or-D}}(\GpDAG,\vec c_{\vis(\GpDAG)})$, where All O-or-D stands for ``all-patterns Observe-or-Do''.  

Now we can finally give our definitions of dominance and equivalence relative to the all-patterns Observe-or-Do probing scheme:

\begin{definition}[All-patterns O-or-D dominance and equivalence of pDAGs]
	\label{def_obs_and_do_equivalence}
	Let $\GpDAG$ and $\GpDAG'$ be two pDAGs such that $\vis(\GpDAG)=\vis(\GpDAG')$. We say that $\GpDAG$ \emph{all-patterns O-or-D dominates} $\GpDAG'$ when, regardless of the assignment of cardinalities of the visible variables, all of the sets of data obtained from the all-patterns probing scheme that are jointly realizable by $\GpDAG'$ are also jointly realizable by $\GpDAG$. That is, when
	\begin{gather}
		\forall \vec c_{\vis(\GpDAG)} \in \mathbb{N}^{|\vis(\GpDAG)|}: \nonumber \\ \mathcal{M}_{\text{All O-or-D}}(\GpDAG',\vec c_{\vis(\GpDAG')}) \subseteq   \mathcal{M}_{\text{All O-or-D}}(\GpDAG,\vec c_{\vis(\GpDAG)}) 
	\end{gather}
	
	We say that $\GpDAG$ is \emph{all-patterns O-or-D equivalent} to $\GpDAG'$ when dominance holds in both directions.
\end{definition}

In other words, $\GpDAG$ all-patterns O-or-D dominates $\GpDAG'$ 
if for all choices $\param'$ of parameters of $\GpDAG'$ there exists at least one choice $\param $ of parameters of $\GpDAG$ such that, for \emph{any} choice of set of visible variables $X_A\in X_{\vis(\GpDAG)}$ to intervene upon and of values to set these variables to, the realized do-conditional (or joint distribution, if $A=\emptyset$) is the same:
\begin{gather}
	\forall A\in \vis(\GpDAG),  \forall x_{A}' \in \mathcal{X}_{A} : \nonumber \\ P^{(\GpDAG,\param)}(X_{\vis(\GpDAG)}\setminus X_{A}| \text{do}(X_{A}=x'_A))= \nonumber \\ =P^{(\GpDAG',\param')}(X_{\vis(\GpDAG')}\setminus X_{A}| \text{do}(X_{A}=x'_A)).
	\label{eq_def_obs_and_do_dominance}
\end{gather}

Now, we prove our second main result: even though the all-patterns Observe-and-Do probing scheme is not informationally complete, all-patterns O-or-D dominance is \emph{also} characterized by structural dominance of mDAGs.
\begin{theorem}
	Let $\GpDAG$ and $\GpDAG'$ be two pDAGs and let  $\GmDAG$ and  $\GmDAG'$ be the corresponding mDAGs, i.e., $\GmDAG\equiv\mdag(\GpDAG)$ and $\GmDAG'\equiv\mdag(\GpDAG')$.  $\GpDAG$ all-patterns O-or-D dominates  $\GpDAG'$ if and only if $\GmDAG$
	 structurally dominates $\GmDAG'$.
	\label{th_do_and_ob}
\end{theorem}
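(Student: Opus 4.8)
The plan is to reduce everything to the already-established characterization of O\&D dominance (Theorem~\ref{th_ETT}) by exploiting the fact that the all-patterns data is merely a \emph{shadow} of the full conditional ${P(X_{\vis(\GpDAG)^\flat}|X_{\vis(\GpDAG)^\sharp})}$. Concretely, I would first record the identity that a set of do-conditionals is jointly realizable by $\GpDAG$ if and only if it is the all-patterns shadow of a single conditional that is O\&D-realizable by $\GpDAG$; that is, $\mathcal{M}_{\text{All O-or-D}}(\GpDAG,\vec c_{\vis(\GpDAG)})$ is the image of $\mathcal{M}_{\text{O\&D}}(\GpDAG,\vec c_{\vis(\GpDAG)})$ under the all-patterns shadowing function. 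The ``if'' direction of the theorem is then immediate: if $\GmDAG$ structurally dominates $\GmDAG'$, Theorem~\ref{th_ETT} gives the inclusion $\mathcal{M}_{\text{O\&D}}(\GpDAG',\vec c)\subseteq \mathcal{M}_{\text{O\&D}}(\GpDAG,\vec c)$ for every $\vec c$, and applying the (deterministic) shadowing function to both sides preserves the inclusion. Note I would \emph{not} claim the shadowing function is injective; it is not, since it never exposes components with $X_{a^\flat}\neq X_{a^\sharp}$ for an observed node.

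For the ``only if'' direction I would argue the contrapositive and, as a preliminary simplification, replace $\GpDAG$ and $\GpDAG'$ by their canonical pDAGs $\can(\GmDAG)$ and $\can(\GmDAG')$. This is legitimate because a pDAG and its RE-reduction share the same mDAG, hence structurally dominate each other, hence (by the ``if'' direction just proved) are all-patterns equivalent; so I may assume throughout that all latent nodes are exogenous and that each latent directly parents exactly a facet of the simplicial complex. Now suppose $\GmDAG$ does not structurally dominate $\GmDAG'$. Unlike the nested case split in the proof of Lemma~\ref{lemma_full_SWIGdominance}, the two failure modes can here be treated independently: either some directed edge $a\to b$ of $\GmDAG'$ is absent from $\GmDAG$, or some face $S$ of $\GmDAG'$ is absent from $\GmDAG$.

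In the edge case I would use the single do-pattern $A=\vis(\GpDAG)\setminus\{b\}$, which intervenes on every visible node except $b$ and observes only $b$. Because all visible nodes other than $b$ are set externally and (in the canonical form) latents are exogenous and so cannot receive $a$'s intervened value, the do-conditional $P(X_b\,|\,\text{do}(X_{\vis(\GpDAG)\setminus\{b\}}))$ can depend on the value forced on $X_a$ if and only if $a$ is a direct visible parent of $b$: every indirect path is severed by the interventions on the intermediaries. Hence the datum ``$X_b$ varies with the value forced on $X_a$'' is realizable by $\GpDAG'$ (let $X_b$ copy $X_{a^\sharp}$, which is possible since $a$ is a parent of $b$ in $\can(\GmDAG')$) but not by $\GpDAG$, witnessing failure of all-patterns dominance.

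The face case is where I expect the real work, and it is the main obstacle. Here I would take the joint datum $(\star)$ in which, for \emph{every} do-pattern simultaneously, the observed members of $S$ are perfectly correlated with one fixed nondegenerate marginal $\mu$ that is \emph{independent of all intervention values}; the graph $\GpDAG'$ realizes $(\star)$ by letting every node of $S$ copy the shared latent parent guaranteed by the face $S$. The crux is to show $\GpDAG$ cannot jointly realize $(\star)$ unless $S$ is already a face of $\GmDAG$. The independence-from-interventions clause does the heavy lifting: observing a single node $s\in S$ while intervening on everything else forces $X_s\sim\mu$ regardless of those interventions, so $s$ must draw its randomness from a latent rather than from any visible ancestor or incoming edge, since a visible common parent or a chain $a\to\cdots\to s$ would make $X_s$ track the intervention on that ancestor. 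Applying this to pairs, and then invoking the perfect-correlation criterion of Ref.~\cite{SteudelAy} (extended to 3-pDAGs via Lemma~\ref{lemma_convertItoV_distr}) on the pattern that observes all of $S$ while intervening on the rest, I would conclude that the common cause establishing the perfect correlation of $S$ must be a \emph{single} latent that directly parents every node of $S$, i.e.\ $S\subseteq\ch(\lambda)$ for some latent $\lambda$, which is exactly the statement that $S$ is a face of $\GmDAG$, the desired contradiction. The delicate point to get right is that intervention-independence must be deployed at every level simultaneously (singletons to kill edges and visible parents, pairs to localize the common cause, and the full set to force a \emph{single} latent) so that neither chains nor partial confounders can counterfeit a genuine facet; this is precisely the subtlety that the single-pattern analysis of Ref.~\cite{evans_graphs_2016} missed.
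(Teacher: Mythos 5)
Your ``if'' direction, your reduction to canonical pDAGs, and your edge case are all sound. The edge case is essentially the alternative the paper relegates to a footnote (the observation of Proposition 7.5 of Ref.~\cite{evans_graphs_2016}): under the single pattern that intervenes on everything except $b$, dependence of $X_b$ on the value forced on $X_a$ is realizable precisely when $a\to b$ is an edge of the canonical pDAG. The paper's main text instead compares the empty pattern against the pattern $A=\{a\}$ while conditioning on visible mediaries; both routes work.

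The genuine gap is in the face case, and it sits exactly where you flagged the ``real work'' --- but the two inferences you offer to do that work both fail. First, the singleton step: invariance of the \emph{marginal} of $X_s$ under interventions on everything else does \emph{not} force $s$ to ``draw its randomness from a latent rather than from any visible ancestor or incoming edge.'' Take $X_s = X_a \oplus E_s$ with $E_s$ uniform: the marginal of $X_s$ is uniform no matter what value is forced on $X_a$, yet $s$ depends nontrivially on its intervened visible parent. So singleton patterns do not ``kill edges and visible parents''; only the conjunction with the perfect-correlation constraint can do that, at the level of parameters rather than graph structure. Second, the criterion of Ref.~\cite{SteudelAy}, applied to the SWIG of the pattern that observes all of $S$ and intervenes on the rest, yields only a common \emph{ancestor} of $S$ --- and in that partial SWIG, unlike the full-SWIG used in the proof of Lemma~\ref{lemma_full_SWIGdominance}, the nodes of $S$ retain their edges into one another, so the common cause could be a member of $S$ itself, or a latent acting on a proper subset of $S$ whose influence reaches the rest along chains within $S$ (e.g.\ $\lambda\to s_1\to s_2$ with $\lambda$ not a parent of $s_2$). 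Neither of these is a face of $\GmDAG$, so your conclusion ``$S\subseteq\ch(\lambda)$ for a single latent $\lambda$'' does not follow from what you have established; upgrading ``common ancestor'' to ``common latent parent'' is precisely the hard step, and your pair/singleton steps, being invalid, do not deliver it. The paper closes this gap by a different mechanism: for binary variables, Lauand's reconstruction technique (Appendix~\ref{appendix_theorem1_case2}) shows that all-patterns Observe-or-Do data of the chosen form determines the relevant components of the \emph{full} conditional $P(X_{\vis(\GpDAG)^\flat}|X_{\vis(\GpDAG)^\sharp})$, forcing it to equal the perfect-correlation conditional of Eq.~\eqref{eq_perfect_corrSWIG}, which was already shown to be unrealizable by $\spl(\GpDAG)$ in the proof of Lemma~\ref{lemma_full_SWIGdominance} --- there the full-SWIG structure (visible nodes childless, input nodes parentless) makes the common-cause argument clean. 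To repair your proof you would need either that reconstruction step, or an explicit parameter-level induction showing that perfect correlation under the all-of-$S$ pattern forces downstream members of $S$ to deterministically track upstream ones, whereupon a sub-pattern intervention pins them to a point mass, contradicting nondegeneracy of $\mu$. As written, the face case does not go through.
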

\begin{proof}
	The ``if'' side follows from Theorem~\ref{th_ETT}. This is so because, as we saw with the example of Fig.~\ref{fig_fullSWIG_parameters}, passive observations and do-interventions reveal different shadows of the conditional probability distribution ${P(X_{\vis(\GpDAG)^\flat}|X_{\vis(\GpDAG)^\sharp})}$. If $\GmDAG$ structurally dominates $\GmDAG'$, then Theorem~\ref{th_ETT} says that even with access to the full \emph{distribution} ${P(X_{\vis(\GpDAG)^\flat}|X_{\vis(\GpDAG)^\sharp})}$ it is impossible to find data that is realizable by $\GpDAG'$ but not by $\GpDAG$; therefore, the same is certainly true when one only has access to \emph{shadows} of this distribution.
	
	To prove the ``only if'' side, we will proceed by proving its contrapositive. Assume that $\GmDAG$ \emph{does not} structurally dominate $\GmDAG'$.  The lack of structural dominance implies one of two possibilities:
	\begin{enumerate}
		\item There is at least one directed edge between visible nodes $a$ and $b$,  $a\rightarrow b$, that is present in ${\GmDAG}'$ but not in ${\GmDAG}$, or
		\item  ${\GmDAG}$ has all the directed edges that are present in ${\GmDAG}'$, but there is at least one set $S$ of visible nodes which is a face of ${\GmDAG}'$ but not of ${\GmDAG}$.
	\end{enumerate}
	
	These two conditions for mDAGs respectively imply the following conditions for the corresponding pDAGs:
\begin{enumerate}
	\item There is at least one pair of visible nodes $a$ and $b$ such that ${\GpDAG}'$ presents a chain $a\rightarrow m_1 \rightarrow ... \rightarrow m_n\rightarrow b$ where all of the mediary nodes $m_1,...,m_n$ are latent nodes, and there is \emph{no} such chain in ${\GpDAG}$, or
	\item Item 1 does not hold, but there is at least one set $S$ of visible nodes that have a common latent ancestor $\lambda$ in ${\GpDAG}'$, but no such  latent ancestor exists in ${\GpDAG}$.
\end{enumerate}

	 For case 1, note that all the chains that connect $a$ to $b$ in $\GpDAG$ must have at least one  mediary node $m$ that is \emph{visible}. Let the set of all such visible mediary nodes be called $M$. Now, if $A$ is the set of nodes subject to a do-intervention in a given do-pattern, we will compare the case where $A=\emptyset$ with the case where $A=\{a\}$. That is, we consider two do-patterns: (i) passive observations on all the visible variables; and (ii) do-intervention on $a$ and passive observations on all other visible variables. Suppose that the data obtained respectively from these two do-patterns have the marginals:
	\begin{subequations}
		\begin{align}
			\label{eq_ex_observational_proof1}
			&\text{(i)}	    &&P(X_b|X_M) \\
			&\text{(ii)}	&&P(X_b|X_M,\text{do}(X_a=0))\neq P(X_b|X_M).
			\label{eq_ex_interventional_proof1}
		\end{align}
	\end{subequations}	
	 This set of data is jointly realizable by $\GpDAG'$, since in this pDAG the variable $X_b$ can have a direct functional dependence on $X_a$, in a way that the marginal over $X_b$ changes when $X_a$ is forced to take the value $0$. However, this set of data is \emph{not} jointly realizable by $\GpDAG$: after we condition on all of the variables $X_M$ associated with mediary nodes $M$, all of the directed chains between $a$ and $b$ become blocked paths. Furthermore, none of the nodes in $M$ can be common children of $a$ and $b$. Therefore, all of the unblocked paths between $a$ and $b$ are paths that have incoming arrows to $a$; these causal connections are suspended when we perform do-interventions on $a$. Therefore, $\GpDAG$ \emph{does not} all-patterns O-or-D dominates  $\GpDAG'$\footnote{Alternatively, case 1 can be solved by the following observation, made in Ref.~\cite[Proposition 7.5]{evans_graphs_2016}: do-conditionals where
	 	\begin{gather*}
	 		\exists x_a, x'_a	\text{ such that } \\ P(X_b|\text{do}(X_a=x_a))\neq P(X_b|\text{do}(X_a=x'_a)),
	 	\end{gather*}
	 	that is, where the observed value of $X_b$ depends on the value to which $X_a$ was set, are jointly realizable by $\GpDAG'$ but not by $\GpDAG$.}.
	 
	 For case 2, consider the do-patterns $A=\emptyset$ and, for each element $T\in 2^S$ of the powerset of $S$,  $A=\vis(\GpDAG)\setminus T$. That is, consider (i) passive observations on all the visible variables; and (ii) for each element $T\in 2^S$ of the powerset of $S$, a do-intervention that fixes a variable to $0$ on all of the visible variables except for the ones in $X_T$, and a passive observation of $X_T$. Suppose that the data obtained from these do-patterns is:
	 \begin{subequations}
	 	\begin{align}
	 		\label{eq_ex_observational_proof2}
	 		&\text{(i)}&&	    P(X_S)=p[0,...,0]+(1-p)[1,...,1] \\
	 		&\text{(ii)} &&	\forall T\in 2^S, \nonumber \\ & &&Q(X_T=0|\text{do}(X_{\vis(\GpDAG)\setminus T}= 0_{\vis(\GpDAG)\setminus T}))=p 
	 		\label{eq_ex_interventional_proof2}
	 	\end{align}
	 \end{subequations}
 	Where $0_{\vis(\GpDAG)\setminus T}$ denotes an assignment of value 0 to each variable $X_{a}$ where $a\in \vis(\GpDAG)\setminus T$, and the expression in the right hand side of Eq.~\eqref{eq_ex_observational_proof2} means that with $p$ probability all of the variables of $X_S$ take value $0$, and with $1-p$ probability all of them take the value $1$. That is, all of the variables in $X_S$ are perfectly correlated in the data obtained from passive observations, and the marginal over each subset of the variables of $X_S$ does not change by performing a do-intervention that forces all of the other visible variables to take the value $0$. 
	 
	 As it turns out, the set of data of Eqs.~\eqref{eq_ex_observational_proof2} and \eqref{eq_ex_interventional_proof2} is jointly realizable by $\GpDAG'$, but not by $\GpDAG$. It is easy to see why it is jointly realizable by $\GpDAG'$: there, all of the variables in $X_S$ can copy the latent common cause $X_\lambda$ that they share. The proof that this set of data is \emph{not} jointly realizable by $\GpDAG$ makes use of the fact that $\spl(\GpDAG)$ cannot realize the conditional distribution of Eq.~\eqref{eq_perfect_corrSWIG}, together with a technique due to Pedro Lauand that is described in Appendix~\ref{appendix_theorem1_case2}.
	 
	 Therefore, in both cases we have explicitly presented a set of data obtained from different do-patterns that is jointly realizable by $\GpDAG'$ but not by $\GpDAG$. This shows that, if  $\GmDAG=\mdag(\GpDAG)$ does not structurally dominate $\GmDAG'=\mdag(\GpDAG')$, then $\GpDAG$ does not all-patterns O-or-D dominate $\GpDAG'$.	 
\end{proof}

\subsection{One-Value do-interventions}

In the proof of Theorem~\ref{th_do_and_ob}, Eqs.~\eqref{eq_ex_interventional_proof1} and~\eqref{eq_ex_interventional_proof2} only involve interventions that set a variable to \emph{one} of its possible values, here called $0$. This shows that, to distinguish two pDAGs that correspond to the same mDAG, it is sufficient to perform do-interventions that set the variables to only one value.  We will call those \emph{one-value do-interventions}, and the probing scheme that consists of implementing one-value do-interventions on all possible do-patterns will be called the \emph{all-patterns Observe-or-1Do probing scheme}. To see why the restriction to the all-patterns Observe-or-1Do probing scheme can be of interest, imagine an experiment where we are allowed to force the subjects to quit smoking, but we cannot ethically force them to start smoking. In this case, we can intervene on the experiment to force the variable ``smoker'' to take the value $0$, but not to take the value $1$. 

We define dominance and equivalence relative to this probing scheme, termed {\em all-patterns O-or-1D dominance and equivalence}, in an analogous way to Def.~\ref{def_obs_and_do_equivalence}. That is, we say that $\GpDAG$ \emph{all-patterns O-or-1D dominates} $\GpDAG'$ when, regardless of the assignment of cardinalities of the visible variables, all of the sets of data
that are realizable by $\GpDAG'$ in the all-patterns Observe-or-1Do probing scheme are also  realizable by $\GpDAG$. The observation of the previous paragraph implies that
all-patterns O-or-1D dominance is \emph{also} characterized by the structural dominance of mDAGs:
\begin{theorem}
	Let $\GpDAG$ and $\GpDAG'$ be two pDAGs and let  $\GmDAG$ and  $\GmDAG'$ be the corresponding mDAGs, i.e., $\GmDAG\equiv\mdag(\GpDAG)$ and $\GmDAG'\equiv\mdag(\GpDAG')$. $\GpDAG$  all-patterns O-or-1D dominates  $\GpDAG'$ if and only if 
	$\GmDAG$	structurally dominates $\GmDAG'$.
	\label{th_1do_and_ob}
\end{theorem}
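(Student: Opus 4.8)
The plan is to mirror the proof of Theorem~\ref{th_do_and_ob}, exploiting two facts: that the all-patterns Observe-or-1Do probing scheme reveals strictly less information than the all-patterns Observe-or-Do scheme, and that the distinguishing data constructed in the proof of Theorem~\ref{th_do_and_ob} involves only one-value do-interventions (as flagged in the paragraph preceding the present theorem).

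For the ``if'' side, I would route through Theorem~\ref{th_ETT}. If $\GmDAG$ structurally dominates $\GmDAG'$, then Theorem~\ref{th_ETT} gives that $\GpDAG$ O\&D-dominates $\GpDAG'$, which means that for every choice of parameters $\param'$ of $\GpDAG'$ there is a choice $\param$ of $\GpDAG$ realizing the identical full conditional ${P(X_{\vis(\GpDAG)^\flat}|X_{\vis(\GpDAG)^\sharp})}$. Every one-value do-conditional collected under the all-patterns O-or-1D scheme is a particular shadow (in the sense of Section~\ref{sec_probingschemes}) of this full conditional; since the full conditionals agree, all of these shadows agree \emph{simultaneously} under the single matching parameter choice $\param$. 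Hence every set of data jointly realizable by $\GpDAG'$ under the O-or-1D scheme is jointly realizable by $\GpDAG$, which is precisely all-patterns O-or-1D dominance.

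For the ``only if'' side, I would prove the contrapositive and reuse the constructions from Theorem~\ref{th_do_and_ob} verbatim. Assuming $\GmDAG$ does not structurally dominate $\GmDAG'$, one lands in Case 1 or Case 2 exactly as there. The crucial point is that the distinguishing data exhibited in that proof---Eq.~\eqref{eq_ex_interventional_proof1} in Case 1 and Eq.~\eqref{eq_ex_interventional_proof2} in Case 2---uses only the empty do-pattern (passive observation) together with do-interventions that fix variables to the single value $0$. Thus this data already lies within the scope of the all-patterns O-or-1D probing scheme, and since it is jointly realizable by $\GpDAG'$ but not by $\GpDAG$, we conclude that $\GpDAG$ does not all-patterns O-or-1D dominate $\GpDAG'$.

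The main item to check---more a verification than a genuine obstacle---is that the non-realizability arguments for $\GpDAG$ survive the restriction to one-value interventions. In Case 1 this is immediate, since the body argument only conditions on the mediary variables $X_M$ and uses a single intervention $\text{do}(X_a=0)$ compared against passive observation. In Case 2 the non-realizability relies on the perfect-correlation obstruction of Eq.~\eqref{eq_perfect_corrSWIG} together with the technique of Appendix~\ref{appendix_theorem1_case2}; because that technique is applied to precisely the one-value data of Eq.~\eqref{eq_ex_interventional_proof2}, it carries over unchanged. I would flag that the \emph{alternative} Case-1 argument given in the footnote of Theorem~\ref{th_do_and_ob}, which compares $\text{do}(X_a=x_a)$ with $\text{do}(X_a=x'_a)$ for two distinct values, is \emph{not} available under the O-or-1D restriction, so one must invoke the body argument rather than that footnote.
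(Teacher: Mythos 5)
Your proof is correct and takes essentially the same approach as the paper: the paper handles the ``if'' direction by citing Theorem~\ref{th_do_and_ob} (whose ``if'' side is exactly your shadow argument via Theorem~\ref{th_ETT}), and handles the ``only if'' direction by observing that the proof of Theorem~\ref{th_do_and_ob} only used do-interventions forcing variables to take the value $0$, so it carries over verbatim. Your extra flag that the footnote's alternative Case-1 argument (comparing two distinct intervention values) is unavailable under the O-or-1D restriction is a valid point that the paper leaves implicit.
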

\begin{proof}
	The ``if'' side follows from Theorem~\ref{th_do_and_ob}. For the ``only if'' side, because our proof of Theorem~\ref{th_do_and_ob}
	 only made use of do-interventions that force the variables to take the value $0$, it serves as a proof for this case as well. 
\end{proof}

\section{Structural Partial Order for mDAGs with three or four visible nodes}
\label{sec_examples}

Theorems~\ref{th_ETT}, \ref{th_do_and_ob} and~\ref{th_1do_and_ob} respectively say that the partial order of equivalence classes of causal structures under the relations of O$\&$D dominance, all-patterns O-or-D dominance and all-patterns O-or-1D dominance all correspond exactly to the partial order induced by structural dominance of mDAGs, i.e., the structural partial order. Finding this partial order for mDAGs with any number of nodes is straightforward. It  is nonetheless pedagogically useful to see some concrete examples. The first example was already presented in Fig.~\ref{2node_structural}: the structural partial order for two-node mDAGs. We will here present the structural partial orders for three-node and four-node mDAGs. 

The structural order for the full set of $n$-node mDAGs  (with a fixed temporal ordering of the nodes) can be understood in terms of the structural order holding among two subsets thereof: those that are confounder-free and those that are directed-edge-free.  

First, we note that one can think of the full set as the Cartesian product of these two subsets.  The set of directed-edge-free $n$-node mDAGs describes the set of all possible simplicial complexes that can hold among $n$ nodes, while the set of confounder-free $n$-node mDAGs describes the set of all possible directed-edge structures that can hold for these.  The set of all mDAGs is simply the set of all possible combinations of a simplicial complex with a directed-edge structure. For the simplest possible example, of two-node mDAGs, the structural partial order for the  subset of two-node mDAGs that are confounder-free is presented in Fig.~\ref{fig_2node_confounderfree}(a), and the structural partial order for the subset of two-node mDAGs that are   directed-edge-free is presented in Fig.~\ref{fig_2node_confounderfree}(b). The full structural partial order in this case was presented in Fig.~\ref{2node_structural}. 

Second, we note that a dominance relation that holds between two mDAGs that  are directly connected in the partial order is due either to dropping a face from the simplicial complex or from dropping an edge from the directed-edge structure. 
It is useful to organize the full set of $n$-node mDAGs (with a fixed temporal ordering of the nodes)  into ``islands'' where the  mDAGs within an island have the same simplicial complex and differ only in their directed-edge structure. For the simple case of two-node mDAGs, these islands are given by the large rectangular boxes of Fig.~\ref{fig_2node_compressed}. The partial order holding among the elements of each island is the one induced by dropping edges, which is simply the partial order of the confounder-free mDAGs. Meanwhile, if one takes the collection of mDAGs that have the same directed-edge structure but different simplicial complexes (containing one mDAG from each island), the partial order holding among these is the one induced by dropping faces, which is simply the partial order of the directed-edge-free mDAGs.  These two sets of partial order relations exhaust the set of relations holding among the mDAGs.  

When depicting the full structural partial order, we explicitly depict the partial order relations that hold between the mDAGs {\em within} each island, but we use a shorthand notation for the partial order relations that hold between mDAGs from different islands (those induced by dropping of faces of the simplicial complex).  
Rather than drawing the face-dropping partial order explicitly for each  collection of mDAGs that have the same directed-edge structure, one puts a box around each island and one draws the face-dropping partial order among the islands. This is explicit in Fig.~\ref{fig_2node_compressed}, which is our shorthand depiction of the full structural order of Fig.~\ref{2node_structural}. Within each island of Fig.~\ref{fig_2node_compressed}, the partial order is that of Fig.~\ref{fig_2node_confounderfree}(a).  The partial order between the islands is that of Fig.~\ref{fig_2node_confounderfree}(b), and represents the partial order between the collection of mDAGs that have the same directed-edge structure (one from each island). 

\begin{figure}[h!]
	\centering
	\includegraphics[width=0.45\textwidth]{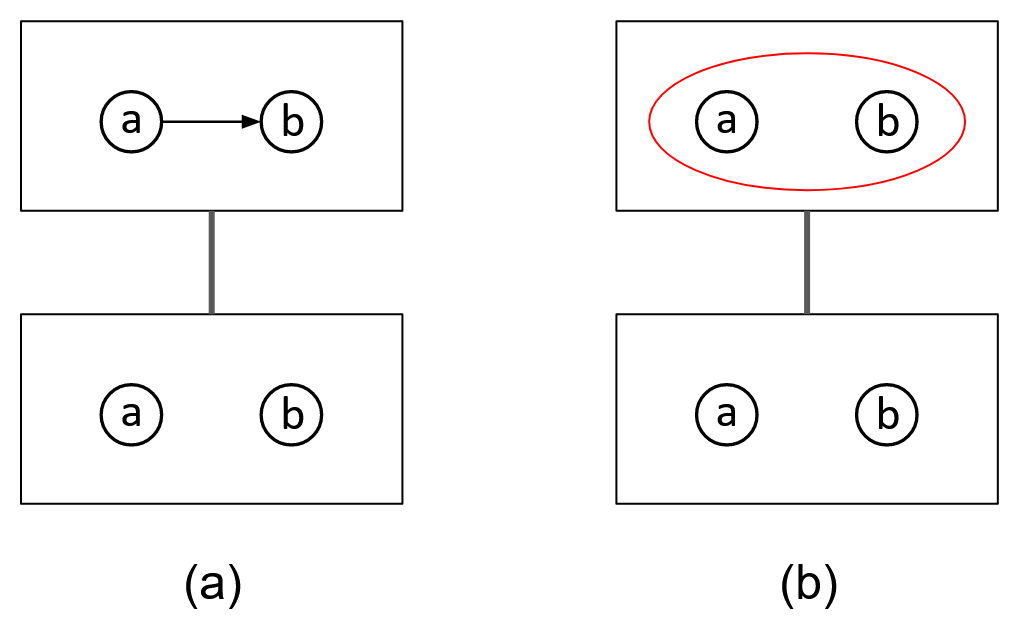}
	\caption{(a) Structural partial order that holds between all of the two-node confounder-free mDAGs that are consistent with the temporal ordering $(a,b)$. (b) Structural partial order that holds between all of the two-node directed-edge-free mDAGs. Note that a directed-edge-free mDAG is always consistent with any temporal ordering of the visible nodes.} 
	\label{fig_2node_confounderfree} 
\end{figure}

\begin{figure}[h!]
	\centering
	\includegraphics[width=0.25\textwidth]{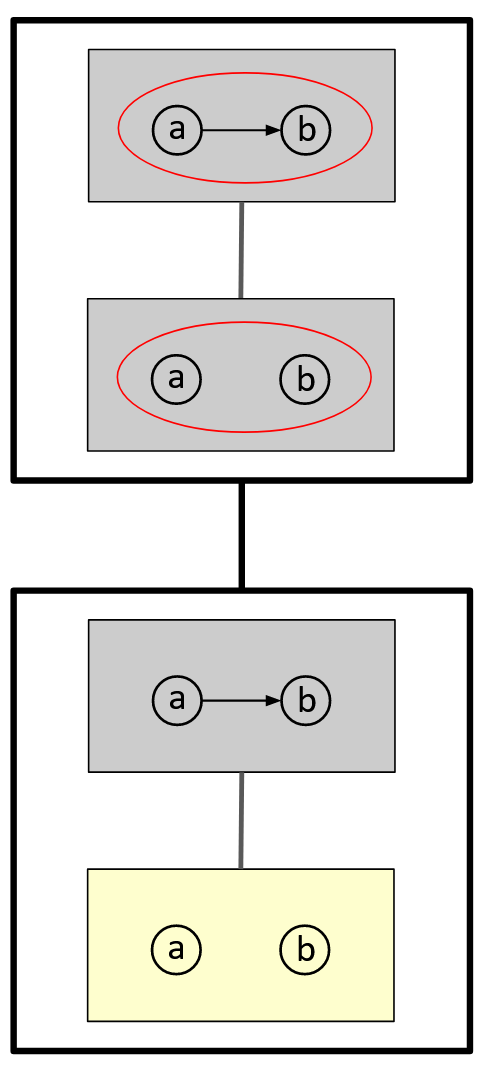}
	\caption{Compressed way of expressing the structural partial order that holds between two-node mDAGs. The decompressed structural partial order was given in Fig.~\ref{2node_structural}.} 
	\label{fig_2node_compressed} 
\end{figure}

To obtain the structural partial order that holds between the three-node mDAGs that are consistent with a choice $(a,b,c)$ of temporal ordering for the visible nodes, we can combine the structural partial order that holds between confounder-free three-node mDAGs consistent with $(a,b,c)$, shown in Fig.~\ref{fig_3node_confounderfree}(a), with the structural partial order that holds between directed-edge-free three-node mDAGs consistent with $(a,b,c)$, shown in Fig.~\ref{fig_3node_confounderfree}(b). The resulting partial order is presented in Fig.~\ref{fig_3node_splitnode}. Note that because the subset of confounder-free mDAGs is of cardinality 8 and the subset of directed-edge-free mDAGs is of cardinality 9, the full set, which can be obtained from their Cartesian product, is of cardinality 72. 
\begin{figure}[h!]
	\centering
	\includegraphics[width=0.45\textwidth]{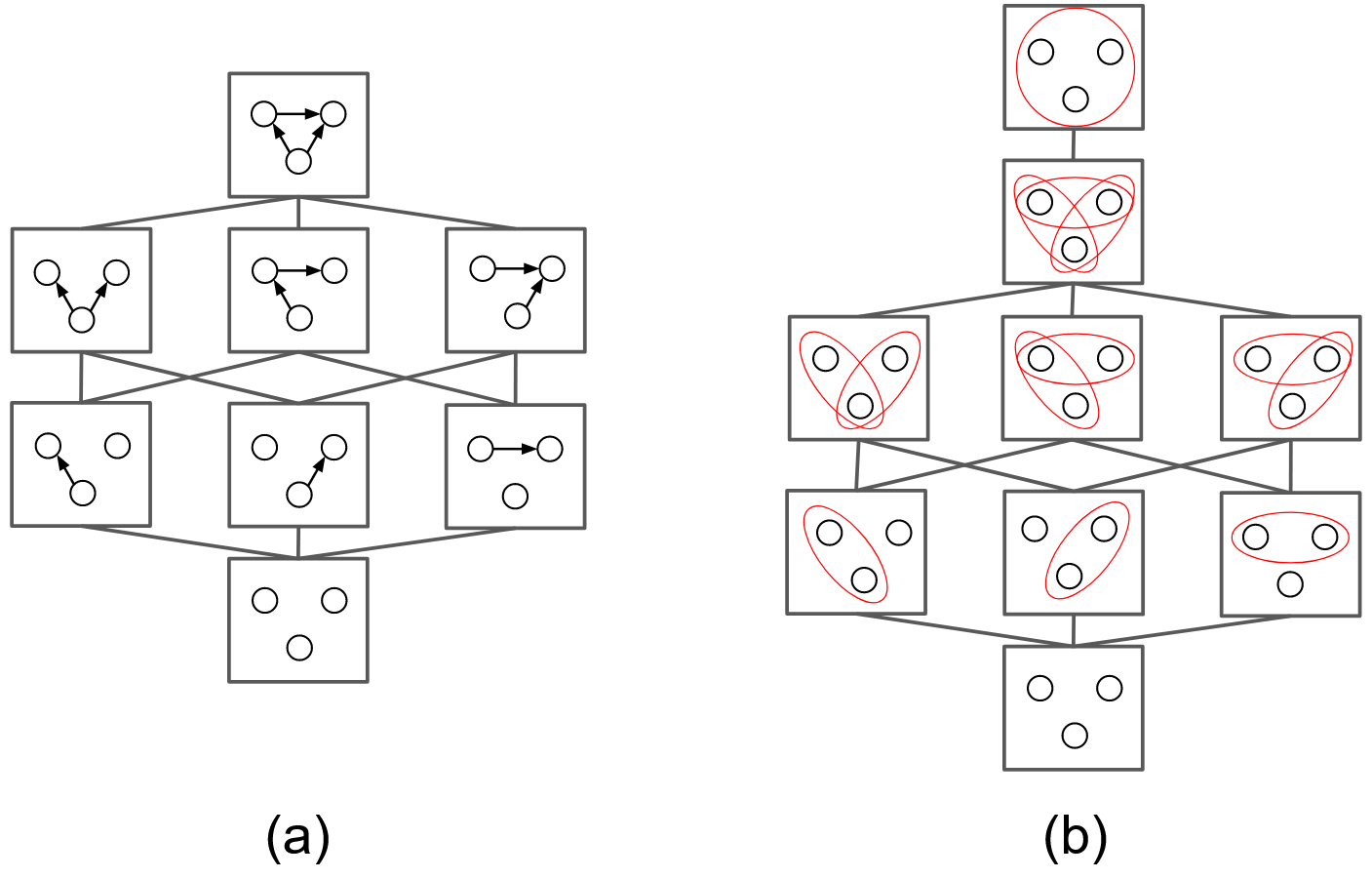}
	\caption{In all of the mDAGs depicted in this picture, let $a$ be the bottom node, $b$ be the top-left node and $c$ be the top-right node. (a) is the structural partial order that holds between all of the three-node confounder-free mDAGs that are consistent with the temporal ordering $(a,b,c)$. (b) is the structural partial order that holds between all of the three-node directed-edge-free mDAGs. Note that a directed-edge-free mDAG is always consistent with any temporal ordering of the visible nodes.} 
	\label{fig_3node_confounderfree} 
\end{figure}

\begin{figure*}[h!]
	\centering
	\includegraphics[width=0.5\textwidth]{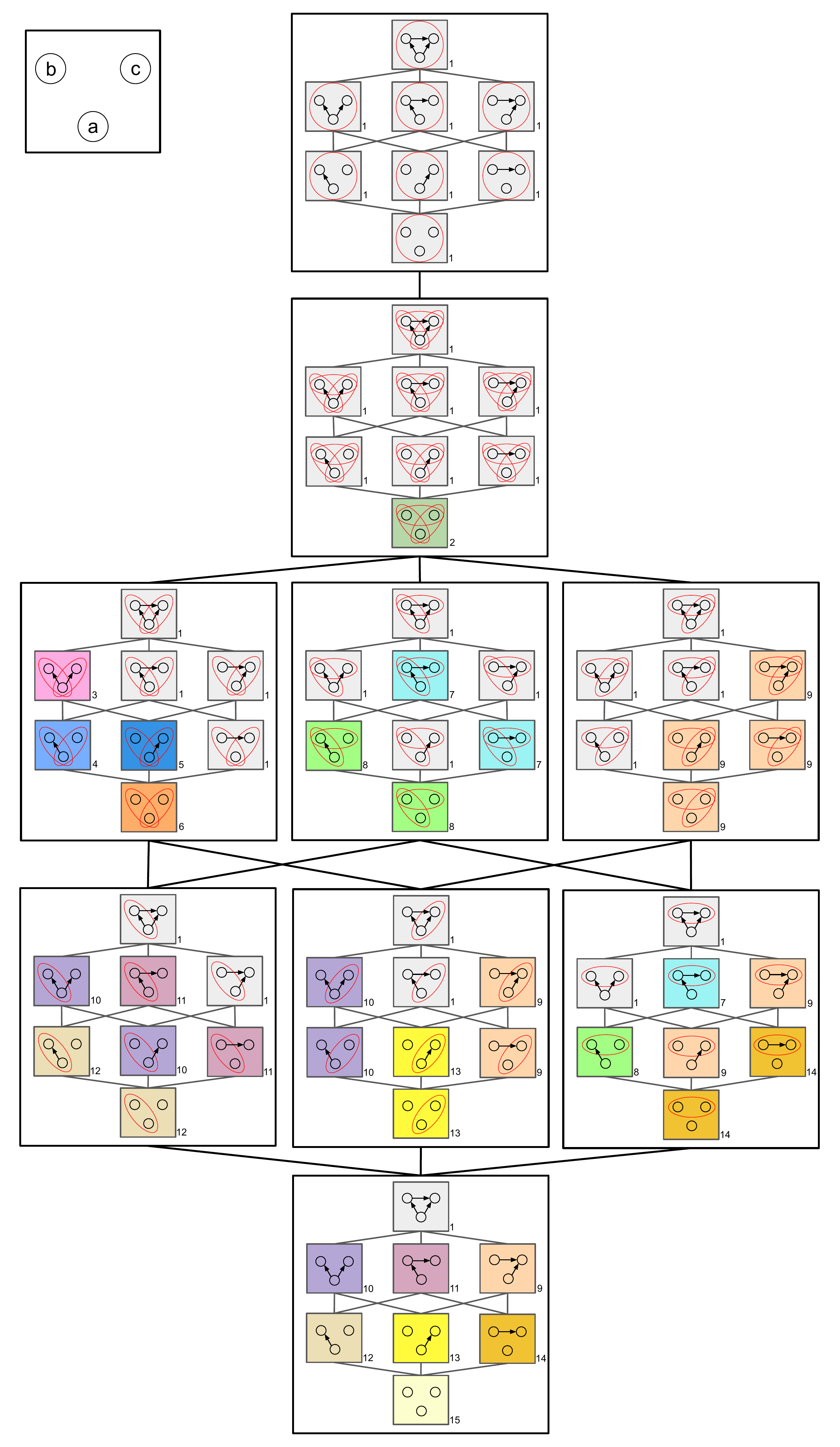}
	\caption{Structural partial order that holds between all three-node mDAGs consistent with the temporal ordering $(a,b,c)$, where the labelling is given in the top-left. The line between a pair of large boxes (islands) is to be interpreted as every mDAG within the higher island dominating the {\em corresponding} mDAG within the lower island, i.e., the mDAG that has the same directed-edge structure. Two mDAGs of this picture have the same background colour and the same number if they are \emph{observationally} equivalent. } 
	\label{fig_3node_splitnode} 
\end{figure*}

Although complicated, Fig.~\ref{fig_3node_splitnode} is useful as it gives us a picture of \emph{all} the structural dominance relations that hold between mDAGs of three nodes that are consistent with the temporal ordering $(a,b,c)$, and thus all of the O$\&$D, all-patterns O-or-D and all-patterns O-or-1D dominance relations as well. 
Sameness of background colour and of number in Fig.~\ref{fig_3node_splitnode} represents membership in the same observational equivalence class~\cite{taxonomy}, i.e.,  two mDAGs have the same background colour and the same number 
if they are observationally equivalent. This is yet another example of the difference between observational equivalence and equivalence when there is access to interventional probing schemes, under the three notions defined here.

We turn now to the case of the four-node mDAGs that are consistent with a single temporal ordering $(a,b,c,d)$ of the visible nodes.  The structural partial order of the subset that is confounder-free is presented in Fig~\ref{fig_confounder_free_splitnode},  and the structural partial order of the subset that is directed-edge-free is presented in Fig.~\ref{fig_directed_edge_free_partial_order4}. The full set of four-node mDAGs can be obtained by taking the Cartesian product of these two subsets.  However, because the confounder-free subset has cardinality 64 and the directed-edge-free subset has cardinality 113, the full set has cardinality 7232 and it is thus too cumbersome to depict it explicitly.  Nonetheless, one can easily infer the shape of the structural partial order of  the full set by imagining combining Figs.~\ref{fig_confounder_free_splitnode} and~\ref{fig_directed_edge_free_partial_order4} with the procedure described above.
\begin{figure*}[h!]
	\centering
	\includegraphics[width=0.9\textwidth]{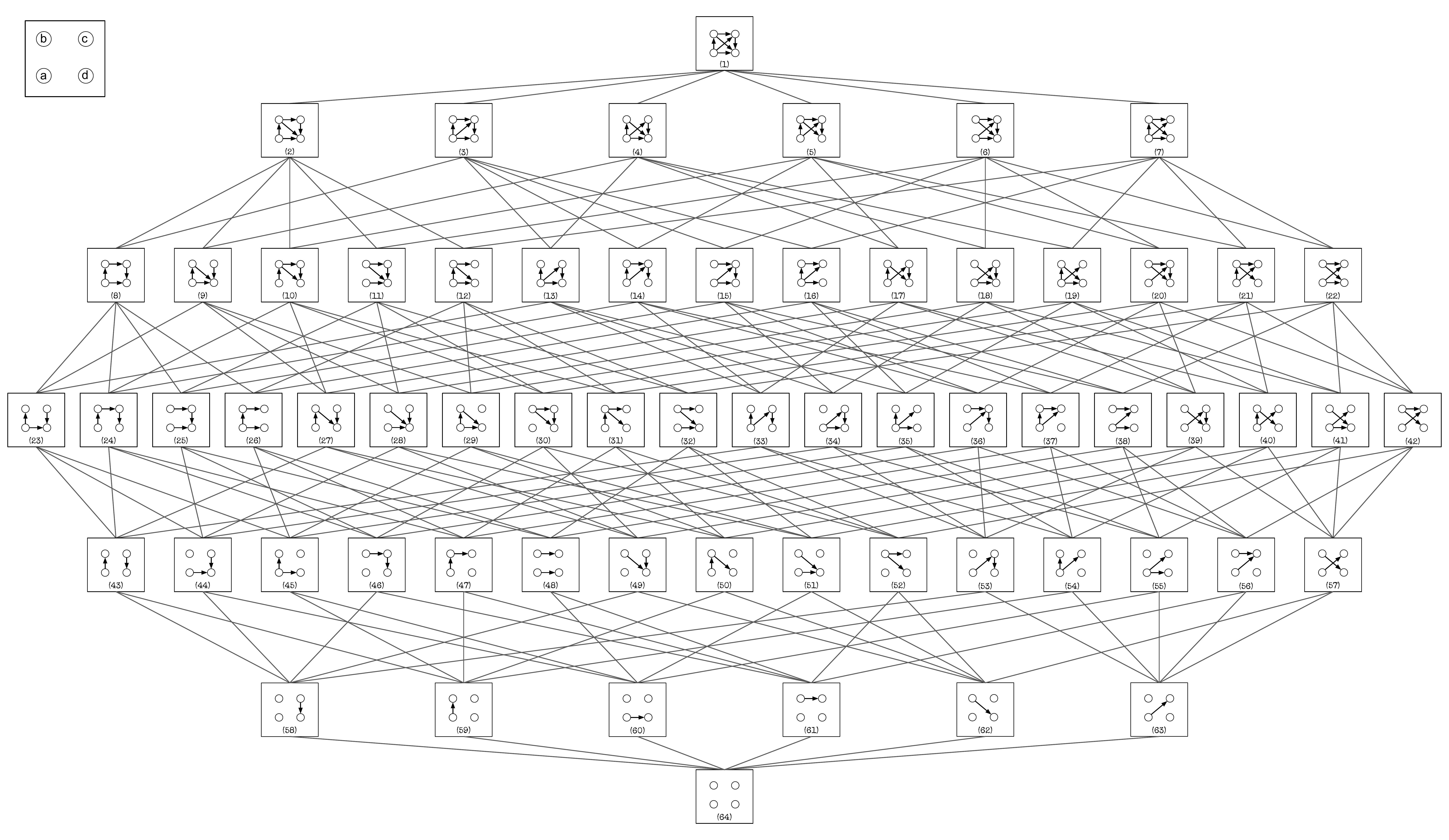}
	\caption{Structural partial order that holds between all four-node confounder-free mDAGs consistent with the temporal ordering $(a,b,c,d)$, where the labelling is given in the top-left.} 
	\label{fig_confounder_free_splitnode}
\end{figure*}

\begin{figure*}[htbp]
	\centering
	\includegraphics[width=0.9\textwidth]{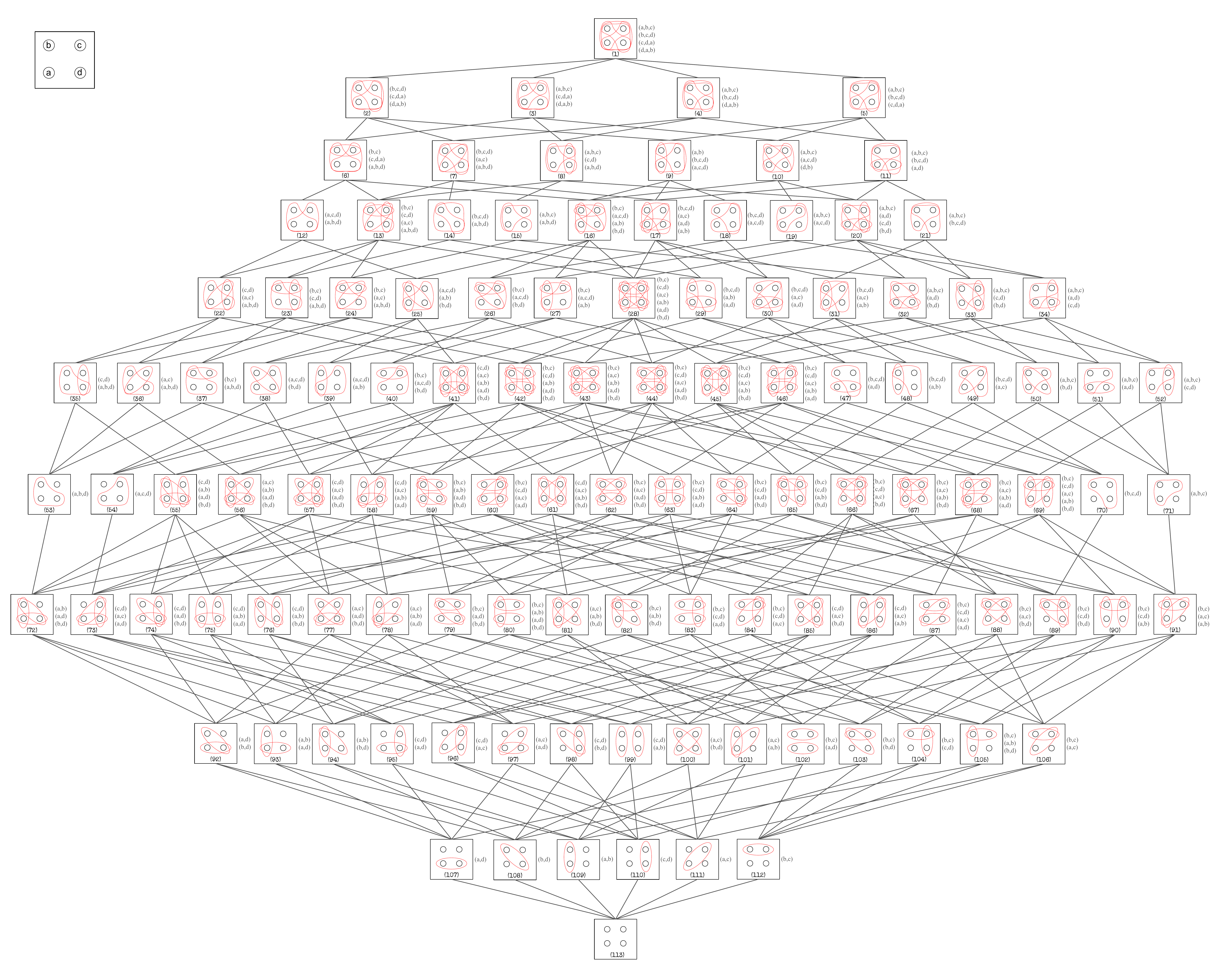}
	\caption{Structural partial order that holds between all four-node directed-edge-free mDAGs. Next to each mDAG, we indicate the set of facets of its simplicial complex.  Note that a directed-edge-free mDAG is always consistent with any temporal ordering of the visible nodes.} 
	\label{fig_directed_edge_free_partial_order4}
\end{figure*}

\section{Confounder-Free and Directed-Edge-Free mDAGs}
\label{sec_confounder_free}

We now consider the special case of confounder-free mDAGs. Those correspond to latent-free pDAGs, that is, basic DAGs. This is the case that has been extensively studied in the previous causal inference literature. The confounder-free mDAGs with 2 nodes are depicted in Fig.~\ref{fig_2node_confounderfree}(a), those with 3 nodes in Fig.~\ref{fig_3node_confounderfree}(a) and those with 4 nodes in Fig.~\ref{fig_confounder_free_splitnode}. 

A paradigm example of the impossibility of discriminating causal structures by passive observation alone is the example of a fork and a chain over three variables.  If one has a fork structure on three nodes and a chain structure on three nodes that {\em are} observationally indistinguishable, then this fork and chain do not admit of any topological ordering in common and so cannot arise as two possible causal structures consistent with single temporal ordering of the variables.  (In other words, the only fork and the only chain that are consistent with a single temporal ordering of the variables are observationally distinguishable, as noted in Fig.~\ref{fig_forkchain}.) 

Of course, much work has been done on the question of observational distinguishability of latent-free pDAGs (i.e., DAGs) where the variables are {\em not} temporally ordered but where one excludes cycles.  Because there are then nontrivial instances of observational {\em in}distinguishability, researchers were led to define the notion of a Markov equivalence class of such causal structures~\cite{Verma_Markov_classes}. It is also well-known that access to interventions can resolve such indistinguishability. Specifically, when one has access to do-interventions on any subset of nodes, i.e., all do-patterns, it is possible to distinguish between any two latent-free pDAGs.\footnote{Furthermore, researchers have considered distinguishability relative to a limited set of do-patterns (called “family of targets” in Ref.~\cite{interventional_Markov}), and defined the notion of {\em interventional Markov equivalence} for each such set~\cite{interventional_Markov}.  Upper bounds have been derived on the minimal cardinality of the set of do-patterns necessary for distinguishing all latent-free pDAGs with a given number of nodes~\cite{Eberhardt_interventions,Eberhardt_almost_optimal}.}

In terms of the notion of Markov equivalence, the point made above regarding the chain and the fork can be generalized as follows: if one is restricted to the set of causal structures that are consistent with a specific temporal ordering of the nodes, then there are no longer any examples of nontrivial Markov equivalence classes. In other words:
\begin{proposition}
	All latent-free pDAGs consistent with a fixed temporal ordering of the nodes are observationally inequivalent. 
	\label{theorem_confounder_free}
\end{proposition}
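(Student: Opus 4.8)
The plan is to prove the contrapositive: if two latent-free pDAGs $\GpDAG$ and $\GpDAG'$ consistent with a common temporal ordering are observationally equivalent, then $\GpDAG=\GpDAG'$. Since neither pDAG has any latent nodes, the marginalization step in the definition of observational realizability is vacuous, so a distribution lies in $\mathcal{M}_{\text{obs}}(\GpDAG,\vec c_{\vis(\GpDAG)})$ precisely when it satisfies the Markov property relative to the DAG $\GpDAG$. Consequently, observational equivalence of latent-free pDAGs coincides with Markov equivalence, and I may import the classical characterization~\cite{Verma_Markov_classes}: Markov-equivalent DAGs necessarily share the same \emph{skeleton}, i.e.\ the same set of adjacent (unoriented) pairs of nodes.

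The key step is then an orientation-forcing argument that exploits the fixed ordering. Suppose $\GpDAG$ and $\GpDAG'$ are observationally equivalent, hence Markov equivalent, hence have a common skeleton. Fix the temporal ordering $(v_1,\dots,v_n)$ with which both are consistent; by the footnote characterization of consistency, every directed edge of each DAG points from an earlier to a later node in this ordering. For any pair $\{v_i,v_j\}$ with $i<j$ that is adjacent in the common skeleton, the corresponding edge must be oriented $v_i\rightarrow v_j$ in $\GpDAG$ (any other orientation would contradict consistency with the ordering), and for the same reason $v_i\rightarrow v_j$ in $\GpDAG'$. As this holds for every adjacent pair, the two DAGs have identical directed edge sets, so $\GpDAG=\GpDAG'$. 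Taking the contrapositive yields the proposition.

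The one point that needs care is the first step — that observational equivalence forces equality of skeletons — and I expect to discharge it by invoking the standard Markov-equivalence result rather than reproving it. For a self-contained alternative, one can instead recover the skeleton directly from $\mathcal{M}_{\text{obs}}$: two nodes are adjacent in a DAG if and only if they cannot be d-separated by any subset of the remaining nodes, so if the skeletons differed, say with $v_i,v_j$ adjacent in $\GpDAG$ but d-separated by some set $S$ in $\GpDAG'$, then every distribution realizable by $\GpDAG'$ would satisfy $X_{v_i}\perp X_{v_j}\mid X_S$, whereas a faithful parameterization of $\GpDAG$ (which exists for discrete variables of sufficient cardinality) violates this independence, exhibiting a distribution in $\mathcal{M}_{\text{obs}}(\GpDAG,\vec c_{\vis(\GpDAG)})\setminus\mathcal{M}_{\text{obs}}(\GpDAG',\vec c_{\vis(\GpDAG')})$ and contradicting equivalence. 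It is worth emphasizing that the v-structure part of the Markov-equivalence characterization is never invoked: once the ordering is fixed, agreement of skeletons already forces agreement of the entire directed structure, which is exactly why the statement holds with no exceptional cases.
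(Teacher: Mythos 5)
Your proposal is correct, but it is organized around a different decomposition than the paper's proof in Appendix~\ref{appendix_proof_theorem_confounder_free}. You argue the contrapositive by first passing to Markov equivalence, importing the skeleton-agreement half of the classical characterization~\cite{Verma_Markov_classes}, and then using an orientation-forcing step: with a fixed temporal ordering, a common skeleton already determines all edge directions, so the two DAGs coincide (and, as you note, the v-structure half of the characterization is never needed). The paper instead works directly: given two distinct DAGs consistent with the ordering, it takes an edge $a^{(i)}\rightarrow a^{(j)}$ present in $\GpDAG$ but absent in $\GpDAG'$ and exhibits the explicit witnessing relation $a^{(i)}\perp a^{(j)}\,|\,a^{(1)},\dots,a^{(i-1)},a^{(i+1)},\dots,a^{(j-1)}$, which holds in $\GpDAG'$ but not in $\GpDAG$; inequivalence then follows from its Lemma~\ref{lemma_dsep_obs}, which needs only the soundness and completeness of d-separation for individual conditional-independence statements~\cite{verma_pearl}. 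The trade-off: your main route is shorter but leans on the skeleton theorem as a black box (and, in your self-contained variant, on the existence of faithful discrete parameterizations, a heavier hammer than the single-CI completeness result the paper invokes); the paper's route is self-contained and constructive, and its conditioning set is exactly the standard d-separating set for a non-adjacent temporally-ordered pair --- so your fallback argument is, in essence, a reconstruction of the paper's proof, while your primary argument is a genuinely different packaging of the same underlying d-separation machinery. One point worth making explicit in your write-up is that observational equivalence in this paper quantifies over all cardinality assignments, so the step ``observational equivalence implies Markov equivalence'' needs the completeness direction (a non-d-separation yields a realizable distribution violating the corresponding independence at some finite cardinality); you gesture at this correctly, but it is where the real content sits.
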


This fact seems to be well-known (see, e.g., below Theorem 1.2.8 in Ref.~\cite{causality_pearl}).  For pedagogical purposes, we provide a proof in Appendix~\ref{appendix_proof_theorem_confounder_free}.

Thus, while one must supplement passive observation with do-interventions to distinguish elements of a Markov equivalence class, in the context of causal structures with a fixed temporal ordering of the variables, passive observation alone is already sufficient to solve the discrimination problem.

It is also worth considering the discrimination problem for the subset of mDAGs that are directed-edge-free.  The directed-edge-free mDAGs with 2 nodes are depicted in Fig.~\ref{fig_2node_confounderfree}(b), those with 3 nodes in Fig.~\ref{fig_3node_confounderfree}(b) and those with 4 nodes in Fig.~\ref{fig_directed_edge_free_partial_order4}.  (The temporal ordering of nodes is not significant in this case as there are no directed edges.) As it turns out,
 passive observation is also sufficient in this case to distinguish between the causal structures.  This was shown in Proposition 6.8 of Ref.~\cite{evans_graphs_2016}.  

Therefore, observational indistinguishability of causal structures on a temporally ordered set of nodes arises entirely from  the presence of both latent common causes and directed edges in the set of such structures. The case of a structure with just two nodes $a$ and $b$ that are temporally ordered is the paradigm example: passive observation cannot distinguish whether $a$ causes $b$ or whether there is a latent common cause acting on both, or a combination of the two (see Fig.~\ref{2node_observational}).

\section{Conclusion}
\label{sec_conclusion}

	The findings of this work are summarized in Fig.~\ref{fig_experimental_hierarchy}. As discussed in Section~\ref{sec_probingschemes}, each probing scheme (without edge interventions) that can be implemented on the visible variables $X_{\vis(\GpDAG)}$ reveals a \emph{shadow} of the full conditional probability distribution $P(X_{\vis(\GpDAG)}^\flat|X_{\vis(\GpDAG)}^\sharp)$.  By applying the shadowing function associated with a probing scheme to the set of all possible conditional probability distributions realizable by a causal structure, we obtain the set of possible shadows realizable by the probing scheme. 
	
We can define an equivalence relation among probing schemes based on whether they reveal the same information about the causal hypothesis.  More precisely, two probing schemes are judged to be in the same equivalence class if they are associated to the same shadowing function.  
	For example, all of the \emph{informationally complete} probing schemes, that is, the ones that allow us to infer the full conditional probability distribution $P(X_{\vis(\GpDAG)^\flat}|X_{\vis(\GpDAG)^\sharp})$ (so that the shadowing function is the identity function), are inside the same equivalence class. 
	
	Moreover, we can define a partial order over the equivalence classes of probing schemes using the ordering relation induced by set inclusion of the images of their shadowing functions. Thus, one probing scheme is above another in the partial order if the image of the first's shadowing function contains all of the information that is included in the image of the second's shadowing function; 
	  in other words, if it reveals strictly more information about the full conditional probability distribution $P(X_{\vis(\GpDAG)^\flat}|X_{\vis(\GpDAG)^\sharp})$.

	Fig.~\ref{fig_experimental_hierarchy} presents the partial order of the equivalence classes of probing schemes that we studied here. 

	 In the box at the top of Fig.~\ref{fig_experimental_hierarchy}, we have probing schemes that are informationally complete. In this case, the problem of equivalence and dominance of pDAGs under the given probing scheme is completely characterized by the mDAG structure, and by the structural dominance of mDAGs respectively.  In particular, we noted that
an example of an informationally complete probing scheme is the Observe$\&$Do probing scheme, wherein, for each visible variable, one observes its natural value and subsequently performs a do-intervention upon it.  Theorem~\ref{th_ETT} demonstrates that dominance relative to such a probing scheme corresponds to structural dominance of the mDAGs. 
	
	Fig.~\ref{fig_experimental_hierarchy} also summarizes what one can infer about the causal structure when one has access to various probing schemes that are \emph{not} informationally complete. The box at the second highest level of the partial order of Fig.~\ref{fig_experimental_hierarchy} corresponds to the all-patterns Observe-or-Do probing scheme, wherein, for each subset of visible nodes (which we here call a \emph{do-pattern}), one implements a do-intervention on the variables in the subset and passive observation on the rest. 
	 In Theorem~\ref{th_do_and_ob} we show that, even though this probing scheme is not informationally complete, it is informative enough to be able to distinguish between different mDAGs. More specifically, the problem of equivalence and dominance of pDAGs relative to the  all-patterns Observe-or-Do probing scheme is also completely characterized by the mDAG structure and by the structural dominance of mDAGs respectively.
	
	Finally, in Theorem~\ref{th_1do_and_ob}, we strengthen this result by showing that it is not necessary to have the ability to perform do-interventions that set the visible variables to each of their possible values; in fact, implementing \emph{one-value} do-interventions (wherein a variable can be set to just one of its possible values) is sufficient to distinguish different mDAGs. We refer to this probing scheme as the {\em all-patterns Observe-or-1-value-Do} probing scheme. The mDAG structure also completely characterizes equivalence and dominance relative to this probing scheme. 
	
The lowest level in the hierarchy of probing schemes presented in Fig.~\ref{fig_experimental_hierarchy} is passive observation of all visible nodes.  Equivalence and dominance of pDAGs relative to passive observations is \emph{not} characterized by structural dominance of mDAGs. That is, one can find pairs of pDAGs that are observationally equivalent in spite of being associated to different mDAGs~\cite{evans_graphs_2016}.

	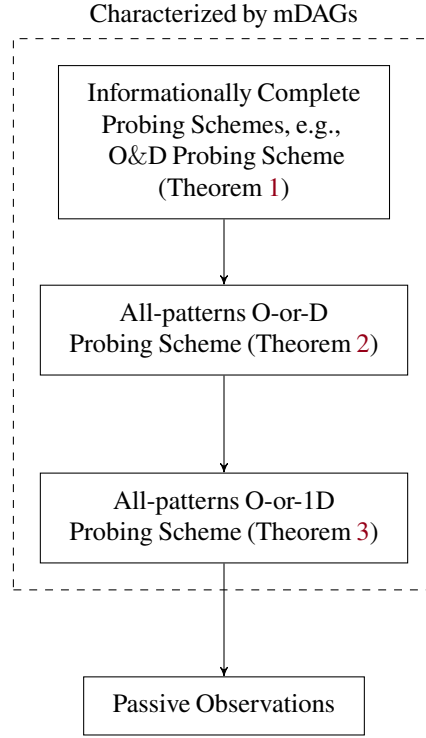
\begin{figure*}[h!]
	\centering
	\begin{tikzpicture}
	\node[c](C) at (2.5,0){\begin{tabular}{c} Informationally Complete \\ {Probing Schemes, e.g., } \\ { O$\&$D Probing Scheme} \\ {(Theorem~\ref{th_ETT})}  \end{tabular} };
	\node[c](B) at (2.5,-2.5){\begin{tabular}{c} All-patterns O-or-D  \\ Probing Scheme (Theorem~\ref{th_do_and_ob}) \end{tabular}};
	\draw[e] (B) to node[left,xshift=-4pt,yshift=0pt]{} (C);
	\node[c](A) at (2.5,-5){\begin{tabular}{c} All-patterns O-or-1D \\ Probing Scheme (Theorem~\ref{th_1do_and_ob}) \end{tabular}};
	\draw[e] (A) to node[left,xshift=-4pt,yshift=0pt]{} (B);
	\node[c](D) at (2.5,-7.5){\begin{tabular}{c} Passive Observations \end{tabular}};
	\draw[e] (D) to node[left,xshift=-4pt,yshift=0pt]{} (A);
	\node[draw,dashed,fit=(C)(B)(A),inner sep=10pt] (rectangle2) {};
	
	\node[above=0.02cm] at (rectangle2.north) {Characterized by mDAGs};
\end{tikzpicture}
	\caption{Partial order of equivalence classes of the different probing schemes studied in this work, where the relation is inclusion of the image of the shadowing function of each probing scheme. The informationally complete probing schemes, that reveal the entirety of this conditional distribution $P(X_{\vis(\GpDAG)}^\flat|X_{\vis(\GpDAG)}^\sharp)$, form an equivalence class. In the O\&D Probing Scheme, for each visible variable one can first observe its natural value and then force it to take any desired value. In the all-patterns O-or-D probing scheme, for each visible variable one can either passively observe it or force it to take any desired value, without previously observing the natural value of the variable. In the all-patterns O-or-1D probing scheme, for each visible variable one can either passively observe it or force it to take \emph{one} fixed value, without previously observing the natural value of the variable.}
	\label{fig_experimental_hierarchy}
\end{figure*}

 We end with some open problems. 

The first open problem is whether pDAGs associated to the same mDAG are still indistinguishable when one has access to edge interventions, that is, interventions that send a different value of the intervened node to each of its children. It is worth noting how one can conceptualize an edge intervention as a probing scheme on a visible variable. It suffices to imagine that a variable is first copied and then the different copies are what causally influence each of the original variable's children. One then intervenes differently on each of these copies. Is there any probing scheme \emph{with} edge interventions that can distinguish between two pDAGs that are associated with the same mDAG? We believe the answer is no.

A second open problem to be explored goes in the other direction: how much can a probing scheme be weakened and \emph{still} induce a dominance order of realizable distributions that is characterized by structural dominance of the associated mDAG structure?  Of the probing schemes we studied, the weakest one that satisfies this is the all-patterns Observe-or-1Do probing scheme. Are there even weaker probing schemes that also satisfy this condition? Also, might the condition be satisfied by other probing schemes that are not informationally complete and strictly incomparable (in the partial order) to the all-patterns Observe-or-Do scheme or the all-patterns Observe-or-1Do scheme?

Finally, all of the results presented in this work pertain to causal structures where all the nodes are associated with classical random variables. When the latent nodes of a pDAG are associated with \emph{quantum systems}, however, it is known that even passive observations of the visible variables can distinguish some pDAGs that are associated to \emph{the same} mDAG. An example is given in Figure 8 of Ref.~\cite{QuantumInflation}. Consequently, an open question of interest to quantum physicists is: what characterizes equivalence and dominance of causal structures  under an informationally complete probing scheme on the visible variables when the latent nodes are quantum?

\section*{Acknowledgements}
We thank Pedro Lauand for sharing with us a technique that was crucial for the proof presented in Appendix~\ref{appendix_theorem1_case2}. Research at Perimeter
Institute is supported in part by the Government of Canada through the Department of
Innovation, Science and Economic Development and by the Province of Ontario through
the Ministry of Colleges and Universities. MMA is supported by the Natural Sciences and Engineering Research Council of Canada (Grant No. RGPIN-2024-04419 ).

\FloatBarrier
\bibliographystyle{unsrt} 
\bibliography{references}

\onecolumn

\begin{appendices}
	\section{Proof of Lemma~\ref{lemma_commutation_maps}}
\label{appendix_proof_Lemmacommutation}

In this appendix we will prove Lemma~\ref{lemma_commutation_maps}, which asserts that the maps $\spl$ and ${\tt RE{-}reduce}$
 commute, as depicted in the commutative diagram of Fig.~\ref{fig_commuting_splitnode}. 
\begin{figure}[h!]
	\centering
	\includegraphics[width=0.5\textwidth]{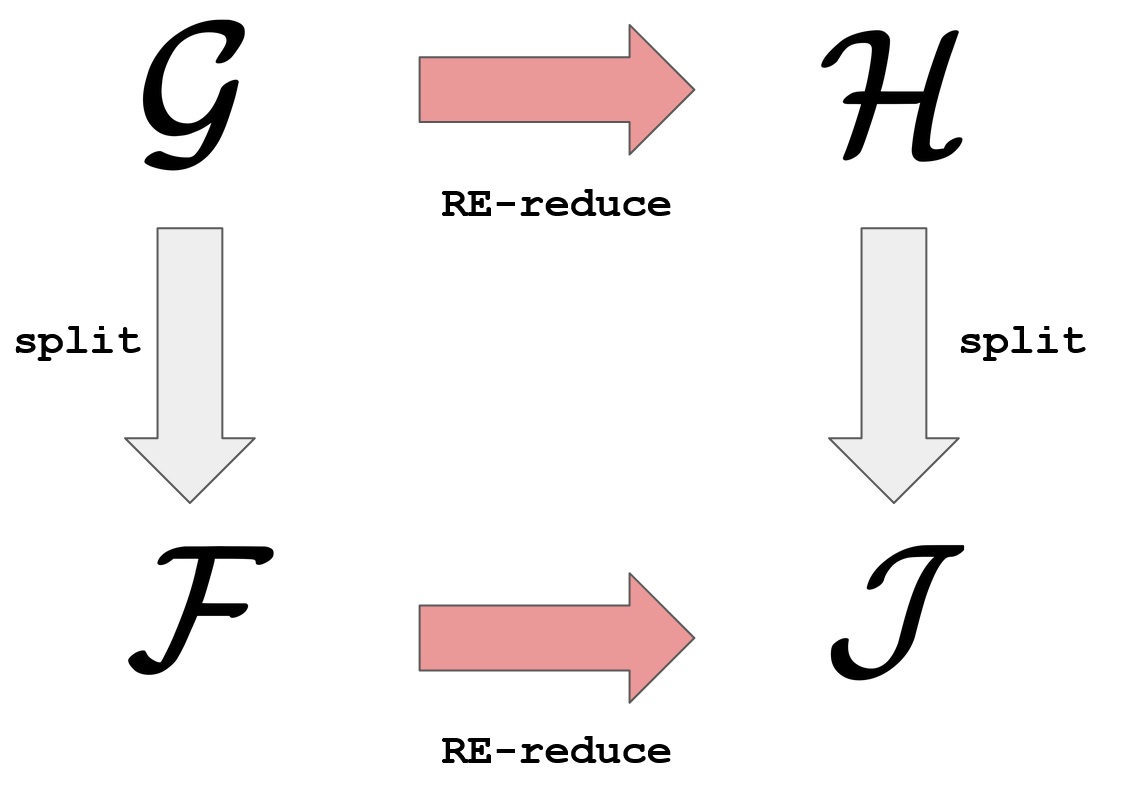}
	\caption{Commutative diagram corresponding to the statement of Lemma~\ref{lemma_commutation_maps}.}
	\label{fig_commuting_splitnode}
\end{figure}

Applying the {\tt RE-reduce} map can be broken into a sequence of steps, where one exogenizes or removes only \emph{one} latent node in each step. Therefore, it is enough to show that exogenizing \emph{one} latent node commutes with the split-node map, as well as removing  \emph{one} redundant latent node commutes with the split-node map. That is, we just need to prove that
\begin{align}
	&\spl(\exog(\mathcal{G},u))= \exog(\spl(\mathcal{G}),u) \text{, and} \label{eq_exog_commutes}
	\\ &\spl(\remove(\mathcal{G},v))= \remove(\spl(\mathcal{G}),v) \label{eq_remove_commutes}
\end{align}
for $u,v \in \lat(\mathcal{G})$ and where $v$ is parentless and redundant, i.e., there is another latent node $w \in \lat(\mathcal{G})$ whose set of children is a superset of the set of children of $v$.

Eqs.~\eqref{eq_exog_commutes} and~\eqref{eq_remove_commutes} are respectively proven by the commuting diagrams presented in Figs.~\ref{fig_commuting_exog_split} and~\ref{fig_commuting_remove_split}. In those diagrams, elongated ellipses represent 
a {\em set} of nodes (either visible or latent), and
rectangles represent a set of input nodes.

\begin{figure*}[h!]
	\centering
	\includegraphics[width=0.8\textwidth]{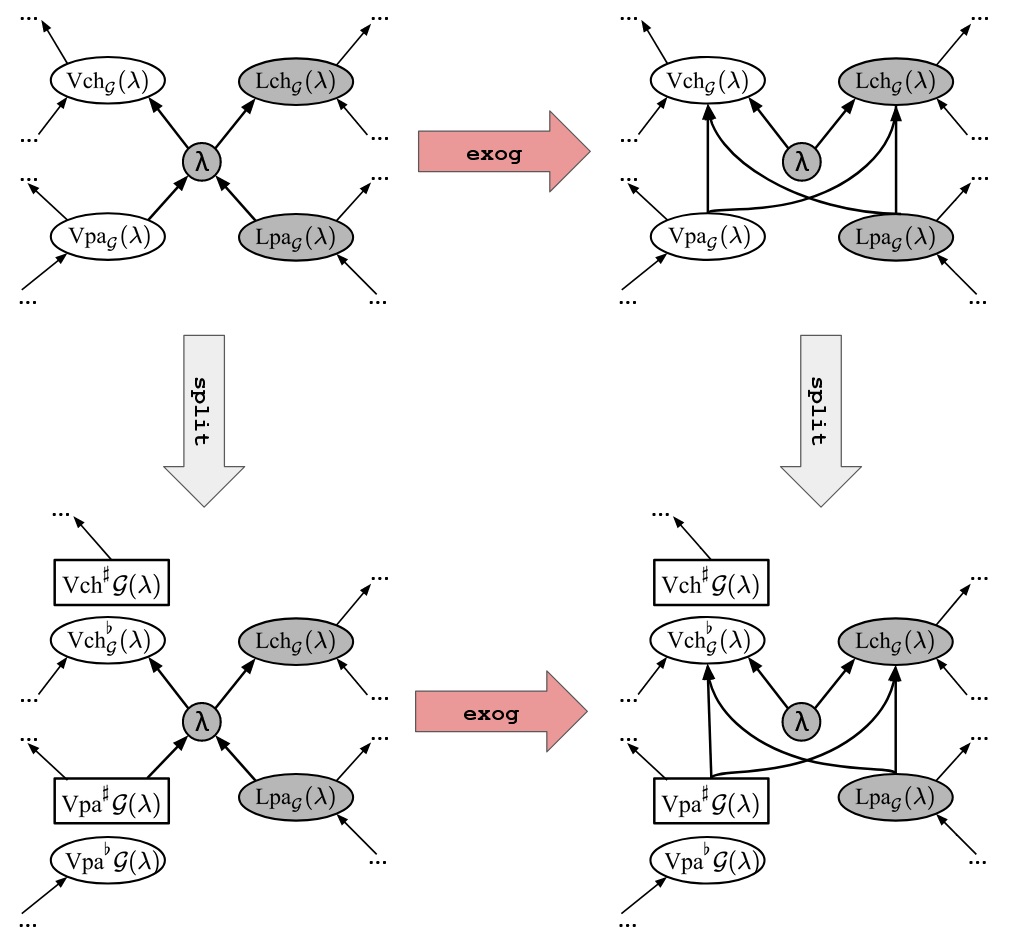}
	\caption{ Exogenizing a latent node commutes with the split-node map. }
	\label{fig_commuting_exog_split}
\end{figure*}

\begin{figure}[h!]
	\centering
	\includegraphics[width=0.8\textwidth]{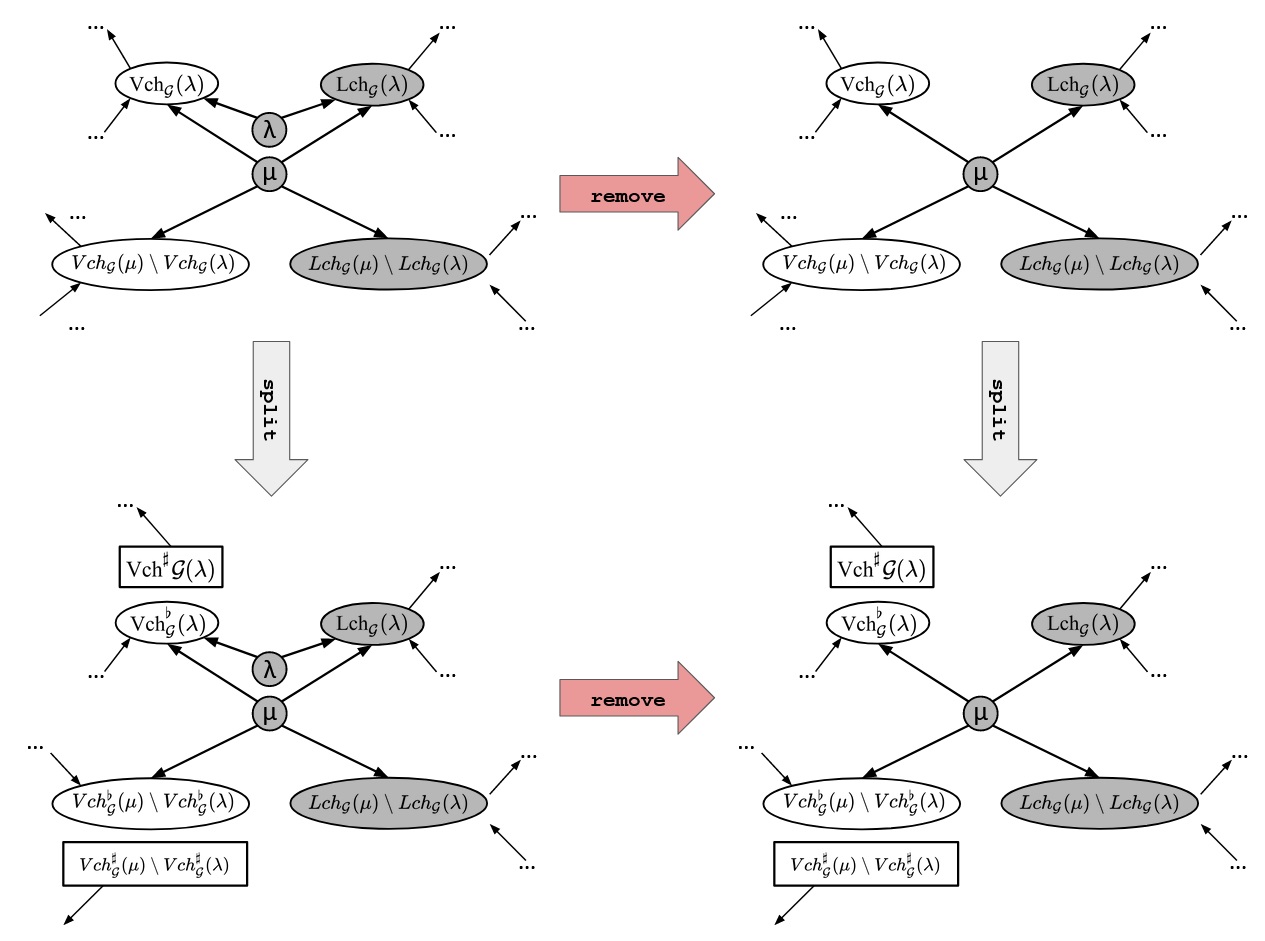}
	\caption{ Removing a redundant latent node commutes with the split-node map.}
	\label{fig_commuting_remove_split}
\end{figure}
	\section{Proof of the Case 2 of Theorem~\ref{th_do_and_ob}}
\label{appendix_theorem1_case2}

In this appendix, we will complete the proof of Theorem~\ref{th_do_and_ob} by showing that in case 2, where there is at least one set $S$ of nodes which is a face of $\GmDAG'=\mdag(\GpDAG')$ but not of $\GmDAG=\mdag(\GpDAG)$, the pDAG $\GpDAG$ does not all-patterns O-or-D dominate the pDAG $\GpDAG'$.

If the set $S$ does not have one common latent ancestor in $\GpDAG$, then the corresponding set $S^\flat$ also does not have one common latent ancestor in the full-SWIG $\GthreepDAG=\spl(\GpDAG)$. Thus, in the proof of Lemma~\ref{lemma_full_SWIGdominance} we have shown that the conditional probability distribution of Eq.~\eqref{eq_perfect_corrSWIG} cannot be realized by the full-SWIG $\GthreepDAG$.
We reproduce this conditional distribution below:
\begin{equation}
	P(X_{S^\flat}|X_{\vis(\GpDAG)^\sharp})=p[0,...,0]_{S^\flat}+(1-p)[1,...,1]_{S^\flat}
	\label{eq_repr}
\end{equation}


In Theorem~\ref{th_do_and_ob}, for each visible variable our experimentalist can only perform a passive observation \emph{or} a do-intervention. However, in the case of \emph{binary} distributions it is possible to employ a technique~\footnote{The technique is due to Pedro Lauand, and was provided to us via private communication.} to find the full conditional distribution $P(X_{\vis(\GpDAG)^\flat}|X_{\vis(\GpDAG)^\sharp})$ from data that can be obtained from the all-patterns Observe-or-Do probing scheme.
  To understand it, imagine a simple case with two variables $X_a$ and $X_b$. The most we can hope to learn from the causal hypothesis is given by the full conditional distribution $P(X_{a^\flat},X_{b^\flat}|X_{a^\sharp},X_{b^\sharp})$, which is given by a set of probabilities
\begin{equation}
	P\left(X_{a^\flat}=x_{a^\flat},X_{b^\flat}=x_{b^\flat}|X_{a^\sharp}=x_{a^\sharp},X_{b^\sharp}=x_{b^\sharp}\right).
	\label{eq_p_proof}
\end{equation}

Since we assume that the distribution is binary, the values $x_{a^\flat}$, $x_{b^\flat}$, $x_{a^\sharp}$ and $x_{b^\sharp}$ are either $0$ or $1$. We break the possibilities of values into four cases, where $\oplus$ indicates addition modulo $2$:

(i) $x_{a^\flat}=x_{a^\sharp}$ and $x_{b^\flat}=x_{b^\sharp}$

In this case, the intervention does not change anything relative to a simple passive observation. Therefore, expression~\eqref{eq_p_proof} reduces to the joint distribution obtained from passive observations on this causal hypothesis:
\begin{equation}
	P\left(X_{a^\flat}=x_{a^\sharp},X_{b^\flat}=x_{b^\sharp}\right).
	\label{eq_Pedro1}
\end{equation}

(ii) $x_{a^\flat}=x_{a^\sharp}$ and $x_{b^\flat}=x_{b^\sharp} \oplus 1$ 

In this case, we use the fact that $X_b$ is binary and the normalization of probabilities to reduce expression~\eqref{eq_p_proof} to:
\begin{equation}
	P(X_{a^\flat}=x_{a^\sharp}|X_{b^\sharp}=x_{b^\sharp})-P\left(X_{a^\flat}=x_{a^\sharp},X_{b^\flat}=x_{b^\sharp} \right)
	\label{eq_Pedro2}
\end{equation}

(iii) $x_{a^\flat}=x_{a^\sharp} \oplus 1$ and $x_{b^\flat}=x_{b^\sharp}$ 

By symmetry to the previous case, in this case expression~\eqref{eq_p_proof} becomes:
\begin{equation}
	P(X_{b^\flat}=x_{b^\sharp}|X_{a^\sharp}=x_{a^\sharp})-P\left(X_{a^\flat}=x_{a^\sharp},X_{b^\flat}=x_{b^\sharp} \right)
	\label{eq_Pedro3}
\end{equation}

(iv) $x_{a^\flat}=x_{a^\sharp}\oplus 1$ and $x_{b^\flat}=x_{b^\sharp} \oplus 1$ 

For this case, we will use the same technique two times. By using that $X_{b^\flat}$ is binary, expression~\eqref{eq_p_proof} becomes:
\begin{equation}
	P(X_{a^\flat}=x_{a^\sharp}\oplus 1|X_{a^\sharp}=x_{a^\sharp} ,X_{b^\sharp}=x_{b^\sharp})-P\left(X_{a^\flat}=x_{a^\sharp}\oplus 1,X_{b^\flat}=x_{b^\sharp} |X_{a^\sharp}=x_{a^\sharp}\right) 
\end{equation}

And by using that $X_{a^\flat}$ is also binary, this then becomes:
\begin{equation}
	 1 - P(X_{a^\flat}=x_{a^\sharp}|X_{b^\sharp}=x_{b^\sharp})- P\left(X_{b^\flat}=x_{b^\sharp}|X_{a^\sharp}=x_{a^\sharp}\right) + P\left(X_{a^\flat}=x_{a^\sharp},X_{b^\flat}=x_{b^\sharp} \right).
	 \label{eq_Pedro4}
\end{equation}

Note that expressions~\eqref{eq_Pedro1}-~\eqref{eq_Pedro4} all correspond to data that can be obtained from the all-patterns Observe-or-Do probing scheme: in those expressions, for each visible variable, \emph{either} its $\flat$ version appears to the left of the conditional \emph{or} its $\sharp$ version appears to the right of the conditional (but not both at the same time). 
Therefore, the conditional distribution of expression~\eqref{eq_p_proof} is in one-to-one correspondence to a set of data obtainable from  the all-patterns Observe-or-Do probing scheme.

Now, suppose that the re-prepared values of $X_a$ and $X_b$ are $0$, and that the conditional distribution of expression~\eqref{eq_p_proof} presents perfect correlation between the natural values of $X_a$ and $X_b$; that is, for some $p\in[0,1]$:
\begin{align}
	&P\left(X_{a^\flat}=0,X_{b^\flat}=0|X_{a^\sharp}=0,X_{b^\sharp}=0 \right) =p \nonumber \\ &P\left(X_{a^\flat}=1,X_{b^\flat}=1|X_{a^\sharp}=0,X_{b^\sharp}=0 \right) = 1-p
	\label{eq_simple_ETT_p}
\end{align}

This is a special case of Eq.~\eqref{eq_repr} when there are only two visible nodes and both of them are inside the set $S$. For this conditional distribution, the reductions~\eqref{eq_Pedro1}, \eqref{eq_Pedro2} and \eqref{eq_Pedro3} of expression~\eqref{eq_p_proof} give:
\begin{align}
	&P(X_{a^\flat}=0,X_{b^\flat}=0)=p  \label{eq_ETT_to_do1} \\
	&P(X_{a^\flat}=0|X_{b^\sharp}=0)=P(X_{a^\flat}=0,X_{b^\flat}=0) \label{eq_ETT_to_do2} \\
	&P(X_{b^\flat}=0|X_{a^\sharp}=0)=P(X_{a^\flat}=0,X_{b^\flat}=0) \label{eq_ETT_to_do3} 
\end{align}

Therefore, if our experimentalist is interested in attesting that the full conditional distribution is the one of Eq.~\eqref{eq_simple_ETT_p}, instead of performing an informationally complete probing scheme they can simply perform the all-patterns Observe-or-Do probing scheme to check that Eqs.~\eqref{eq_ETT_to_do1}, \eqref{eq_ETT_to_do2} and \eqref{eq_ETT_to_do3} are true. We did not write the equation correspondent to Eq.~\eqref{eq_Pedro4} here, because this would be redundant to Eqs.~\eqref{eq_ETT_to_do1}, \eqref{eq_ETT_to_do2} and \eqref{eq_ETT_to_do3} by normalization of probabilities.

The argument we have made before Eq.~\eqref{eq_repr} implies that, in the special case where $S=\vis(\GpDAG)=\{a,b\}$, Eqs.~\eqref{eq_ETT_to_do1}, \eqref{eq_ETT_to_do2} and \eqref{eq_ETT_to_do3} are not jointly realizable by $\GpDAG$. It is easy to see that, in this special case, the set of data described by Eqs.~\eqref{eq_ex_observational_proof2} and \eqref{eq_ex_interventional_proof2} (where passive observations show perfect correlation between the variables $X_a$ and $X_b$) obeys Eqs.~\eqref{eq_ETT_to_do1}, \eqref{eq_ETT_to_do2} and \eqref{eq_ETT_to_do3}. Thus, at least when $S=\vis(\GpDAG)=\{a,b\}$,  Eqs.~\eqref{eq_ex_observational_proof2} and \eqref{eq_ex_interventional_proof2} are \emph{not} jointly realizable by $\GpDAG$.

For the generic case, the proof is very similar. We will suppose that the set of data described by Eqs.~\eqref{eq_ex_observational_proof2} and \eqref{eq_ex_interventional_proof2} is jointly realizable by $\GpDAG$, and we will show that this implies that the full conditional distribution of Eq.~\eqref{eq_repr} must be realizable by $\GpDAG$, which is a contradiction. 

We start by noting that, for the evaluation of the conditional distribution where all of the variables in $X_{S^\flat}$ take value $0$, we have:
\begin{equation}
	P\left(X_{S^\flat}= 0_{S^\flat}\big\rvert X_{\vis(\GpDAG)^\sharp}= 0_{\vis(\GpDAG)^\sharp}\right)=P\left(X_{S^\flat}= 0_{S^\flat}\big\rvert X_{\vis(\GpDAG)^\sharp \setminus S^\sharp}= 0_{\vis(\GpDAG)^\sharp \setminus S^\sharp}\right),
	\label{eq_all_zero}
\end{equation}
which can be obtained by the all-patterns Observe-or-Do probing scheme. The equality above holds due to a logic similar to the one that lead to Eq.~\eqref{eq_Pedro1}. In this case, Eqs.~\eqref{eq_ex_observational_proof2} and \eqref{eq_ex_interventional_proof2} say that this expression takes the value $p$, which is exactly what is given by the corresponding evaluation of the full conditional distribution of  Eq.~\eqref{eq_repr}.

Eq.~\eqref{eq_repr} says that evaluations of the full conditional distribution   where some of the variables in $X_{S^\flat}$ take the value $0$ while others take the value $1$ must be zero. Starting with the assumption that the set of data obtained from the all-patterns Observe-or-Do probing scheme is given by Eqs.~\eqref{eq_ex_observational_proof2} and \eqref{eq_ex_interventional_proof2}, we will now prove that all of these evaluations are indeed zero, again in accordance with  Eq.~\eqref{eq_repr}. As an example, below we show the first of these evaluations, where we enumerate the nodes of the set $S$ as $S=\{s_1,...,s_n\}$:
\begin{align}
	&P\left(X_{s^\flat_1}=1,X_{s^\flat_2}=...=X_{s^\flat_n}=0|X_{\vis(\GpDAG)^\sharp} = 0_{\vis(\GpDAG)^\sharp}\right) \nonumber \\&=P\left(X_{s^\flat_2}=...=X_{s^\flat_n}=0|X_{\vis(\GpDAG)^\sharp}=0_{\vis(\GpDAG)^\sharp}\right)-P\left(X_{S^\flat}=  0_{S^\flat}| X_{\vis(\GpDAG)^\sharp}= 0_{\vis(\GpDAG)^\sharp}\right) 	\label{eq_induction_basecase}  \\ 
	&=P\left(X_{s^\flat_2}=...=X_{s^\flat_n}=0|X_{s^\sharp_1}=0,X_{\vis(\GpDAG)^\sharp\setminus S^\sharp}= 0_{\vis(\GpDAG)^\sharp\setminus S^\sharp}\right)-P\left(X_{S^\flat}= 0_{S^\flat}|X_{\vis(\GpDAG)^\sharp\setminus S^\sharp}= 0_{\vis(\GpDAG)^\sharp\setminus S^\sharp}\right), \nonumber
\end{align}
where we applied the technique described in the simple example above. In the data given by Eqs.~\eqref{eq_ex_observational_proof2} and \eqref{eq_ex_interventional_proof2}, both terms of the last line of Eq.~\eqref{eq_induction_basecase} are equal to $p$, and thus Eq.~\eqref{eq_induction_basecase} is equal to zero.

In the generic case (an evaluation where the first $i$ elements of $X_{S^\flat}$ take the value $1$, while the last $n-i$ take the value $0$), we will recursively apply the same technique. Applying it one time, we obtain:
\begin{align}
	&P\left(X_{s^\flat_1}=...=X_{s^\flat_i}=1,X_{s^\flat_{i+1}}=...=X_{s^\flat_n}=0|X_{\vis(\GpDAG)^\sharp}=0_{\vis(\GpDAG)^\sharp}\right)  \nonumber \\ &= P\left(X_{s^\flat_1}=...=X_{s^\flat_{i-1}}=1,X_{s^\flat_{i+1}}=...=X_{s^\flat_n}=0|X_{\vis(\GpDAG)^\sharp}= 0_{\vis(\GpDAG)^\sharp}\right) \label{term1} \\ & - P\left(X_{s^\flat_1}=...=X_{s^\flat_{i-1}}=1,X_{s^\flat_i}=...=X_{s^\flat_n}=0|X_{\vis(\GpDAG)^\sharp}= 0_{\vis(\GpDAG)^\sharp}\right) \label{term2}
\end{align}

In the first term of the right hand side, given in line~\eqref{term1}, the variable $X_{s^\flat_i}$ does not appear. In the second term, given in line~\eqref{term2}, this variable appears as $X_{s_i^\flat}=0$. Therefore, this first application of the technique got rid of any terms with $X_{s^\flat_i}=1$.  Note that when we recursively apply this technique for all variables, both of the terms \eqref{term1} and \eqref{term2} will be decomposed in many sub-terms that do not have any variable being equal to $1$. However, the sub-terms in which \eqref{term1} will be decomposed are equal to the sub-terms in which \eqref{term2} will be decomposed, except for the fact that in the case of \eqref{term2} there will be $X_{s^\flat_i}=0$ appearing in each of them, while in the case of \eqref{term1} there will not. In other words, a sub-term in which \eqref{term1} will be decomposed is of the form
\begin{equation}
		P\left(X_{T^\flat}=0|X_{\vis(\GpDAG)^\sharp\setminus T^\sharp}= 0_{\vis(\GpDAG)^\sharp\setminus T^\sharp}\right),
		\label{eq_subterm1}
\end{equation}
where $T\subseteq S$ is a subset of $S$ that does not include $s_i$. For each of these, there is a sub-term in which \eqref{term2} will be decomposed that is of the form
\begin{equation}
	P\left(X_{T^\flat}=0, X_{s^\flat_i}=0|X_{\vis(\GpDAG)^\sharp\setminus \{T^\sharp\cup \{s^\sharp_i\}\}}=0_{\vis(\GpDAG)^\sharp\setminus \{T^\sharp\cup \{s^\sharp_i\}\}}\right).
	\label{eq_subterm2}
\end{equation}
In both sub-terms~\eqref{eq_subterm1} and~\eqref{eq_subterm2}, we already used the fact that do-interventions that force the variable to take the value $0$ do not do anything if the variable already naturally takes the value $0$ (the same fact that was used in Eq.~\eqref{eq_all_zero}). 

In both expressions~\eqref{eq_subterm1} and~\eqref{eq_subterm2}, there is no variable whose $\flat$ version appears to the left of the conditional at the same time that the $\sharp$ version appears to the right of the conditional. Therefore, they are both obtainable from the all-patterns Observe-or-Do probing scheme. We can thus look at Eqs.~\eqref{eq_ex_observational_proof2} and \eqref{eq_ex_interventional_proof2} to see that both sub-terms \eqref{eq_subterm1} and \eqref{eq_subterm2} are equal to $p$, which implies that they will cancel each other when the term \eqref{term2} is subtracted from the term \eqref{term1}. This implies that 
\begin{equation}
	P\left(X_{s^\flat_1}=...=X_{s^\flat_i}=1,X_{s^\flat_{i+1}}=...=X_{s^\flat_n}=0|X_{\vis(\GpDAG)^\sharp}= 0_{\vis(\GpDAG)^\sharp}\right)= 0,
	\label{eq_giveszero}
\end{equation}
which is in accordance with Eq.~\eqref{eq_repr}.

Therefore, we proved that when all of the variables in $X_{S^\flat}$ evaluate to $0$ (Eq.~\eqref{eq_all_zero}), Eqs.~\eqref{eq_ex_observational_proof2} and \eqref{eq_ex_interventional_proof2} imply that $P(\vis(\GpDAG)^\flat|\vis(\GpDAG)^\sharp)$ is equal to $p$, and when some of the variables evaluate to $0$ and others to $1$ (Eq.~\eqref{eq_giveszero}), Eqs.~\eqref{eq_ex_observational_proof2} and \eqref{eq_ex_interventional_proof2} imply that $P(\vis(\GpDAG)^\flat|\vis(\GpDAG)^\sharp)$ is equal to $0$. In other words, Eqs.~\eqref{eq_ex_observational_proof2} and \eqref{eq_ex_interventional_proof2} imply in the conditional distribution of Eq.~\eqref{eq_repr}, which we already proved to be non-realizable by $\GthreepDAG=\spl(\GpDAG)$. This contradiction implies that Eqs.~\eqref{eq_ex_observational_proof2} and \eqref{eq_ex_interventional_proof2} are not jointly realizable by $\GpDAG$.
	\section{Proof of Proposition~\ref{theorem_confounder_free}}
\label{appendix_proof_theorem_confounder_free}

In this proof of observational inequivalence, we will explicitly show a d-separation relation that is presented by one of the pDAGs in question but not by the other. Given a pDAG $\GpDAG$ and three disjoint sets of visible nodes $A,B,C\in \vis(\GpDAG)$, the \emph{d-separation criterion}~\cite{verma_pearl,Geiger1988} returns whether $A$ and $B$ are ``d-separated'' by $C$ or not. The affirmative case is denoted by $A\perp B|C$. As shown in Ref.~\cite{verma_pearl}, a d-separation relation of a DAG implies that the realizable distributions have to satisfy an associated \emph{conditonal independence constraint}. In a certain distribution $P$, the variable $X_A$ is said to be independent of the variable $X_B$ after conditioning on the variable $X_C$ if the following is valid: 
\begin{equation}
	P(X_A X_B|X_C)=P(X_A|X_C)P(X_B|X_C).
	\label{eq_CI}
\end{equation}
Importantly, Ref.~\cite{verma_pearl} also showed that, if a pDAG $\GpDAG$ \emph{does not} present such d-separation relation, i.e. if $A\not\perp B|C$ in  $\GpDAG$, then there \emph{is} some distribution that is realizable by $\GpDAG$ and does not obey the constraint of Eq.~\eqref{eq_CI}. That is, the d-separation criterion is necessary and sufficient to give us the set of conditional independence constraints imposed by a pDAG on its realizable distributions.

The following Lemma is a consequence of the observations above:
\begin{lemma}
	Let $\GpDAG$ and $\GpDAG'$ be two pDAGs that have the same set of visible nodes, $\vis(\GpDAG)=\vis(\GpDAG')$. If they present different sets of d-separation relations, then they are observationally inequivalent.
	\label{lemma_dsep_obs}
\end{lemma}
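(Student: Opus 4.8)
The plan is to exploit the soundness-and-completeness characterization of conditional independence constraints by d-separation that was just recalled from Ref.~\cite{verma_pearl}. The two ingredients are: \emph{soundness}, namely that if $A\perp B|C$ holds in a pDAG then \emph{every} distribution realizable by that pDAG obeys the conditional independence constraint of Eq.~\eqref{eq_CI}; and \emph{completeness}, namely that if $A\perp B|C$ \emph{fails} to hold, then there \emph{exists} a realizable distribution that \emph{violates} Eq.~\eqref{eq_CI}. The whole proof is a matter of combining these two directions, applied one to each pDAG.

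First I would assume that $\GpDAG$ and $\GpDAG'$ present different sets of d-separation relations. By definition this means there is some triple of pairwise disjoint sets $A,B,C\subseteq\vis(\GpDAG)$ such that the relation $A\perp B|C$ holds in exactly one of the two pDAGs. Without loss of generality, suppose $A\perp B|C$ holds in $\GpDAG$ but $A\not\perp B|C$ in $\GpDAG'$ (otherwise one simply swaps the roles of $\GpDAG$ and $\GpDAG'$, which is legitimate since the conclusion of observational inequivalence is symmetric).

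Next I would invoke completeness for $\GpDAG'$: since $A\not\perp B|C$ in $\GpDAG'$, there is a choice of cardinalities $\vec c_{\vis(\GpDAG')}$ and a distribution $P^\ast\in\mathcal{M}_{\text{obs}}(\GpDAG',\vec c_{\vis(\GpDAG')})$ that does not satisfy Eq.~\eqref{eq_CI} for this triple. On the other hand, soundness for $\GpDAG$ guarantees that \emph{every} distribution in $\mathcal{M}_{\text{obs}}(\GpDAG,\vec c_{\vis(\GpDAG)})$ \emph{does} satisfy Eq.~\eqref{eq_CI}, and this holds for every assignment of cardinalities. Because $\vis(\GpDAG)=\vis(\GpDAG')$, I can evaluate both marginal models at the common cardinality $\vec c_{\vis(\GpDAG)}=\vec c_{\vis(\GpDAG')}$: at this cardinality $P^\ast$ lies in the marginal model of $\GpDAG'$ but not in that of $\GpDAG$, so the two marginal models differ. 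By Definition~\ref{def_obs_equivalence}, this is precisely the statement that $\GpDAG\not\cong\GpDAG'$, which completes the argument.

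I expect no serious obstacle here, since the heavy lifting is done entirely by the cited necessary-and-sufficient characterization of the conditional independence constraints imposed by a DAG. The only points requiring care are bookkeeping ones: reducing to the single offending triple via the ``without loss of generality'' step, and making sure the cardinality at which $P^\ast$ is realized is matched on both sides — the latter being automatic because the soundness direction is quantified universally over cardinalities while the completeness direction merely needs to exhibit \emph{one} cardinality at which the separating distribution lives.
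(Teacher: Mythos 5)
Your proposal is correct and follows essentially the same route as the paper's own proof: apply the soundness direction of the d-separation characterization to the pDAG possessing the relation and the completeness direction to the one lacking it, thereby exhibiting a distribution realizable by one but not the other. The only differences are cosmetic (which pDAG is assigned which role in the "without loss of generality" step, and your extra bookkeeping about matching cardinalities, which the paper leaves implicit).
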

\begin{proof}
	Without loss of generality, let $A\perp B|C$ be a d-separation relation that is presented by $\GpDAG'$ but not in $\GpDAG$. Then, by the first result of Ref.~\cite{verma_pearl} mentioned above, all of the distributions realizable by $\GpDAG'$ have to satisfy the constraint of Eq.~\eqref{eq_CI}. By the second result of Ref.~\cite{verma_pearl} mentioned above, there \emph{exists} some distribution that is realizable by  $\GpDAG$ and does not satisfy this constraint. Therefore, there exists at least one probability distribution obtained from passive observations that is realizable by  $\GpDAG$ but not by  $\GpDAG'$, implying that they are observationally inequivalent.
\end{proof}

Now, we can proceed to the proof of Proposition~\ref{theorem_confounder_free}. Let $\GpDAG$ and $\GpDAG'$ be two latent-free pDAGs (i.e., two DAGs) with $n$ nodes, and let $\left\{a^{(i)}\right\}_{i=1,...,n}$ be the set of their nodes such that $i<j$ if and only if $a^{(i)}$ comes before $a^{(j)}$ in the temporal ordering. That is, the temporal ordering of nodes is $\left (a^{(1)}, a^{(2)},...,a^{(n)}\right )$. Assume that both $\GpDAG$ and $\GpDAG'$ are consistent with this temporal ordering.

Since $\GpDAG$ and $\GpDAG'$ are not equal, there must be at least one arrow that is present in one of them but not the other. Without loss of generality, assume that $\GpDAG$ possesses the arrow  $a^{(i)}\rightarrow a^{(j)}$, while $\GpDAG'$ does not. Note that this necessarily implies that $i<j$, since we assumed that $\GpDAG$ and $\GpDAG'$ are consistent with the temporal ordering  $\left (a^{(1)}, a^{(2)},..., a^{(n)}\right )$. Now, we show that the following d-separation relation is presented by $\GpDAG'$ but not $\GpDAG$:
\begin{equation}
	a^{(i)}\perp a^{(j)}| a^{(1)},...,a^{(i-1)},a^{(i+1)},...,a^{(j-1)}
	\label{eq_dsep}
\end{equation}

It is clear that $\GpDAG$ \emph{does not} present this d-separation relation, since it presents a direct arrow $a^{(i)}\rightarrow a^{(j)}$. To see that $\GpDAG'$ presents this d-separation relation note that, since $\GpDAG'$ is latent-free and consistent with the temporal ordering $\left (a^{(1)}, a^{(2)},...,a^{(n)}\right )$, any node that is a common cause between $a^{(i)}$ and $a^{(j)}$ \emph{or} a mediary in a chain between $a^{(i)}$ and $a^{(j)}$ is necessarily in the set $\left\{a^{(1)},...,a^{(i-1)},a^{(i+1)},...,a^{(j-1)}\right\}$. Therefore, all such paths between  $a^{(i)}$ and $a^{(j)}$ are blocked by conditioning on this set. Furthermore, conditioning on this set does not open any new paths via colliders, because the temporal order says that the set  $\left\{a^{(1)},...,a^{(i-1)},a^{(i+1)},...,a^{(j-1)}\right\}$ does not include any children of $a^{(j)}$. 

Therefore, the d-separation relation of expression~\eqref{eq_dsep} is presented by $\GpDAG'$ but not by $\GpDAG$. Together with Lemma~\ref{lemma_dsep_obs}, this proves  Theorem~\ref{theorem_confounder_free}.
\end{appendices}

\end{document}